\newcommand\pscal[2]{\left\langle#1,#2\right\rangle}
\def\3PS{{\tt 3P-SPIDER}}
\def\nset{{\mathbb{N}}}
\def\rset{\mathbb R}
\def\Zset{\mathsf{Z}}
\def\Zsigma{\mathcal{Z}}
\def\rmd{\mathrm{d}}
\def\argmin{\operatorname{argmin}}
\def\max{\mathrm{max}}
\def\PP{\mathbb{P}} 
\def\PE{\mathbb{E}} 
\newcommand{\F}{\mathcal{F}} 
\def\kernel{P} 
\def\linit{\lambda} 
\def\Id{\mathrm{I}}
\newcommand{\eqdef}{\ensuremath{\stackrel{\mathrm{def}}{=}}}
\newcommand{\kout}{k^\mathrm{out}}
\newcommand{\kin}[1]{k^{\mathrm{in}}_{{#1}}}
\newcommand\init{\mathrm{init}}
\newcommand{\precond}{\mathsf{h}}
\newcommand{\barprecond}{\bar{\mathsf{h}}}
\newcommand{\hatS}{\widehat{S}}
\newcommand{\Smem}{\mathsf{S}}
\newcommand{\Pmatrix}{\mathcal{P}}
\def\B{\mathsf{B}}
\newcommand{\Sset}{\mathcal{S}}
\newcommand{\error}{\eta}
\newcommand{\nbrmc}{m}
\newcommand{\nbrmcv}{M}
\newcommand{\prox}{\mathrm{prox}}
\newcommand{\param}{\theta}
\newcommand{\Param}{\Theta}
\newcommand{\map}{\mathsf{T}}
\newcommand{\lyap}{\operatorname{W}}
\newcommand{\batch}{\mathcal{B}}
\newcommand{\lbatch}{\mathsf{b}}
\newcommand{\pas}{\gamma}
\def\pa{\mathsf{a}}
\def\indic{\chi}
\def\MC{\mathsf{MC}}
\newcommand\sequence[3] {\ifthenelse{\equal{#3}{}}{\ensuremath{\{
#1_{#2}\}}}{\ensuremath{\{ #1^{#2}, \eqsp #2 \in #3 \}}}}
\newcommand\sequencedown[3] {\ifthenelse{\equal{#3}{}}{\ensuremath{\{
#1_{#2}\}}}{\ensuremath{\{ #1_{#2}, \eqsp #2 \in #3 \}}}}
\def\eqsp{\;}
\newcommand{\ie}{i.e.}
\newcommand{\coint}[1]{\left[#1\right)}
\newcommand{\ocint}[1]{\left(#1\right]}
\newcommand{\ooint}[1]{\left(#1\right)}
\newcommand{\ccint}[1]{\left[#1\right]}
\newtheorem{assumption}{{\sf A}\hspace{-3pt}}
\theoremstyle{thmstyleone}%
\newtheorem{theorem}{Theorem}[section]
\newtheorem{proposition}[theorem]{Proposition}
\newtheorem{lemma}[theorem]{Lemma}
\newtheorem{corollary}[theorem]{Corollary}
\begin{document}

\title[3P-SPIDER]{Stochastic Variable Metric
  Proximal Gradient with variance reduction for non-convex composite optimization }

\author*[1]{\fnm{Gersende} \sur{Fort}}\email{gersende.fort@math.univ-toulouse.fr}
\author[2]{\fnm{Eric} \sur{Moulines}}\email{eric.moulines@polytechnique.edu}

\affil*[1]{\orgdiv{Institut de Math\'ematiques de Toulouse}, \orgname{CNRS \& Universit\'e de Toulouse}, \orgaddress{\street{118 route de Narbonne}, \city{Toulouse}, \postcode{31400},  \country{France}}}

\affil[2]{\orgdiv{CMAP}, \orgname{Ecole Polytechnique}, \orgaddress{\street{Route de Saclay}, \city{Palaiseau}, \postcode{91128 Cedex}, \country{France}}}

\abstract{ This paper introduces a novel algorithm, the Perturbed
  Proximal Preconditioned SPIDER algorithm (\3PS), designed to solve
  finite sum non-convex composite optimization. It is a stochastic
  Variable Metric Forward-Backward algorithm, which allows approximate
  preconditioned forward operator and uses a variable metric proximity
  operator as the backward operator; it also proposes a mini-batch
  strategy with variance reduction to address the finite sum setting.
  We show that \3PS extends some Stochastic preconditioned Gradient
  Descent-based algorithms and some Incremental Expectation
  Maximization algorithms to composite optimization and to the case
  the forward operator can not be computed in closed form. We also
  provide an explicit control of convergence in expectation of \3PS,
  and study its complexity in order to satisfy the approximate
  epsilon-stationary condition. Our results are the first to combine
  the non-convex composite optimization setting, a variance
  reduction technique to tackle the finite sum setting by using a
  minibatch strategy and, to allow deterministic or random
  approximations of the preconditioned forward operator. Finally,
  through an application to inference in a logistic regression model
  with random effects, we numerically compare \3PS to other stochastic
  forward-backward algorithms and discuss the role of some design
  parameters of \3PS.}


\keywords{Stochastic optimization, Variable Metric Forward-Backward splitting, Preconditioned Stochastic Gradient Descent, Incremental Expectation Maximization, Proximal methods, Variance reduction,  Non-asymptotic convergence bounds.}



\maketitle

\section{Introduction}\label{sec:introduction}
Efficient learning from large data sets require new optimization
algorithms designed to be robust to big data and complex models
era. In Statistics and Machine Learning, we are often faced with solving problems of the form
\[
\mathrm{\argmin}_{s\in \rset^q}  \left( \frac{1}{n} \sum_{i=1}^n W_i(s) + g(s) \right) \eqsp,
\]
where $n$ is the number of examples in the training data set, $s$ is
an unknown quantity to be learnt from the examples, $W_i$ is a loss
function associated to the example $\# i$ and $g$ is a regularization
term encoding a priori knowledge and constraints on $s$; $g$ may also
prevent from overfitting.  Quite often, the regularization term $g:
\rset^q \to \ocint{0, + \infty}$ is not differentiable, and the data fidelity
term $n^{-1} \sum_{i=1}^n W_i$ is smooth on the domain of $g$.  

This paper is concerned with stochastic
optimization of a non-convex finite sum composite function; more
precisely, it addresses the differential inclusion problem
\begin{equation}\label{eq:problem-1}
0 \in  \frac{1}{n} \sum_{i=1}^n G_i(s)+   \partial g(s), \qquad s \in \rset^q,
\end{equation}
where $g: \rset^q \to \ocint{-\infty,+\infty}$ is  lower
semi-continuous convex with non-empty domain $\Sset$ and for all $i$,
$G_i: \Sset \to \rset^q$ is globally Lipschitz on $\Sset$.

The first goal of this paper is to provide a novel
algorithm. Motivated by applications in Statistics and Machine
learning, we require this algorithm to satisfy the following three
constraints.  {\sf (c1)} The algorithm uses possibly preconditioned
operators $\precond_i$ instead of the forward operator  $G_i$:
\begin{equation}\label{eq:problem-2}
\forall s \in \Sset, \qquad  \precond_i(s,B) \eqdef - B^{-1} \, G_i(s),
\end{equation}
where $B$ is a $q \times q$ positive definite matrix. Such a condition
encompasses preconditioned gradient methods for example, which also
includes gradient methods with adaptive step sizes. It also
encompasses Expectation Maximization (EM)
algorithms~(\cite{dempster:1977}) designed for large scale
learning. {\sf (c2)} The algorithm may only have access to
approximations of $\precond_i(s,B)$. Such a condition addresses the
situations when $\precond_i(s,B)$ is not explicit, for example when it
is defined by an intractable integral. This occurs at each E-step of
EM, when the conditional expectations under the a posteriori
distributions can not be computed exactly. {\sf (c3)} The algorithm
addresses the finite sum challenge while keeping the caused
variability induced by the algorithmic solution as small as possible. For example, when the solution
relies on a random selection of a mini-batch of examples, the
algorithm has to propose a variance reduction scheme.

A first class of problems of the form \eqref{eq:problem-1} are
minimizations of regularized loss functions through gradient-based
algorithms. In that case, $G_i \eqdef \nabla W_i$.
\eqref{eq:problem-2} allows preconditioned gradients; such a variable
metric is known to accelerate the convergence.  Variable Metric Forward-Backward (VMFB) algorithms were
introduced to solve
\eqref{eq:problem-1}-\eqref{eq:problem-2} in the case $G$ is a gradient. Nevertheless, as discussed
in \Cref{sec:VMFB}, to our best knowledge none of the variants of VMFB
address the three constraints {\sf c1}, {\sf c2} and {\sf c3}
simultaneously.

A second application of \eqref{eq:problem-1}-\eqref{eq:problem-2} is
the EM algorithm, an algorithm originally designed to compute the
Maximum-Likelihood estimator of an unknown parameter $\param$ in
latent variable models. When the complete data model is from the
curved exponential family, EM is equivalent to an algorithm in the
so-called {\it statistic space} (see
e.g.~\cite{lavielle:delyon:moulines:1999}). This remark is the
cornerstone of stochastic EM algorithms including incremental EM ones
designed for incremental processing of large data sets. EM only
supplies preconditioned forward operators $ -\B(s)^{-1} G_i(s)$.
Therefore, stochastic EM algorithms are naturally in the setting
\eqref{eq:problem-1}-\eqref{eq:problem-2} (see
\cite{fort:etal:2020}). Here again, as discussed in \Cref{sec:EM}, none
of the EM variants in the literature address the constraints {\sf
  (c2)} and {\sf (c3)} simultaneously.

\bigskip 

Our first contribution is the design of a novel iterative algorithm,
named {\tt Perturbed Proximal Preconditioned SPIDER} (\3PS), which
combines \textit{(i)} a preconditioned forward operator associated to
the smooth part $n^{-1} \sum_{i=1}^n G_i(s)$, \textit{(ii)} a variable
metric proximity operator with respect to the non-smooth part $g$,
\textit{(iii)} a stochastic approach to address the finite sum setting
induced by $n^{-1} \sum_{i=1}^n G_i(s)$ combined with a variance
reduction technique based on the {\tt SPIDER} algorithm
(\cite{nguyen:etal:2017,fang:etal:2018}); it also allows \textit{(iv)}
numerical approximations of the preconditioned forward operator when
it has no closed form. The algorithm is introduced in
\Cref{sec:algo:3P}, together with discussions on implementation
questions. We also design a stochastic VMFB algorithm which answers the
constraints {\sf c1} and {\sf c2} but does not contain a variance
reduction step as required by {\sf c3}.

\bigskip 

The second contribution is to provide a non-asymptotic convergence
analysis in expectation of \3PS in the case the variable metric $B$ at
iteration $\# t$ depends on the current value of the iterate, and the
$G_i$'s are gradient operators; see \Cref{sec:theory}. The proof
relies on a Lyapunov inequality with an original construction, which
is a consequence of the non-convex optimization setting, and the fact
the algorithm uses preconditioned forward operators and variable
metric proximity operators (see \Cref{sec:proof:Lyapunov}).
\\ \Cref{theo:main:general} provides a control of convergence in
expectation for \3PS, which explicitly identifies the impact of
non-exact preconditioned forward operators, and the impact of
initialization strategies. First, we prove that the learning rate of
\3PS can be chosen constant over iterations when the preconditioned
forward operator is exact or replaced with an unbiased random oracle;
and is decreasing along iterations when it is replaced with a biased
oracle (deterministic or random). Second, we provide the first
convergence result for a stochastic VMFB algorithm addressing {\sf
  c1}, {\sf c2} and {\sf c3} for non-convex finite sum composite
optimization. For example, it is the first result for incremental EM
with a non-smooth penalty term ($g \neq 0$) and possibly biased Monte
Carlo approximations of the E-step.  \\ When the forward operator
$\precond_i(s,\B(s))$ is exact, we study the complexity of \3PS (see
\Cref{coro:complexity}): in order to satisfy the approximate
$\epsilon$-stationary condition, the number of calls
$\mathcal{K}_{\barprecond}$ to one of the operator
$\precond_i(s,\B(s))$ is $O(\sqrt{n}/ \epsilon)$, the number of calls
$\mathcal{K}_{\prox}$ to the backward operator is $O(1/\epsilon)$ and,
the learning rate can be chosen independent of the accuracy
$\epsilon$. Applied to the Gradient method and applied to the EM
method when there are no constraints $(g=0)$, these explicit controls
of convergence retrieve previous results in the literature (see
e.g.~\cite{wang:etal:2019,fort:etal:2020}) which are known to be at
the state-of-the-art. \\ Finally we show that this complexity analysis
remains valid when the forward operators are approximated. In the
difficult case when the approximations are biased random oracles based
on Monte Carlo sums, we show that $\mathcal{K}_{\barprecond}$ and
$\mathcal{K}_{\prox}$ are not impacted by the approximation and are
the same as with exact operators $\precond_i(\cdot, \B(\cdot))$, by choosing an adequate number of
terms in the Monte Carlo sums. The price to be paid is a Monte Carlo
complexity of order $O(\sqrt{n}/\epsilon^2)$.

\bigskip 

In \Cref{sec:application}, \3PS is applied to inference in a logistic
regression model with random effects.  We show how the problem is of
the form \eqref{eq:problem-1}-\eqref{eq:problem-2}. In this example,
the preconditioned forward operators are approximated by a Monte Carlo
sum computed from a Markov chain Monte Carlo sampler. Through
numerical analyses in the case \3PS is a stochastic Expectation
  Maximization algorithm in the statistic space, we discuss the
choice of design parameters. We also show how the {\tt
  SPIDER} variance reduction effect can be increased by exploiting the Monte Carlo
approximations of the forward operator.

\bigskip

The proofs are given in \Cref{sec:proof1} and \Cref{sec:proof2};
technical details are also provided in Appendix.

\bigskip 

{\bf Notations.}  We denote by $\pscal{\cdot}{\cdot}$ the dot product
on $\rset^q$, and by $\| \cdot \|$ the associated norm.  For a $q
\times q$ positive definite matrix $B$, we set $\pscal{\cdot}{\cdot}_B
\eqdef \pscal{B \cdot}{\cdot}$ and $\| \cdot \|_B$ the associated
norm.  $\Id_q$ denotes the $q \times q$ identity matrix. For a matrix
$B$, $B^\top$ is its transpose. $\Pmatrix_+^q$ denotes the set of the
$q \times q$ positive definite matrices.

$\nset$ (resp. $\nset^\star$) is the set of non negative
(resp. positive) integers. For $n \in \nset^\star$, we set $[n] \eqdef
\{0, \cdots, n\}$ and $[n]^\star \eqdef \{1, \cdots, n\}$.  $\rset_+$
is the set of the positive real numbers.  For $q \in \rset$, $\lceil q
\rceil$ is the upper integer part.

 $\Id$  is the identity function. For a proper function $g: \rset^q \to
\ocint{-\infty, + \infty}$, $\partial g(s)$ denotes the
sub-differential at $s$. For a continuously differentiable function
$W$ at $s \in \rset^q$, $\nabla W(s)$ is the gradient of $W$ at $s$.
 
All the random variables (r.v.) are defined on $(\Omega, \mathcal{A},
\PP)$; for a r.v. $U$, $\sigma(U)$ is the sigma-field generated by
$U$.

\section{Motivating examples}
In this section, we show that the Variable Metric Proximal-Gradient
algorithm and the Expectation Maximization algorithm are examples of
the general framework described by \eqref{eq:problem-1} and
\eqref{eq:problem-2}. In the first case, the preconditioning matrices
$B$ are chosen by the user, while in the second case, they are
supplied by the algorithm. 

\subsection{Variable Metric Proximal-Gradient algorithms}
\label{sec:VMFB}
Consider the  non-convex composite problem with finite sum structure
\begin{equation}\label{eq:problem:gdt}
\text{find $s \in \rset^q$:} \quad 0 \in \frac{1}{n} \sum_{i=1}^n
\nabla W_i(s) + \partial g(s) \eqsp,
\end{equation}
where $g$ is a proper lower semicontinuous convex function with domain
$\Sset$; and for all $i \in [n]^\star$, $W_i$ is continuously
differentiable on $\Sset$. It is of the form \eqref{eq:problem-1} with
$G_i \eqdef \nabla W_i$ being a gradient.  \eqref{eq:problem:gdt} is
an example of the more general problem: finding a zero on $\rset^q$ of
the sum of two (set-valued) operators $0 \in \mathsf{A} s + \mathsf{C}
s$. Here, ${\mathsf A} \eqdef n^{-1} \sum_{i=1}^n\nabla W_i$ and
$\mathsf{C} \eqdef \partial g$ is a maximally monotone operator (see
e.g.~\cite[Theorem
  20.25]{bauschke:combettes:2011}). \eqref{eq:problem:gdt} can be
solved by Forward-Backward splitting algorithms (see
e.g.~\cite{combettes:wajs:2005,beck:2017}): the forward step uses the
gradient of some if not all the functions $W_i$'s at each iteration;
the backward step uses a proximity operator associated to $g$. This
yields Proximal-Gradient based algorithms.

In the case $g=0$, which includes unconstrained optimization problems,
stochastic gradient methods with variance reduction were proposed in
the situation
\begin{equation} \label{eq:randomoracles}
n^{-1} \sum_{i=1}^n G_i(s) =
\PE\left[\mathcal{G}(Z,s)\right]
\end{equation} and random oracles
$\mathcal{G}(Z,s)$ are available; in the non-convex setting, let us
cite
e.g. \cite{ghadimi:lan:2013,reddi:etal:2016,allenzhu:hazan:2016,nguyen:etal:2017,allenzhu:2018,fang:etal:2018,dongruo:etal:2020}. These
algorithms address the
problem~\eqref{eq:problem-1}-\eqref{eq:problem-2} by choosing $B$
equal to the identity matrix $\Id_q$ and they use unbiased random
oracles $\mathcal{G}(Z,s)$ for the approximation of the forward
operator.

For non-convex composite optimization ($g \neq 0$), let us cite
\cite{ghadimi:etal:2016} and \cite{karimi:etal:2020} for stochastic
Proximal-Gradient algorithms using unbiased oracles $\mathcal{G}(Z,s)$
(see \eqref{eq:randomoracles}).  \cite{li:li:2018},
\cite{wang:etal:2019}, \cite{zhang:xiao:2019}, \cite{nhan:etal:2020}
and \cite{metel:takeda:2021} propose stochastic Proximal-Gradient
methods with unbiased random oracles and including variance reduction
schemes. In \cite{metel:takeda:2021}, $g$ may be non-convex but admits
an efficiently computable proximity operator.
\cite{atchade:fort:moulines:2015} allow for deterministic or random
approximations of the forward operator $n^{-1} \sum_{i=1}^n G_i(s)$;
when the perturbation is stochastic, the convergence analysis covers
both biased and unbiased oracles, includes Nesterov acceleration
schemes, but is restricted to convex optimization.  Here again, all
these algorithms address the
problem~\eqref{eq:problem-1}-\eqref{eq:problem-2} by choosing
$B=\Id_q$.

Forward-Backward suffers from slow convergence, and Variable Metric
Forward-Backward ({\tt VMFB}) methods were proposed by
\cite{chen:rockafeller:1997} in order to accelerate the convergence
(see also refs. 11 to 16 in
\cite{chouzenoux:pesquet:repetti:2014}). {\tt VMFB} changes the metric
at each iteration by using symmetric positive definite scaling
matrices multiplying the forward operator. It is an alternative to
inertial methods such as Heavy Ball or Nesterov acceleration which use
informations from the previous iterates.  When solving the inclusion
\eqref{eq:problem:gdt}, {\tt VMFB} uses preconditioned gradients with
an iteration-dependent preconditioning matrix $B_t^{-1}$ for the
forward step at iteration $\# t$, and scales the proximal step
consequently. Examples showing that {\tt VMFB} is more efficient than
Forward-Backward and Forward-Backward with inertial schemes, are
provided in \cite{chouzenoux:pesquet:repetti:2014} and
\cite{repetti:chouzenoux:pesquet:2014}.  Different strategies exist
for the definition of the variable metric $B_t$; for example, it may
be a diagonal matrix depending on the past history of the algorithm
(see e.g. \cite{park:etal:2019} and references therein for variable
scalar metrics; see also \cite{chen:liu:etal:2018} in the case $g=0$),
or inherited from Newton-type methods (see
e.g. \cite{becker:fadili:2012,lee:etal:2014,becker:etal:2019} for the composite convex case; see also \cite{kolte:etal:2015,moritz:etal:2016,gower:etal:2016} for the smooth convex case with finite sum structure; finally, see \cite{zhang:etal:2022} for the  smooth non-convex case with finite sum structure), or defined
through a Majorize-Minimize strategy to make the backward operator
explicit (see e.g. \cite{chouzenoux:pesquet:repetti:2014} and
\cite{repetti:wiaux:2021}).  Convergence results for {\tt VMFB} exist
in the convex case (see e.g. \cite{combettes:vu:2014} and
\cite{bonettini:etal:2021}; and \cite{park:etal:2019} for the strongly
convex case) and in the non-convex case (see e.g.
\cite{chouzenoux:pesquet:repetti:2014} and
\cite{repetti:wiaux:2021}). In \cite{yun:lozano:yang:2021}, a
stochastic {\tt VMFB} is studied in the non-convex case; the exact
gradient is approximated by a linear combination of random oracles,
with exponential forgetting, and the oracles are assumed unbiased and
bounded.

\3PS addresses non-convex composite optimization with a finite sum structure by using
Proximal-Gradient algorithms accelerated via the Variable Metric
cunning. It is a stochastic {\tt VMFB}, which contains a variance
reduction technique in order to overcome the finite sum setting; it
also allows oracles for the preconditioned forward operators, oracles
which can be biased or unbiased when random (see
e.g. \cite{atchade:fort:moulines:2015} and
\cite{fort:risser:etal:2018} for examples motivating biased random
approximations of the gradient).  The combination of these two sources
of perturbations is an original setting which, to our best knowledge,
is not addressed in the literature.  \\ \3PS uses preconditioning
matrices, which may depend on the current value of the iterate and
therefore may be random.  The non-asymptotic convergence analysis
derived in \Cref{sec:theory} will rely on weaker minorization
assumptions on the spectrum of the scaling matrices (see \Cref{hyp:W})
than in \cite{yun:lozano:yang:2021}; it will not require ordering
assumptions on the sequence of scaling matrices as in
\cite{combettes:vu:2014} and \cite{bonettini:etal:2021}, and will not
assume a Kurdyka-{\L}ojasiewicz condition on the objective function as
in \cite{chouzenoux:pesquet:repetti:2014}.  As a consequence, the
construction of the Lyapunov inequality for the convergence analysis
of \3PS differs from these previous works.  \\ \3PS requires the
backward operator to be explicit, which may be a strong assumption
especially for variable metric proximity operators (see
\Cref{sec:comput:proximity}); extensions of the convergence analysis
to the case of inexact proximity operators is out of the scope of this
paper.

\subsection{Expectation Maximization for curved exponential families}
\label{sec:EM}
Consider the parametric statistical model: the observations are
independent with density
\[
y \mapsto \int_\Zset p(y,z;\theta) \mu_{lv}(\rmd z) \eqsp,
\]
with respect to (w.r.t.) a $\sigma$-finite positive measure $\mu_o$ on
$\rset^{d_y}$. In this model, $z$ acts as a {\it latent} variable
taking values in the measurable set $(\Zset, \Zsigma)$ endowed with a
$\sigma$-finite positive measure $\mu_{lv}$ (see
e.g. \cite{everitt:1984} for examples of latent variable models).  The
goal is to learn the parameter $\theta \in \Theta \subseteq \rset^d$
from $n$ observations $Y_1, \cdots, Y_n$, by minimizing the 
negative normalized log-likelihood
\begin{equation}\label{eq:EM:LogLike}
 F(\param) \eqdef - \frac{1}{n} \sum_{i=1}^n \log \int_\Zset
 p(Y_i,z;\theta) \mu_{lv}(\rmd z)
\end{equation}
on $\Theta$. Unfortunately, this is a non-convex problem and most
often, an optimization algorithm for the minimization of
\eqref{eq:EM:LogLike} can not do better than converging to a critical
point of the objective function (see \cite{Wu:1983}).

A popular model is the case when the {\it complete data likelihood}
$(y,z) \mapsto p(y,z; \theta)$ is of the form
\[
p(y,z;\theta) \eqdef H(y,z) \, \exp\left( \pscal{S(y,z)}{\phi(\param)} - \psi(\param) \right)
\]
where $H: \rset^{d_y} \times \Zset \to \rset_+$, $S: \rset^{d_y}
\times \Zset \to \rset^q$, $\phi: \Param \to \rset^q$, $\psi: \Param
\to \rset$; it corresponds to the so-called {\it curved exponential
  family} assumption. It is satisfied by the mixture models as soon as
the components of the mixture are from the curved exponential
family. See e.g. \cite{brown:1986} for an introduction to curved
exponential family of distributions; and
\cite{mclachlan:krishnan:2008} for examples of such latent variable
models.

EM for curved exponential families defines iteratively a
$\Theta$-valued sequence $\{\param_t, t \geq 0\}$ through the
mechanism: given $\param_t$,
\begin{itemize}
\item {\bf (E-step)} Compute $\bar{s}(\param_t)$, the expectation of
  the {\em sufficient statistics} w.r.t. the {\it a posteriori} distributions
\begin{align*}
  &  \bar{s}(\param_t) \eqdef \frac{1}{n} \! \sum_{i=1}^n \bar{s}_i(\param_t) \eqsp, \\
  & \bar{s}_i(\param_t) \eqdef \int_\Zset \! S(Y_i,z) 
\frac{p(Y_i,z; \param_t) \, \mu_{lv}(\rmd z)}{\int_\Zset p(Y_i, u;
  \param_t ) \mu_{lv}(\rmd u)} \eqsp.
\end{align*}
\item {\bf (M-step)} Update the parameter
  \[
\param_{t+1} \eqdef \argmin_{\param \in \Param} \left( \psi(\param) -
\pscal{\bar{s}(\param_t)}{\phi(\param)} \right) \eqsp.
  \]
\end{itemize}
The algorithm alternates between a step in the parameter space
$\Param$ (when computing $\param_{t+1} \in \rset^d$), and a step in
the {\it statistic space} when computing $\bar{s}(\param_t) \in
\rset^q$.  \Cref{prop:EM:statisticspace} states that the limiting
points of EM run in the parameter space $\Theta$ are the fixed points
of an operator onto $\Param$; finding such a fixed point is equivalent
to find a fixed point of an operator onto the statistic space
$\bar{s}(\Param) \subseteq \rset^q$.
\begin{proposition} \label{prop:EM:statisticspace}
  Assume that for any $s \in \Sset \supseteq \bar{s}(\Param)$,
  \[
\map(s) \eqdef \argmin_{\param \in \Param} \left( \psi(\param) -
\pscal{s}{\phi(\param)} \right)
\]
exists and is unique. Set $ \mathcal{L}_\param \eqdef \{\param \in
\Param: \map(\bar{s}(\param)) = \param \}$ and $ \mathcal{L}_s \eqdef
\{s \in \bar{s}(\Param): \bar{s}(\map(s)) = s \}$.

$\mathcal{L}_\param$ is the set of the limiting points of EM.  If
$\param_\star \in \mathcal{L}_\param$, then $s_\star \eqdef
\bar{s}(\param_\star)$ is in $\mathcal{L}_s$. Conversely, if $s_\star
\in \mathcal{L}_s$ then $\param_\star \eqdef \map(s_\star)$ is in
$\mathcal{L}_\param$.
\end{proposition}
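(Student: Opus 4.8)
The statement is essentially a bookkeeping exercise that relates fixed points of the parameter-space map $\param \mapsto \map(\bars(\param))$ to fixed points of the statistic-space map $s \mapsto \bars(\map(s))$, under the standing assumption that $\map(s) = \argmin_{\param \in \Param}(\psi(\param) - \pscal{s}{\phi(\param)})$ exists and is unique for every $s \in \Sset \supseteq \bars(\Param)$. The plan is to first settle the characterization of $\mathcal{L}_\param$ as the set of limiting points of EM, and then to prove the two implications between $\mathcal{L}_\param$ and $\mathcal{L}_s$ directly from the definitions.

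For the first part, I would recall the EM recursion: given $\param_t$, the E-step produces $\bars(\param_t)$ and the M-step sets $\param_{t+1} = \argmin_{\param \in \Param}(\psi(\param) - \pscal{\bars(\param_t)}{\phi(\param)}) = \map(\bars(\param_t))$. Hence the parameter sequence is generated by the single deterministic map $\param_{t+1} = \map \circ \bars(\param_t)$. A point $\param_\star$ is a limiting (fixed) point of this iteration if and only if $\map(\bars(\param_\star)) = \param_\star$, which is exactly the defining condition of $\mathcal{L}_\param$. (Here I use "limiting point" in the sense of fixed point of the update map, consistent with the classical EM literature, so that this step is immediate once the composition structure is made explicit.)

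For the second part, take $\param_\star \in \mathcal{L}_\param$ and set $s_\star \eqdef \bars(\param_\star)$; since $\param_\star \in \Param$ we have $s_\star \in \bars(\Param) \subseteq \Sset$, so $\map(s_\star)$ is well defined. Apply $\bars$ to the identity $\param_\star = \map(\bars(\param_\star)) = \map(s_\star)$ to get $\bars(\map(s_\star)) = \bars(\param_\star) = s_\star$, i.e. $s_\star \in \mathcal{L}_s$. Conversely, take $s_\star \in \mathcal{L}_s$ and set $\param_\star \eqdef \map(s_\star) \in \Param$. By definition of $\mathcal{L}_s$ we have $\bars(\map(s_\star)) = s_\star$, that is $\bars(\param_\star) = s_\star$. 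Then $\map(\bars(\param_\star)) = \map(s_\star) = \param_\star$, so $\param_\star \in \mathcal{L}_\param$. Both directions are short substitutions, so no estimate or limiting argument is needed.

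The only genuine subtlety — and the step I would be most careful about — is making sure the compositions $\map \circ \bars$ and $\bars \circ \map$ are everywhere defined along the way: this is precisely what the hypothesis $\Sset \supseteq \bars(\Param)$ together with existence and uniqueness of $\map(s)$ for $s \in \Sset$ guarantees, since then $\bars(\param) \in \Sset$ for every $\param \in \Param$ and $\map(\bars(\param)) \in \Param$, so that $\bars$ may be applied again. Uniqueness of $\map(s)$ is what lets us write "$=$" rather than "$\in$" for the M-step output and turns the inclusions into genuine equalities of points. Once this domain check is in place, the proof is purely formal.
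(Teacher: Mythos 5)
Your proof is correct and matches the intended argument: the paper itself does not prove this proposition but defers to \cite{lavielle:delyon:moulines:1999}, where the identification rests on exactly the same direct substitutions between the two fixed-point sets, with the same domain check that $\bars(\Param)\subseteq\Sset$ makes both compositions well defined. Your reading of ``limiting points'' as fixed points of the EM update map $\map\circ\bars$ is the right one here, since the proposition's hypotheses contain no continuity or monotonicity conditions that would support the stronger claim about accumulation points of EM trajectories.
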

See e.g. \cite{lavielle:delyon:moulines:1999} for the proof. An
algorithmic corollary of \Cref{prop:EM:statisticspace}, is that EM is
equivalent to any algorithm run in the statistic space and designed to
find the roots of the function
\[
s \mapsto \frac{1}{n} \sum_{i=1}^n \barprecond_i(s) \eqsp,
\ \ \text{where} \ \ \barprecond_i(s) \eqdef \bar{s}_i(\map(s)) -s \eqsp,
\]
on the subset $\bar{s}(\Param)$ of $\rset^q$.  Under regularity
conditions on the statistical model, it is proved in
\cite{lavielle:delyon:moulines:1999} (see also a statement in
\cite[Proposition 1]{fort:etal:2020}) that there exists a $q
\times q$ positive definite matrix $\B(s)$ such that
 \begin{equation}\label{eq:EM:preconfGdt}
\nabla \left(F \circ \map\right)(s) = - \B(s)  \left( \frac{1}{n} \sum_{i=1}^n
\barprecond_i(s) \right) \eqsp,
\end{equation}
where $F$ is the negative normalized log-likelihood (see
\eqref{eq:EM:LogLike}). Therefore, the roots of $n^{-1} \sum_{i=1}^n
\barprecond_i(s)$ on $\bar{s}(\Param)$ are the roots of $n^{-1}
\sum_{i=1}^n G_i(s)$ on $\bar{s}(\Param)$, where $G_i(s) \eqdef -
 \B(s) \, \barprecond_i(s)$. It also means that the roots of $n^{-1}
\sum_{i=1}^n \barprecond_i(s)$ are the roots of the gradient of $F \circ
\map$, the objective function transferred on the statistic space
through the map $\map: \Param \to \rset^q$.

As a conclusion, EM in the statistic space is an example of problem
\eqref{eq:problem-1}-\eqref{eq:problem-2}, where the function $g$
collects the constraint on $s$ such as $s \in \Sset \supseteq
\bar{s}(\Param)$: \textit{(i)} it is designed to find a root of
$n^{-1} \sum_{i=1}^n G_i(s) = \nabla(F \circ \map)(s)$ under the
constraint that $s \in \Sset$; \textit{(ii)} it uses the quantities
$\barprecond_i(s)$ which are preconditioned forward operator since
there exists $B(s)$ such that $\barprecond_i(s) = - \B(s)^{-1} \,
G_i(s)$ (see \eqref{eq:EM:preconfGdt}); \textit{(iii)} this
preconditioned forward operator is intractable for at least two
reasons: first, due to the inner integrations on the set $\Zset$ when
computing $\bar{s}_i$, since $p(y,z; \param)$ and $\Zset$ are often
too complex to make the integrals explicit; second, due to the outer
integration on the $n$ examples when computing $\bar{s}$, which has a
prohibitive computational cost in large scale learning.  However, a
Monte Carlo approximation of $\barprecond_i(s)$ is always possible,
whatever $i$ and $s$.  This remark is the cornerstone to understand
the stochastic versions of EM (see
e.g. \cite{celeux:diebolt:1985,Wei:tanner:1990,lavielle:delyon:moulines:1999,fort:moulines:2003}
which address the inner sum intractability; and
\cite{neal:hinton:1998,Ng:mclachlan:2003,cappe:moulines:2009,chen:etal:2018,karimi:etal:2019,fort:etal:2020,fort:etal:2021}
for the outer sum intractability).  They consist in running a
Stochastic Approximation (SA) algorithm with mean field $ n^{-1}
\sum_{i=1}^n \barprecond_i(s)$ (for an introduction to SA, see
e.g. \cite{benveniste:etal:1990} or \cite{borkar:2008}); this yields
{\it SA within EM} procedures. They differ through the construction of
the random field used for the approximation of the mean field (see
\cite[Section 2.2.]{fort:etal:2021} for a description of some SA
within EM algorithms).

 \3PS is among the SA within EM algorithms. Compared to previous
 stochastic EM methods, it encompasses the two random approximations
 (of the sum in $i$ and of the integrals on $\Zset$) and a variance
 reduction step, and it also allows a more general penalty term $g$
 than the $\{0, + \infty \}$-valued indicator function of a set.
 
 \section{The \3PS\ algorithm}
 \label{sec:algo:3P}
 We introduce a novel algorithm named {\it Perturbed Proximal
   Preconditioned SPIDER} (\3PS), solving \eqref{eq:problem-1} and
 satisfying {\sf (c1)}, {\sf (c2)} and {\sf (c3)}. It requires $g$ to
 satisfy the following assumption
\begin{assumption}
 \label{hyp:g} $g: \rset^q \to \ocint{-\infty,+\infty}$ is proper, lower
    semicontinuous and convex. Denote by $\Sset$ its domain $\Sset
    \eqdef \{s \in \rset^q: g(s) < +\infty \}$.
\end{assumption}
Under this assumption, we define a {\it variable metric} proximity
operator.  For any $\pas >0$ and $B \in \Pmatrix_+^q$, the proximity
operator of the proper lower semicontinuous convex function $\pas g:
\rset^q \to \ocint{-\infty, + \infty}$ relative to the metric induced
by $B$ is defined by (see e.g. \cite[Section
  XV.4]{hiriarturruty:lemarechal:1996})
\begin{equation}
  \label{eq:varmatric:prox}
\prox_{\pas g}^B(s) \eqdef \!\! \argmin_{\rset^q} \left(\pas g(\cdot) + \frac{1}{2} \| \cdot-s\|^2_B \right).
\end{equation}
When $B = \Id_q$, we simply write $\prox_{\pas g}(s)$, which is the
proximity operator originally defined by \cite{moreau:1965}.
\Cref{lem:ExistenceUniqueness:prox} shows that under \Cref{hyp:g},
$\prox_{\pas g}^B(s)$ exists and is unique for all $s \in \rset^q$,
$\pas >0$ and $B \in \Pmatrix_+^q$.  It also provides
characterizations of this point. Its proof is in
\Cref{sec:proof:lem:Prox}.
\begin{lemma} 
\label{lem:ExistenceUniqueness:prox} Assume \Cref{hyp:g}.
\begin{enumerate}
\item For any $\pas >0$, $B \in \Pmatrix_+^q$ and $s \in \rset^q$, the
  optimization problem \eqref{eq:varmatric:prox} has a unique
  solution, characterized as the unique point ${\sf p} \in \Sset$
  satisfying
  \[
  - \, \pas^{-1} \, B \, ({\sf p} -s) \in \partial g({\sf p}) \eqsp.
  \]
\item \label{lem:ExistenceUniqueness:prox:item2} For any $\pas >0$, $B
  \in \Pmatrix_+^q$, $s \in \Sset$ and $h \in \rset^q$,
\begin{equation} \label{eq:fixedpoint}
s = \prox_{\pas g}^{B}(s + \pas h) \quad \text{iff} \quad  B h \in  \partial g(s). 
\end{equation}
  \end{enumerate}
\end{lemma}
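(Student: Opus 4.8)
The plan is to establish existence and uniqueness in part 1 by a standard strong-convexity argument, then derive the subdifferential characterization from Fermat's rule, and finally read off part 2 as an algebraic reformulation. First I would observe that under \Cref{hyp:g} the objective $\phi(u) \eqdef \pas g(u) + \tfrac12\|u-s\|_B^2$ in \eqref{eq:varmatric:prox} is proper (its domain is $\Sset \neq \emptyset$), lower semicontinuous (sum of an l.s.c. function and a continuous one), and \emph{strongly convex}: since $B \in \Pmatrix_+^q$, the map $u \mapsto \tfrac12\|u-s\|_B^2$ is strongly convex with modulus $\lambda_{\min}(B)>0$ with respect to $\|\cdot\|$, and adding the convex function $\pas g$ preserves this. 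A proper l.s.c. strongly convex function on $\rset^q$ is coercive, hence attains its infimum, and strong convexity forces the minimizer to be unique; this gives the well-definedness of $\prox_{\pas g}^B(s)$, and clearly the minimizer ${\sf p}$ lies in $\Sset = \operatorname{dom} g$.

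Next, to obtain the characterization, I would apply the Fermat rule / optimality condition for convex minimization: ${\sf p}$ minimizes $\phi$ if and only if $0 \in \partial \phi({\sf p})$. Since $u \mapsto \tfrac12\|u-s\|_B^2$ is differentiable everywhere with gradient $B(u-s)$, and $\pas g$ is convex with nonempty domain, the Moreau--Rockafellar sum rule applies (no constraint qualification issue — the smooth term is finite everywhere), giving $\partial \phi({\sf p}) = \pas\,\partial g({\sf p}) + B({\sf p}-s)$. Hence $0 \in \partial\phi({\sf p})$ is equivalent to $-B({\sf p}-s) \in \pas\,\partial g({\sf p})$, i.e. $-\pas^{-1} B({\sf p}-s) \in \partial g({\sf p})$, which is the stated condition. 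Conversely, any ${\sf p}\in\Sset$ satisfying this inclusion is a minimizer of the convex function $\phi$, hence equals the unique solution; this closes part 1.

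For part 2, I would simply specialize: by part 1, $s = \prox_{\pas g}^B(s+\pas h)$ holds iff $s\in\Sset$ and $-\pas^{-1}B\bigl(s - (s+\pas h)\bigr) \in \partial g(s)$, and since $-\pas^{-1}B(s-(s+\pas h)) = -\pas^{-1}B(-\pas h) = Bh$, this is exactly $Bh \in \partial g(s)$. (The hypothesis $s\in\Sset$ is already part of the assumptions of item \ref{lem:ExistenceUniqueness:prox:item2}, so no extra condition appears.) I do not anticipate a genuine obstacle here; the only point requiring a little care is invoking the subdifferential sum rule cleanly — but because the quadratic term is real-valued and continuous on all of $\rset^q$, the sum rule holds without any qualification, so this is routine. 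The argument uses only standard convex-analytic facts (existence under coercivity, uniqueness under strong convexity, Fermat's rule, the sum rule), all available from the references cited in the excerpt.
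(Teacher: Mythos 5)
Your proposal is correct and follows essentially the same route as the paper: the paper simply cites \citet[Chapter XV, Lemma 4.1.1]{hiriarturruty:lemarechal:1996} for the existence, uniqueness and subdifferential characterization that you prove directly via strong convexity, coercivity and the Fermat/sum rule, and it derives the fixed-point equivalence \eqref{eq:fixedpoint} from that characterization by exactly the same substitution $\mathsf{p}=s$, $s\leftarrow s+\pas h$ that you use. Your unfolding of the textbook argument is sound, including the observation that the sum rule needs no qualification because the quadratic term is finite and continuous on all of $\rset^q$.
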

For $s \in \rset^q$ and $B \in \Pmatrix_+^q$, set
\begin{align} \label{eq:def:meanfieldh}
  & \precond_i(s,B) \eqdef - B^{-1} \,  G_i(s) \eqsp, \nonumber  \\
& \precond(s,B) \eqdef n^{-1} \sum_{i=1}^n \precond_i(s,B) \eqsp.
\end{align}
By
\Cref{lem:ExistenceUniqueness:prox}-\cref{lem:ExistenceUniqueness:prox:item2},
it holds for any $B \in \Pmatrix_+^q$ and $\pas >0$: $s = \prox_{\pas
  \, g}^{B}\left(s+ \pas \, \precond(s, B) \right)$ iff $-B
\precond(s,B) \in \partial g(s)$.  By \eqref{eq:def:meanfieldh}, this
yields for any $B \in \Pmatrix_+^q$, and $\pas >0$:
\begin{multline}
  \label{eq:problem-3}
 \quad s_\star = \prox_{\pas \, g}^{B}\left(s_\star + \pas
\ \precond(s_\star,B) \right) \\ \text{iff} \ \ s_\star \ \ \text{solves
  \eqref{eq:problem-1}.}
\end{multline}

\subsection{Variable Metric Proximal and Preconditioned Gradient}
\eqref{eq:problem-3} shows that when solving the composite
optimization problem \eqref{eq:problem-1}, as soon as a preconditioned
version of the operator $s \mapsto n^{-1} \sum_{i=1}^n G_i(s)$ is
used -- with preconditioning matrix $B^{-1}$, a proximity operator of
$g$ relative to a metric induced by the matrix $B$ has to be used.

Based on the characterization \eqref{eq:problem-3}, a natural
splitting algorithm to solve \eqref{eq:problem-1} under the condition
{\sf (c1)} is: given $\sigma_0 \in \Sset$, a positive stepsize
sequence $\{\pas_{k+1}, k \geq 0 \}$ and a $\Pmatrix_+^q$-valued
sequence $\{B_{k+1}, k \geq 0\}$, repeat
\begin{equation} \label{eq:VMFB}
\sigma_{k+1} = \prox_{\pas_{k+1} \, g}^{B_{k+1}}\left( \sigma_k + \pas_{k+1}  \precond(\sigma_k,B_{k+1}) \right)\eqsp.
\end{equation}
  It corresponds to the Variable Metric Forward-Backward algorithm
  (see e.g. \cite{chen:rockafeller:1997,combettes:vu:2014}).

  In the large scale learning setting, the full sum over the $n$
  functions $\precond_i$ (see \eqref{eq:def:meanfieldh}) can not be
  computed at each iteration of \eqref{eq:VMFB}.  In addition, it may
  happen that $\precond_i(s)$ is not explicit (see e.g. the case of
  the incremental EM algorithms, \Cref{sec:EM}). Therefore, a natural
  idea is to propose the inexact version of \eqref{eq:VMFB} defined by
  \Cref{algo:inexactVMFB}: the proximal step is unchanged (see
  \cref{algo1:prox}); the SA step in \cref{algo1:SA} uses a random
  approximation $\Smem_{k+1}$ of the exact mean field $n^{-1}
  \sum_{i=1}^n \precond_i(\hatS_k)$; this approximation, defined by
  \cref{algo1:meanfield}, combines a mini-batch approximation of a
  full sum (see \cref{algo1:minibatch}) and possibly approximated
  terms $\delta_{k+1,i}$ (see \cref{algo1:delta}).

\begin{algorithm}
\caption{A stochastic Variable Metric Forward-Backward \label{algo:inexactVMFB}}
\begin{algorithmic}[1]
  \Require $\kout \in \nset^\star$, $\pas_{k} > 0$ for $k \in
           [\kout]^\star$, $\lbatch \in \nset^\star$,
           $\hatS_{\mathrm{init}} \in \Sset$.  \Ensure The sequence
           $\{\hatS_{k}, k \in [\kout]\}$.  \State $\hatS_{0} =
           \hatS_{\mathrm{init}}$ \For{$k=0,\cdots, \kout-1$}
           \State \label{algo1:minibatch} Sample a batch $\batch_{k+1}$
           of size $\lbatch$ in $[n]^\star$ \State Choose $B_{k+1} \in \Pmatrix_+^q$ \State \label{algo1:delta}
           For $i \in \batch_{k+1}$, compute an approximation
           $\delta_{k+1,i}$ of $\precond_i(\hatS_{k}, B_{k+1})$.
           \State \label{algo1:meanfield} $\Smem_{k+1} = \lbatch^{-1}
           \sum_{i \in \batch_{k+1}} \delta_{k+1,i}$
           \State \label{algo1:SA} $\hatS_{k+1/2} = \hatS_{k} +
           \pas_{k+1} \ \Smem_{k+1}$ \State \label{algo1:prox}
           $\hatS_{k+1} = \prox_{\pas_{k+1} \,
             g}^{B_{k+1}}(\hatS_{k+1/2})$.  \EndFor
\end{algorithmic}
\end{algorithm}

\subsection{The {\tt SPIDER} variance reduction technique}
\label{sec:spider}
\3PS leverages on \Cref{algo:inexactVMFB} and on the variance
reduction technique {\tt SPIDER} for the definition of the field
$\Smem_{k+1}$ that approximates $n^{-1} \sum_{i=1}^n
\precond_i(\hatS_k, B_{k+1})$.  {\tt SPIDER} stands for {\it
  Stochastic Path-Integrated Differential EstimatoR}, and was
originally introduced in the stochastic gradient descent literature by
\cite{fang:etal:2018} (see also
\cite{nguyen:etal:2017,wang:etal:2019}).  We give the intuition of
     {\tt SPIDER} in the SA setting which encompasses the stochastic
     gradient one.
     
     SA scheme solves a root finding problem $\xi(s) = 0$ on $\rset^q$
     by: given an initial value $s_0 \in \rset^q$ and a stepsize
     sequence $\{\pas_{k+1}, k \geq 0\}$, repeat $ s_{k+1} = s_k +
     \pas_{k+1} \, \Xi_{k+1}$, where at each iteration $\# (k+1)$,
     $\Xi_{k+1}$ is a random approximation of $\xi(s_k)$. Usually, it
     is required that conditionally to the past of the algorithm, the
     expectation of $\Xi_{k+1}$ is $\xi(s_k)$; in that case,
     $\Xi_{k+1}$ can be replaced with $\Smem_{k+1} \eqdef \Xi_{k+1} +
     V_{k+1}$, where conditionally to the past, $V_{k+1}$ is centered.
     {\tt SPIDER} leverages on this remark and on the {\it control
       variate} technique: it proposes a clever construction of a
     random variable $V_{k+1}$ approximating zero and correlated to
     $\Xi_{k+1}$.

The recipe is as follows: consider that at iteration $\# k$,
$\Smem_{k}$ is a random approximation of $\precond(\hatS_{k-1},B_{k})$. Then
define $\Smem_{k+1}$ by $ \Smem_{k+1} \eqdef H_{k+1} + V_{k+1}$ where
\begin{align*}
H_{k+1} & \eqdef \lbatch^{-1} \sum_{i
  \in \batch_{k+1}} \precond_i(\hatS_k, B_{k+1}) \eqsp, \\
V_{k+1} & \eqdef \Smem_k - \lbatch^{-1} \sum_{i \in \batch_{k+1}}
\precond_i(\hatS_{k-1},B_k) \eqsp,
\end{align*}
and $\batch_{k+1}$ is sampled at random in $[n]^\star$.  The
r.v. $V_{k+1}$ approximates zero since both $\lbatch^{-1} \sum_{i \in
  \batch_{k+1}} \precond_i(\hatS_{k-1},B_k)$ and $\Smem_{k}$ approximate
$n^{-1} \sum_{i=1}^n \precond_i(\hatS_{k-1},B_k)$; $V_{k+1}$ and $H_{k+1}$ are
correlated via $\batch_{k+1}$.

Unfortunately, the r.v. $\Smem_{k+1}$ is not an unbiased approximation
of $n^{-1} \sum_{i=1}^n \precond_i(\hatS_k,B_{k+1})$ (see
\Cref{prop:bias} in the case $B_{k+1}$ is of the form $\B(\hatS_k)$).  In order to remove the bias, {\tt SPIDER} restarts
the control variate mechanism regularly: every $k^{\mathrm{in}}$
iterations, compute a full sum over the $n$ terms and set
$\Smem_{k^{\mathrm{in}}+1} = n^{-1} \sum_{i=1}^n
\precond_i(\hatS_{k^{\mathrm{in}}},B_{k^{\mathrm{in}}+1})$.

\subsection{\3PS}
\label{sec:3PS}
\begin{algorithm*}
\caption{The Perturbed Proximal Preconditioned SPIDER algorithm (\3PS) \label{algo:3PS}}
\begin{algorithmic}[1]
  \Require $\kout \in \nset^\star$, $\kin{t} \in \nset^\star$ for $t
  \in [\kout]^\star$, $\pas_{t,k+1} > 0$ for $t \in [\kout]^\star, k
  \in [\kin{t}] $, $\lbatch \in \nset^\star$, $\lbatch'_t \in
  \nset^\star$ for $t \in [\kout]^\star$, $\hatS_{\mathrm{init}} \in
  \Sset$ and $B_{\mathrm{init}} \in \Pmatrix_+^q$  \Ensure The sequence $\{\hatS_{t,k}, t \in [\kout]^\star, k
  \in [\kin{t}]^\star\}$.  \State $\hatS_{0,\kin{0}} =
  \hatS_{\mathrm{init}}$, $B_{0,\kin{0}} =
  B_{\mathrm{init}}$ \For{$t=1,\cdots, \kout$}
  \State \label{algo2:init1} $\hatS_{t,0} = \hatS_{t-1,\kin{t-1}}$,
  \ \ $\hatS_{t,-1} = \hatS_{t-1,\kin{t-1}}$, \ \ $B_{t,0} = B_{t-1,\kin{t-1}}$  \State Sample a batch
  $\batch_{t,0}$ of size $\lbatch'_t$ in $[n]^\star$, with or without
  replacement.  \State For all $i \in \batch_{t,0}$, compute
  $\delta_{t,0,i}$ equal to or approximating $\precond_i(\hatS_{t,0},B_{t,0})$.  \State \label{algo2:init2} $\Smem_{t,0}
  = (\lbatch'_t)^{-1} \sum_{i \in \batch_{t,0}} \delta_{t,0,i}$
  \For{$k = 0, \cdots, \kin{t}-1$} \State \label{algo2:batch} Sample a
  mini batch $\batch_{t,k+1}$ of size $\lbatch$ in $[n]^\star$, with
  or without replacement. \State \label{algo2:matrix} Choose $B_{t,k+1} \in \Pmatrix_+^q$.
  \State \label{algo2:delta} For all $i \in
  \batch_{t,k+1}$, compute $\delta_{t,k+1,i}$ equal to or approximating $\precond_i(\hatS_{t,k},B_{t,k+1}) - \precond_i(\hatS_{t,k-1},B_{t,k})$.
  \State \label{algo2:meanfield} $\Smem_{t,k+1} = \Smem_{t,k} +
  \lbatch^{-1} \sum_{i \in \batch_{t,k+1}} \, \delta_{t,k+1,i}$
  \State \label{algo2:SA} $\hatS_{t,k+1/2} = \hatS_{t,k} +
  \pas_{t,k+1} \ \Smem_{t,k+1}$ \State \label{algo2:prox}
  $\hatS_{t,k+1} = \prox_{t,k}(\hatS_{t,k+1/2})$, \qquad \qquad where
  $\prox_{t,k} \eqdef \prox_{\pas_{t,k+1} \, g}^{B_{t,k+1}}$.
  \EndFor \EndFor
\end{algorithmic}
\end{algorithm*}

\3PS  is
given by \Cref{algo:3PS}. The iteration index is $(t,k)$ where $t$ is
the index of the current {\it outer} loop and ranges from $1$ to
$\kout$, and $k$ is the index of the current {\it inner} loop. At
outer loop $\# t$, there are $\kin{t}$ inner iterations.  The inner
iterations are \Cref{algo:inexactVMFB} (see
\Cref{algo2:batch,algo2:matrix,algo2:SA,algo2:prox} of \Cref{algo:3PS}) combined
with the {\tt SPIDER} variance reduction trick (see
\Cref{algo2:meanfield} of \Cref{algo:3PS}) adapted to the
case when the quantities $\precond_i(\hatS_{t,k},B_{t,k+1}) -
\precond_i(\hatS_{t,k-1},B_{t,k})$ can not be computed exactly (see \Cref{algo2:delta}).

When $G_i$ is a gradient and $B(s) = \Id_q$, different strategies were
proposed for {\tt SPIDER} for the choice of $\lbatch'_t$ and
$\kin{t}$.  In \cite{fang:etal:2018,nguyen:etal:2017,wang:etal:2019},
the number of inner loops is constant ($\kin{t}$ = $\kin{}$ for any $t
\geq 1$) and $\lbatch'_t =n$; \cite{nguyen:etal:2017} also considers
the case when $\kin{t}$ is adapted based on the history of the
algorithm while being upper bounded; in \cite{horvath:etal:2022},
$\lbatch'_t$ is deterministic and depends on $t$, $\lbatch$ depends on
$t$, and $\kin{t}$ is a Geometric random variable with an expectation
depending on $t$; in \cite{li:etal:2021}, $\lbatch'_t$ does not depend
on $t$ and $\kin{t}$ is random. \\ For the EM problem (see
\Cref{sec:EM}), \cite{fort:etal:2020} introduced {\tt SPIDER-EM}, a
variance reduced stochastic EM designed for large scale learning, in a
situation when the computation of $\barprecond_i(s)$ is exact for all
$s,i$. For this algorithm, the benefit of an increasing batch size $t
\mapsto \lbatch'_t$ and a geometric number of inner loops $\kin{t}$
with time-varying expectation, is discussed in
\cite{fort:etal:2021:icassp}. The conclusion is that the best strategy
is a deterministic increasing sequence $\lbatch'_t$ in order to have
an increasing accuracy when refreshing the variable $\Smem_{\cdot}$, and
a constant number of inner loops $\kin{t} = \kin{}$. \\ This paper
allows $\lbatch'_t$ and $\kin{t}$ to vary with $t$: they may be
deterministic functions of $t$ or random ones as well.

The matrices $\{B_{t,k+1}, t \in [\kout]^\star, k \in [\kin{}-1]\}$
can be deterministic or random. They could be chosen prior the run of
the algorithm; more efficient strategies consist in adapting this
matrix along the run of the algorithm, based on its history. In EM
(see \Cref{sec:EM}), $B_{t,k+1}$ is of the form $\B(\hatS_{t,k})$
where $\B$ is defined by the statistic model.

After $\kin{t}$ inner iterations, the outer loop $\# (t+1)$ starts:
the stochastic mean field $\Smem_{t+1,0}$ is refreshed (see
\Cref{algo2:init1} to \Cref{algo2:init2}). Here again, two
approximations of the original {\tt SPIDER} algorithm are allowed: the
first one is when computing $\precond_i(\hatS_{t,0},B_{t,1})$ and the
second one avoids the scan of the full data set (one may choose
$\lbatch'_t < n$).

The input variables of \3PS\ are the number of outer loops $\kout$,
the number of inner loops $\kin{t}$, the stepsize sequence
$\{\pas_{t,k}, t \geq 1, k \geq 1 \}$ for the SA steps, the size of
the mini-batches $\lbatch$ and $\lbatch_t'$, and the initial values of
the iterate $\hatS_{\mathrm{init}}$ and the metric $B_{\init}$ in
$\Sset$ and $\Pmatrix_+^q$ respectively.

\subsection{Monte Carlo approximation of $\precond_i(s,B)$}
\label{sec:montecarlo:casehi}
Set $\vartheta \eqdef (s,i,B) \in \Sset \times [n]^\star \times
\Pmatrix_+^q$.  In some applications, there exist a measurable
function $H_{\vartheta}$ and a probability measure $\pi_{\vartheta}$
defined on the measurable set $(\Zset, \Zsigma)$ such that
\begin{equation}\label{eq:meanfield:expectation} 
\precond_i(s,B) = \int_\Zset H_{\vartheta}(z) \pi_{\vartheta}(\rmd z) \eqsp.
\end{equation}
This is the case of EM in the statistic space (see \Cref{sec:EM})
where $H_{\vartheta}(z) = S(Y_i,z) - s$ and
\[
\pi_{\vartheta}(\rmd z) \eqdef \frac{p(Y_i,z;
  \map(s))}{\int_\Zset p(Y_i,u; \map(s)) \, 
  \mu_{lv}(\rmd u)} \, \mu_{lv}(\rmd z) \eqsp.
\]
When the integral in \eqref{eq:meanfield:expectation} is intractable,
one can resort to Monte Carlo integrations to define the
approximations $\delta_{t,k+1,i}$ and $\delta_{t,0,i}$ (see
e.g. \cite{devroye:1986} for exact sampling methods, and \cite{robert:casella:2004} for an introduction to Markov chain
Monte Carlo methods). If $\{Z_{m}^{\vartheta}, m \geq 0\}$ are
independent samples with distribution $\pi_{\vartheta}(\rmd z)$ or are a path of an ergodic Markov
chain with unique invariant distribution $\pi_{\vartheta}(\rmd z)$, then we can
set
\[
\precond_i(\hatS_{t,k},B_{t,k+1}) \approx \frac{1}{M} \sum_{m=1}^M
H_{\vartheta_{t,k+1,i}}(Z_m^{\vartheta_{t,k+1,i}}) \eqsp,
\]
where $\vartheta_{t,k+1,i} \eqdef (\hatS_{t,k},i,B_{t,k+1})$.  We will
show numerically in \Cref{sec:application} that when approximating the
difference $\precond_i(\hatS_{t,k},B_{t,k+1}) -
\precond_i(\hatS_{t,k-1},B_{t,k})$, there is a gain in correlating the
two sequences $\{Z_m^{\vartheta_{t,k+1,i}}, m \geq 0 \}$ and $\{Z_m^{\vartheta_{t,k,i}} m \geq 0
\}$; this makes stronger the effect of the {\tt SPIDER} control
variate (see \Cref{sec:spider}).

\subsection{The computation of $\prox^B_{\pas g}$}
\label{sec:comput:proximity}
When $g=0$, $\prox_{\pas g}^B(s) = s$. When $g \neq 0$, $\mathsf{p}
\eqdef \prox_{\pas g}^B(s)$ solves $0 \in \mathsf{p} - s + \pas B^{-1}
\, \partial g(\mathsf{p})$ and there does not always exist an explicit
expression of $\mathsf{p}$.

When $B = \Id_q$, \cite[Tables 10.1 and 10.2]{combettes:pesquet:2011}
provide properties of $\prox_{\pas g}$ and expressions of proximity
operators for many functions $g$.

When $B$ is the sum of a diagonal matrix and of a rank one matrix,
\cite[Section 3]{becker:fadili:2012} presents iterative algorithms for
the computation of $\mathsf{p}$.  For a general positive definite
matrix $B$, we have from \cite[Example 3.9]{combettes:vu:2014}
\[
\prox_{\pas g}^B(s) = \sqrt{B}^{-1} \prox_{\pas g(\sqrt{B}^{-1}
  \cdot)}( \sqrt{B} s) \eqsp,
\]
where $\sqrt{B}$ is the square root of the matrix $B$.  \cite[Lemma
  5]{becker:fadili:2012} (see also \cite{combettes:vu:2014})
establishes a Moreau identity \ie\ an expression of $\prox_{\pas g}^B$
as a function of a proximity operator of the Fenchel conjugate of $g$.

In the special case $g$ is the $\{0, +\infty\}$-valued indicator
function of a closed convex set $\Sset$, the {\it projected Landweber}
method is an iterative algorithm for the computation of $\mathsf{p}$
(see \cite{eicke:1992}, see also \cite[Example
  10.10]{combettes:pesquet:2011}).

Finally, for applicatons including a metric selection step, metric
selection strategies for the definition of $B$ can be found in
\cite[Section 3]{park:etal:2019} for diagonal variable metrics; and in
\cite{repetti:wiaux:2021} for specific functions $g$ which circumvent
the often challenging computation of $\prox_{\pas g}^B$.

\section{Non-asymptotic convergence analysis}
\label{sec:theory}
This section is devoted to explicit non-asymptotic bounds for the
convergence in expectation of \3PS. We will restrict to the case there
exist $\B: \Sset \to \Pmatrix_+^q$ and 
\[
B_{t,k+1} \eqdef \B(\hatS_{t,k}) \eqsp.
\]
This framework encompasses the EM problem (see \Cref{sec:EM}) and any
preconditioned gradient-based algorithms (see \Cref{sec:VMFB}) when
the preconditioning matrix depends on the past history of the
algorithm via the current value of the iterate. We will also use the notation
\begin{equation}
  \label{eq:def:barmeanfieldh}
\barprecond_i(s) \eqdef \precond_i(s, \B(s))\eqsp, \quad \barprecond(s) \eqdef \precond(s, \B(s)) \eqsp. 
\end{equation}

\3PS\ is designed to solve \eqref{eq:problem-1} under the constraints
         {\sf c1} to {\sf c3}. Therefore, based on
         \eqref{eq:problem-3}, we are interested in a control of the
         quantities $\prox_{t,k}\left( \hatS_{t,k} + \pas_{t,k+1}
         \ \barprecond(\hatS_{t,k}) \right) - \hatS_{t,k}$ where
\[
\prox_{t,k}(s) \eqdef \prox_{\pas_{t,k+1}\, g}^{\B(\hatS_{t,k})}(s) \eqsp.
\]
Roughly speaking, these quantities evaluate how far the algorithm is
from the limiting set at iteration $\# (t,k)$.  More precisely, we
will control  the cumulative "distances to
stationary" $\sum_{t=1}^{\kout} \sum_{k=0}^{\kin{t}-1}
\Delta_{t,k+1}^\star$ where $\Delta_{t,k+1}^\star$ is equal to
\begin{equation} \label{eq:def:Deltastar}
\!\!\!\!\! \frac{\| \prox_{t,k}(\hatS_{t,k} + \pas_{t,k+1} \barprecond(\hatS_{t,k}))
  - \hatS_{t,k}\|_{\B(\hatS_{t,k})}^2}{\pas_{t,k+1}^2} \eqsp;
\end{equation}
$\precond$ is defined by \eqref{eq:def:meanfieldh}.

The controls in expectation of the cumulated distances are obtained under
the assumptions \Cref{hyp:globalL} to
\Cref{hyp:error}. \Cref{hyp:globalL} is a smoothness assumption on the
functions $\precond_i$, \Cref{hyp:W} assumes that $n^{-1} \sum_{i=1}^n
G_i(s)$ is a gradient operator of some so-called {\em Lyapunov function}, and the spectrum of the matrices $\B(s)$
are bounded uniformly in $s$. \Cref{hyp:error} are assumptions on the
approximations $\delta_{t,k+1,i}$.

\begin{assumption}
\label{hyp:globalL}
For all $i \in [n]^\star$, the function $\barprecond_i$ is globally
Lipschitz on $\Sset$, with constant $L_i$: there exists a positive
constant $L_i$ such that $\forall s,s' \in \Sset$, $\|
\barprecond_i(s) - \barprecond_i(s') \| \leq L_i \, \|s -s'\|$. Set
$L^2 \eqdef n^{-1} \sum_{i=1}^n L_i^2$.
\end{assumption}
\Cref{hyp:globalL} only requires a Lipschitz property on $\Sset$; it
is weaker than assuming the Lipschitz property on the full space
$\rset^q$ as sometimes assumed in the literature (see
e.g. \cite{combettes:wajs:2005}). \Cref{hyp:globalL} holds for example when $\Sset$ is
compact and for all $ i \in [n]^\star$, the gradient $\nabla \precond_i$ exists and is continuous
on $\Sset$.
\begin{assumption}
  \label{hyp:W}
  \begin{enumerate}[a)]
    \item  \label{hyp:W:item1} There exists a function $\lyap: \rset^q \to \rset$,
  continuously differentiable on $\Sset$ and such that 
\[
\forall s \in \Sset, \qquad \nabla \lyap(s) =  \frac{1}{n} \sum_{i=1}^n G_i(s) \eqsp;
\]
in addition, $\barprecond_i(s) = - \B(s)^{-1} \, G_i(s)$, where $\B(s) \in \Pmatrix_+^q$. 
\item  \label{hyp:W:item2}
  $\nabla \lyap$ is globally $L_{\dot \lyap}$-Lipschitz on $\Sset$.
  \item   \label{hyp:W:item3}
There exist $0 < v_{\min} \leq v_{\max} < + \infty$ such that for any
$s \in \Sset$,
$v_{\min} \| \cdot \|^2 \leq \| \cdot\|_{\B(s)}^2 \leq v_{\max} \|\cdot
\|^2$.
\end{enumerate}
\end{assumption}
Here again, both the Lipschitz property and the boundedness condition
on the spectrum of the matrices $\B(s)$ are required on $\Sset$ and not
on the full space $\rset^q$. When $\B(s)$ does not depend on $s$ ($\B(s)
= B$ for any $s \in \Sset$), we have $L_{\dot \lyap} \leq v_{\max}
n^{-1} \sum_{i=1}^n L_i$.

The last assumption is on the fluctuations of the errors when
approximating $\barprecond_i(\hatS_{t,k}) -
\barprecond_i(\hatS_{t,k-1})$: set $\xi_{t,k+1,i} \eqdef
\delta_{t,k+1,i}- \barprecond_i(\hatS_{t,k}) +
\barprecond_i(\hatS_{t,k-1}) $ and define its conditional bias and
variance, conditionally to the $\sigma$-field generated by
$\batch_{t,k+1}$, $\hatS_{t,k}$ and $\hatS_{t,k-1}$. Set
$\mathcal{P}_{t,k+1/2} \eqdef \sigma\left( \batch_{t,k+1},
\hatS_{t,k}, \hatS_{t,k-1} \right)$.
\begin{align*}
  \mu_{t,k+1,i} 
   & \eqdef \PE\left[ \xi_{t,k+1,i}   \vert  \mathcal{P}_{t,k+1/2}\right] \\
 \sigma^2_{t,k+1,i} & \eqdef \PE\left[ \| \xi_{t,k+1,i}  - \mu_{t,k+1,i}  \|^2 \vert  \mathcal{P}_{t,k+1/2} \right] \eqsp.
\end{align*}
We assume 
\begin{assumption}
  \label{hyp:error}
  \begin{enumerate}[a)]
    \item \label{hyp:error:indep} Conditionally to $\batch_{t,k+1}$, $\hatS_{t,k}$ and $\hatS_{t,k-1}$, the approximations $\{ \delta_{t,k+1,i}, i \in \batch_{t,k+1} \}$ are independent.
  \item \label{hyp:error:bias} There exists a non negative constant
    $C_b$ and for any $t \in [\kout]^\star$, there exists a non
    decreasing deterministic sequence $\{\nbrmc_{t,k}, k \geq 1 \}$ such that for
    any $k \in [\kin{t}-1]$, with probability one,
    \[
\| \frac{1}{n} \sum_{i=1}^n \mu_{t,k+1,i} \| \leq
\frac{C_b}{\nbrmc_{t,k+1}} \eqsp.
\]
\item \label{hyp:error:var} There exist non negative constants $C_v$
  and $C_{vb}$ and for any $t \in [\kout]^\star$, there exist non
  decreasing deterministic sequences $\{\nbrmcv_{t,k}, k \geq 1 \}$
  and $\{\bar \nbrmcv_{t,k}, k \geq 1 \}$ such that for any $k \in
  [\kin{t}-1]$, with probability one,
  \begin{align*}
& \frac{1}{n} \sum_{i=1}^n \sigma^2_{t,k+1,i} \leq
  \frac{C_v}{\nbrmcv_{t,k+1}} \eqsp,  \\
  & \frac{1}{n} \sum_{i=1}^n  \| \mu_{t,k+1,i}-  \frac{1}{n} \sum_{j=1}^n \mu_{t,k+1,j}\|^2  \leq \frac{C_{vb}^2}{\bar \nbrmcv_{t,k+1}^2}\eqsp.
 \end{align*}
\end{enumerate}
\end{assumption}
We allow the errors $\xi_{t,k+1,i}$ to be deterministic or random. When there are no errors ($\xi_{t,k+1,i} = 0$) then $C_b = C_v = C_{vb} = 0$. When the errors are deterministic,  we have $\xi_{t,k+1,i} = \mu_{t,k+1,i}$ and $\sigma^2_{t,k+1,i} =0$. When the errors are random and unbiased, then $\mu_{t,k+1,i} = 0$.  Therefore, some of the constants $C_b$, $C_v$ or $C_{vb}$ can be null as summarized in \Cref{table:ctt}.

\begin{table}[ht]
  \begin{centering}
  \begin{tabular}{||l||c|c|c||}
      \hline
      & $C_b$ & $C_v$ & $C_{vb}$ \\ 
      \hline\hline
      exact & $0$ & $0$ & $0$  \\ \hline
      deterministic & $ \geq 0$ & $0$ & $ \geq 0 $ \\
      random, unbiased & $0$ & $ \geq 0$ & $0$ \\
      random, biased & $ > 0$ & $ \geq 0$ & $\geq 0$ \\ [1ex]
      \hline
  \end{tabular}
  \caption{The sign of the constants $C_b$, $C_v$, $C_{vb}$ when there
    are no approximations on the $\barprecond_i(s)'$ (case {\em exact}),
    and when there are approximations. \label{table:ctt}}
  \end{centering}
  \end{table}

In \Cref{sec:hyp:montecarlo}, we
discuss  how  \Cref{hyp:error} is verified in the case $\barprecond_i(s') - \barprecond_i(s)$
is an expectation under a distribution that may depend on $(s,s',i)$ (see
\Cref{sec:montecarlo:casehi}),
and $\delta_{t,k+1,i}$ is a Monte Carlo approximation.

\Cref{theo:main:general} provides an explicit upper bound of the
cumulative distance to stationary $\Delta^\star_{t,k+1}$ (see
\eqref{eq:def:Deltastar}) along the $\sum_{t=1}^{\kout} \kin{t}$
iterations of the algorithm. It also provides an upper bound on the
cumulative errors $\mathcal{D}^\star_{t,k+1}$ defined by
  \[
 \frac{ \| \hatS_{t,k+1} - \prox_{t,k}(\hatS_{t,k}+ \pas_{t,k+1} \barprecond(\hatS_{t,k})) \|_{\B(\hatS_{t,k})}^2}{\pas_{t,k+1}^2} \eqsp,
  \]
where $\barprecond$ is defined by \eqref{eq:def:barmeanfieldh}. Given
the current iterate $\hatS_{t,k}$, $\mathcal{D}_{t,k+1}^\star$
compares two iterations: the ideal one $\prox_{t,k}(\hatS_{t,k}+
\pas_{t,k+1} \barprecond(\hatS_{t,k}))$ and the available one
$\prox_{t,k}(\hatS_{t,k}+ \pas_{t,k+1} \Smem_{t,k+1})$.

\begin{theorem} \label{theo:main:general}
  Assume \Cref{hyp:g}, \Cref{hyp:globalL}, \Cref{hyp:W} and
  \Cref{hyp:error}.  Let $\{\kin{t}, t \in [\kout]^\star\}$ be a
  deterministic positive sequence. For any $t \in [\kout]^\star$ and
  $k \in [\kin{t}-1]$, define $\Lambda_{t,k+1}$ by
          \begin{equation}\label{eq:def:Lambda}
          \frac{\pas_{t,k} L_{\dot \lyap}}{v_{\min}} + \pas^2_{t,k}
          L^2 \frac{2v_\max \kin{t} }{v_{\min} \lbatch} \left(1 +
          \frac{2 \, C_{vb}}{ \sqrt{\lbatch} \, \bar \nbrmcv_{t,k+1}}
          \right) \eqsp.
          \end{equation}
  Let $\{\hatS_{t,k}, t \in [\kout]^\star, k \in [\kin{t}]^\star \}$
  be the sequence given by \Cref{algo:3PS} when the stepsize sequence
  $\{\pas_{t,k+1}, t \in [\kout]^\star, k \in [\kin{}-1]\}$ satisfies
  \begin{equation}\label{eq:conditions:stepsize}
\pas_{t,k+1} \left(1 + \frac{2 C_b}{\nbrmc_{t,k+1}} \right) \leq
\pas_{t,k} \eqsp, \quad \Lambda_{t,k+1} \in \ooint{0,1/2} \eqsp.
  \end{equation}
Then, 
  \begin{align*}
   & \sum_{t=1}^{\kout} \sum_{k=1}^{\kin{t}} \pas_{t,k} \, \left(
    \frac{1}{2} - \Lambda_{t,k+1} \right) \left\{ \PE\left[
      \Delta_{t,k}^\star \right] + \PE\left[ \mathcal{D}_{t,k}^\star
      \right] \right\} \\ & \qquad \leq \PE\left[ \lyap(\hatS_{1,0}) +
      g(\hatS_{1,0})\right] - \min_\Sset \left(\lyap +g \right) \\ &
    \qquad + v_\max \sum_{t=1}^{\kout} \pas_{t,0} \, \kin{t} \,
    \PE\left[ \| \mathcal{E}_t \|^2 \right] \\ & \qquad + v_\max
    \sum_{t=1}^{\kout} \sum_{k=1}^{\kin{t}} \left(\kin{t} -k+1\right)
    \pas_{t,k} \, \mathcal{U}_{t, k} \eqsp,
 \end{align*}
  where $\mathcal{E}_t \eqdef \Smem_{t,0} - \precond(\hatS_{t,0})$ and
 \[  \mathcal{U}_{t,k} \eqdef \frac{2 \, C_b}{\nbrmc_{t,k}} +
  \frac{C_b^2}{\nbrmc_{t,k}^2} + \frac{C_v}{\lbatch \, \nbrmcv_{t,k}}  +
  \frac{2 \, C_{vb}}{\sqrt{\lbatch} \,\bar \nbrmcv_{t,k}}  + \frac{C^2_{vb}}{\lbatch \,\bar \nbrmcv_{t,k}^2}\eqsp.
\]
\end{theorem}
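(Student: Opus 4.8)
The plan is to build a Lyapunov-type telescoping argument centered on the function $\lyap + g$ evaluated along the iterates $\hatS_{t,k}$. First I would establish a one-step descent inequality for a single inner iteration $(t,k)\to(t,k+1)$. The starting point is the $L_{\dot\lyap}$-Lipschitz smoothness of $\nabla\lyap$ (Assumption~\ref{hyp:W}\ref{hyp:W:item2}), which gives a quadratic upper bound on $\lyap(\hatS_{t,k+1})-\lyap(\hatS_{t,k})$ in terms of $\pscal{\nabla\lyap(\hatS_{t,k})}{\hatS_{t,k+1}-\hatS_{t,k}}$ and $\|\hatS_{t,k+1}-\hatS_{t,k}\|^2$. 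Since $\nabla\lyap(\hatS_{t,k})=n^{-1}\sum_i G_i(\hatS_{t,k})=-\B(\hatS_{t,k})\barprecond(\hatS_{t,k})$, the cross term can be rewritten in the $\B(\hatS_{t,k})$-metric as $-\pscal{\barprecond(\hatS_{t,k})}{\hatS_{t,k+1}-\hatS_{t,k}}_{\B(\hatS_{t,k})}$. For the $g$-part, I would use the subgradient characterization from Lemma~\ref{lem:ExistenceUniqueness:prox}, item~1: since $\hatS_{t,k+1}=\prox_{t,k}(\hatS_{t,k+1/2})$ with $\hatS_{t,k+1/2}=\hatS_{t,k}+\pas_{t,k+1}\Smem_{t,k+1}$, we have $\pas_{t,k+1}^{-1}\B(\hatS_{t,k})(\hatS_{t,k}+\pas_{t,k+1}\Smem_{t,k+1}-\hatS_{t,k+1})\in\partial g(\hatS_{t,k+1})$, and convexity of $g$ gives $g(\hatS_{t,k+1})-g(\hatS_{t,k})\le$ an inner product of that subgradient with $\hatS_{t,k+1}-\hatS_{t,k}$. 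Combining the two and completing the square should produce, after the natural change of variable, terms of the form $\Delta^\star_{t,k}$ and $\mathcal D^\star_{t,k}$ (the ``distance to stationary'' and the ``oracle discrepancy'') with favorable signs, plus an error term involving $\Smem_{t,k+1}-\barprecond(\hatS_{t,k})$.

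The second step is to control $\PE[\|\Smem_{t,k+1}-\barprecond(\hatS_{t,k})\|^2]$ — the heart of the SPIDER analysis. Writing $\Smem_{t,k+1}$ recursively (line~\ref{algo2:meanfield}) as $\Smem_{t,k}+\lbatch^{-1}\sum_{i\in\batch_{t,k+1}}\delta_{t,k+1,i}$, I would decompose the error at step $k+1$ into the error at step $k$ plus a martingale-increment part plus a bias part. Conditioning on $\mathcal P_{t,k+1/2}$ and using the conditional-independence Assumption~\ref{hyp:error}\ref{hyp:error:indep} together with the bias and variance bounds \ref{hyp:error:bias}--\ref{hyp:error:var}, the mini-batch sampling variance contributes a term proportional to $\lbatch^{-1}L^2\|\hatS_{t,k}-\hatS_{t,k-1}\|^2$ (this is where Assumption~\ref{hyp:globalL} enters: $\barprecond_i$ is $L_i$-Lipschitz), and the approximation errors contribute the $\mathcal U_{t,k}$-type terms. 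Unrolling this recursion from the start of the outer loop $t$ back to $\Smem_{t,0}$ accumulates the telescoped bound; the factor $(\kin{t}-k+1)$ in the statement is exactly the combinatorial weight one gets from this unrolling over at most $\kin{t}$ inner steps, and the $\|\mathcal E_t\|^2=\|\Smem_{t,0}-\barprecond(\hatS_{t,0})\|^2$ term is the ``initialization'' error of the outer loop surviving the unrolling.

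The third step is to assemble everything: sum the one-step descent inequality over all inner iterations $k=0,\dots,\kin{t}-1$ and all outer loops $t=1,\dots,\kout$, noting that $\lyap+g$ telescopes across outer-loop boundaries because of the continuation $\hatS_{t,0}=\hatS_{t-1,\kin{t-1}}$ (line~\ref{algo2:init1}) and $B_{t,0}=B_{t-1,\kin{t-1}}$, so the right-hand side collapses to $\PE[\lyap(\hatS_{1,0})+g(\hatS_{1,0})]-\min_\Sset(\lyap+g)$ plus the accumulated error terms. The coefficient $\Lambda_{t,k+1}$ in \eqref{eq:def:Lambda} is precisely the penalty absorbed on the left-hand side from (a) the $\pas_{t,k}L_{\dot\lyap}/v_{\min}$ smoothness quadratic term and (b) the $\pas_{t,k}^2 L^2$ mini-batch variance term, with its $(1+2C_{vb}/(\sqrt{\lbatch}\bar\nbrmcv_{t,k+1}))$ factor coming from the heterogeneity-of-bias part of Assumption~\ref{hyp:error}\ref{hyp:error:var}; the stepsize condition $\Lambda_{t,k+1}\in(0,1/2)$ keeps the coefficient $\tfrac12-\Lambda_{t,k+1}$ of $\PE[\Delta^\star_{t,k}]+\PE[\mathcal D^\star_{t,k}]$ strictly positive, and the condition $\pas_{t,k+1}(1+2C_b/\nbrmc_{t,k+1})\le\pas_{t,k}$ is what lets the stepsize-weighted error terms be rearranged so that their coefficients match the $(\kin{t}-k+1)\pas_{t,k}$ pattern rather than a $\pas_{t,k+1}$ pattern.

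The main obstacle I expect is the careful bookkeeping in Step~2: getting the SPIDER recursion to close \emph{with} a variable metric (since $\B$ changes with the iterate, the telescoping of $\precond_i(\hatS_{t,k},B_{t,k+1})-\precond_i(\hatS_{t,k-1},B_{t,k})$ is not a clean difference of a single function evaluated at two points) and \emph{with} biased random oracles simultaneously. One has to split $\delta_{t,k+1,i}$ into its exact part $\barprecond_i(\hatS_{t,k})-\barprecond_i(\hatS_{t,k-1})$ plus the error $\xi_{t,k+1,i}$, handle the conditional bias $\mu_{t,k+1,i}$ (which does not vanish) separately from the conditional variance, and track how the bias propagates multiplicatively through the recursion — which is exactly why the condition on $\pas_{t,k+1}(1+2C_b/\nbrmc_{t,k+1})$ appears and why the Lyapunov function must be built with some care, as the authors note in the introduction to this section. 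A secondary delicate point is that $\kin{t}$ and $\lbatch'_t$ may themselves be random in the general algorithm, but the theorem restricts $\{\kin{t}\}$ to be deterministic, which removes one layer of conditioning.
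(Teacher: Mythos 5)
Your plan matches the paper's proof essentially step for step: a descent inequality for $\lyap+g$ built from the $L_{\dot\lyap}$-smoothness of $\lyap$ and the subgradient characterization of the variable-metric prox (the paper's Lemmas~\ref{lem:contraction:W} and \ref{lem:contraction:prox} and Proposition~\ref{prop:lyap:DL}), a bias/variance recursion for $\|\Smem_{t,k+1}-\barprecond(\hatS_{t,k})\|^2$ unrolled to the start of the outer loop (Propositions~\ref{prop:bias}--\ref{prop:squarederror} and Corollary~\ref{coro:squarederror}, with $\|\hatS_{t,k}-\hatS_{t,k-1}\|^2$ converted into $\Delta^\star_{t,k}$ and $\mathcal{D}_{t,k}$ exactly as you anticipate), and a final telescoping sum in which the stepsize conditions make the $\tfrac12-\Lambda_{t,k+1}$ coefficients positive. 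The identification of where each term of $\Lambda_{t,k+1}$, the $(\kin{t}-k+1)$ weight, and the $\|\mathcal{E}_t\|^2$ term come from is correct.
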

The proof of \Cref{theo:main:general} is given in
\Cref{sec:proof:theogeneral}. Note that $\mathcal{U}_{t,k+1} =0$ when
the algorithm uses exact preconditioned gradients at each iteration:
$\delta_{t,0,i} = \barprecond_i(\hatS_{t,0})$ and $\delta_{t,k+1,i} =
\barprecond_i(\hatS_{t,k}) - \barprecond_i(\hatS_{t,k-1})$ for all
$i,t,k$.

\bigskip

{\bf Random number of inner loops $\kin{t}$.} When the number of inner
loops $\kin{t}$ at the outer loop $\# t$ is a random number, we
consider it is drawn prior the run of the algorithm. Therefore the
expectations in \Cref{theo:main:general} are conditionally to the
random sequence $\{\kin{t}, t \in [\kout]^\star\}$. The expectation
w.r.t. the randomness of $\kin{t}$ can easily be obtained from
\Cref{theo:main:general}; details are omitted.

\bigskip

{\bf The step sizes $\pas_{t,k}$.}  The conditions on the sequence
$\{\pas_{t,k+1}, t \in [\kout]^\star, k \in [\kin{t}-1] \}$ are
satisfied with
\[
\pas_{t,k+1} \eqdef \prod_{j=0}^k \left(1 + \frac{2 \,
  C_b}{\nbrmc_{t,j+1}} \right)^{-1} \pas_{t,0}
\]
where $\pas_{t,0}$ is positive and strictly lower than
\begin{equation}\label{eq:stepsizeinit}
 \frac{1}{4 L v_{\max} \upsilon} \frac{\lbatch}{\kin{t}} 
 \left(\sqrt{\frac{L_{\dot \lyap}^2}{L^2} + 4 v_{\min} v_\max
   \frac{\kin{t}}{ \lbatch} \upsilon} - \frac{L_{\dot
     \lyap}}{L}\right) \eqsp;
\end{equation}
 $\upsilon \eqdef 1+ 2 C_{vb}/(\sqrt{\lbatch} \, \inf_{t,k}\bar
\nbrmcv_{t,k+1})$ (see the proof in \Cref{app:sec:stepsizeinit}).
First, observe that when $C_b =0$, the step size can be a constant
function of the inner loop index $k$ loop ($\pas_{t,k+1} = \pas_{t,0}$
for any $k$). On the contrary, when $C_b >0$ \ie\ for a deterministic
approximation or a biased random approximation (see \Cref{table:ctt}),
the stepsize sequence is a strictly decreasing function of the inner
loop index $k$.

Second, the maximal value of $\pas_{t,0}$ is larger when $C_{vb}=0$ than
when $C_{vb}>0$. Here again, deterministic or unbiased random approximations requires more aggressive step sizes.

\bigskip

{\bf The initialization of the outer loops.} Set $\mathcal{N}_t \eqdef
\|\mathcal{E}_t\|^2$. When $\batch_{t,0} = \{1,
\cdots, n\}$ and $\delta_{t,0,i} = \barprecond_i(\hatS_{t,0})$ for all
$i$, then $ \mathcal{N}_t =0$; otherwise, $\mathcal{N}_t$ is
positive.

Let us discuss the behavior of $\mathcal{N}_t$ when $\delta_{t,0,i}$
is an unbiased random approximation of $\barprecond_i(\hatS_{t,0})$
with variance denoted by $\sigma_{t,0,i}^2$. When $\batch_{t,0} = \{1,
\cdots, n\}$, then
\begin{equation}
  \label{eq:var:init:2}
\PE\left[ \mathcal{N}_t \right] = \frac{1}{n^2} \sum_{i=1}^n
\sigma^2_{t,0,i} \eqsp.
\end{equation}
Nevertheless, the strategy  $\batch_{t,0} = \{1, \cdots, n\}$ has a large computational cost; sampling a
subset of size $\lbatch'_t$ reduces the computational cost but
increases the squared norm of the error: we have
\begin{equation}
  \label{eq:var:init}
 \PE\left[\mathcal{N}_t \right] \leq \frac{1}{ \lbatch'_t n}
\sum_{i=1}^n \left( \sigma^2_{t,0,i} + \|\barprecond_i(s) - \barprecond(s)\|^2
\right) \eqsp,
\end{equation}
with an equality if $\batch_{t,0}$ is sampled with replacement in
$\{1, \cdots, n\}$.  See \Cref{sec:app:Epsilon} for detailed
computations. From a numerical point of view, an efficient strategy
consists in increasing the size $\lbatch'_t$ with the outer loop index
$t$ (see references in \Cref{sec:3PS} for \3PS\ applied to EM).

\bigskip

{\bf Random stopping time of the algorithm.} In non-convex
optimization, the last iterate $\hatS_{\kout,\kin{\kout}}$ is not
necessarily the point which minimizes, over the sequence
$\{\hatS_{t,k}, t \in [\kout]^\star, k \in [\kin{t}]^\star \}$, the
distance to the set of solutions of \eqref{eq:problem-1}. The quantity
$\Delta_{\cdot}^\star$, motivated by \eqref{eq:problem-3}, can not be
computed exactly in our framework so that the "best" iterate can not
be identified thanks to this criterion. It is therefore popular to
analyze the algorithm when stopped at a random time (see
e.g. \cite[Chapter 6]{lan:2020}). For sake of simplicity, we consider
the case when $\kin{t} = \kin{}$ for any $t$ and $C_b =0$. We have the
following corollary:
\begin{corollary}[of \Cref{theo:main:general}]
 \label{coro:maintheo}
Assume that $\kin{t} = \kin{}$, $C_b=0$ and the stepsize sequence is
constant $\pas_{t,k} = \pas_\star$.  Let $(\tau, K)$ be a uniform
random variable on $[\kout]^\star \times [\kin{}]^\star$, independent
of the algorithm. Then
 \begin{align*}
   & \inf_{(t,k) \in [\kout]^\star \times [\kin{}]^\star} \left(
   \frac{1}{2} - \Lambda_{t,k} \right) \, \PE\left[
     \Delta_{\tau,K}^\star + \mathcal{D}_{\tau,K}^\star \right] \\ &
    \leq \frac{ \PE\left[ \lyap(\hatS_{1,0}) +
       g(\hatS_{1,0})\right] - \min_\Sset \left(\lyap +g \right) }{
     \kout \kin{} \pas_\star} \\ &  + v_\max \PE\left[ \|
     \mathcal{E}_\tau \|^2 \right]  + v_\max \ \PE\left[ \left(
     \kin{} - K+1 \right) \mathcal{U}_{\tau, K} \right] \eqsp.
 \end{align*}
  \end{corollary}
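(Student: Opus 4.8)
The plan is to obtain the statement as a direct specialization of \Cref{theo:main:general}, followed by an averaging argument over the iteration index. First I would record that the corollary inherits the hypotheses of \Cref{theo:main:general} (it is stated as a corollary of it), so in particular $B_{t,k+1} = \B(\hatS_{t,k})$ and \Cref{hyp:g}, \Cref{hyp:globalL}, \Cref{hyp:W}, \Cref{hyp:error} are in force; it then remains only to check the step-size conditions \eqref{eq:conditions:stepsize} for the constant choice $\pas_{t,k} = \pas_\star$. With $C_b = 0$, the first condition $\pas_{t,k+1}(1 + 2C_b/\nbrmc_{t,k+1}) \leq \pas_{t,k}$ reduces to $\pas_\star \leq \pas_\star$ and is trivially satisfied, while the second, $\Lambda_{t,k+1} \in \ooint{0,1/2}$, is precisely what the admissibility of $\pas_\star$ (via \eqref{eq:stepsizeinit}) guarantees; in particular this ensures $\frac{1}{2} - \Lambda_{t,k} > 0$ for every $(t,k) \in [\kout]^\star \times [\kin{}]^\star$. (Incidentally, $C_b = 0$ also removes the first two terms of $\mathcal{U}_{t,k}$, but that plays no role in the argument.)

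Next I would simplify the conclusion of \Cref{theo:main:general} under $\kin{t} \equiv \kin{}$ and $\pas_{t,k} = \pas_{t,0} = \pas_\star$. On the left-hand side the factor $\pas_{t,k}$ pulls out as $\pas_\star$ and, since $\Delta_{t,k}^\star \geq 0$ and $\mathcal{D}_{t,k}^\star \geq 0$, the double sum $\sum_{t=1}^{\kout}\sum_{k=1}^{\kin{}} (\frac{1}{2} - \Lambda_{t,k})\{\PE[\Delta_{t,k}^\star] + \PE[\mathcal{D}_{t,k}^\star]\}$ is bounded below by $\inf_{(t,k)}(\frac{1}{2} - \Lambda_{t,k})$ times $\sum_{t,k}\{\PE[\Delta_{t,k}^\star] + \PE[\mathcal{D}_{t,k}^\star]\}$. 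On the right-hand side the first term is unchanged, the second becomes $v_\max \pas_\star \kin{}\sum_{t=1}^{\kout}\PE[\|\mathcal{E}_t\|^2]$, and the third becomes $v_\max \pas_\star \sum_{t=1}^{\kout}\sum_{k=1}^{\kin{}}(\kin{} - k + 1)\,\mathcal{U}_{t,k}$.

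The last step is to average via the uniform pair $(\tau, K)$. Because $(\tau, K)$ is drawn on $[\kout]^\star \times [\kin{}]^\star$ independently of the trajectory produced by \Cref{algo:3PS}, for any integrable array $\{A_{t,k}\}$ one has $\PE[A_{\tau,K}] = (\kout\kin{})^{-1}\sum_{t,k}\PE[A_{t,k}]$ by conditioning on $(\tau,K)$ and using independence; likewise $\PE[\|\mathcal{E}_\tau\|^2] = \kout^{-1}\sum_{t}\PE[\|\mathcal{E}_t\|^2]$ since $\tau$ is marginally uniform on $[\kout]^\star$. Applying this with $A_{t,k} = \Delta_{t,k}^\star + \mathcal{D}_{t,k}^\star$ and with $A_{t,k} = (\kin{} - k + 1)\mathcal{U}_{t,k}$, the inequality of the previous step becomes
\begin{align*}
&\pas_\star \kout \kin{}\, \inf_{(t,k)}\left(\frac{1}{2} - \Lambda_{t,k}\right)\PE\left[\Delta_{\tau,K}^\star + \mathcal{D}_{\tau,K}^\star\right] \leq \PE\left[\lyap(\hatS_{1,0}) + g(\hatS_{1,0})\right] - \min_\Sset(\lyap + g) \\
&\qquad\qquad + v_\max \pas_\star \kout \kin{}\,\PE\left[\|\mathcal{E}_\tau\|^2\right] + v_\max \pas_\star \kout \kin{}\,\PE\left[(\kin{} - K + 1)\mathcal{U}_{\tau,K}\right],
\end{align*}
and dividing through by $\pas_\star \kout \kin{}$ yields the stated bound. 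I do not anticipate a genuine obstacle here: the whole argument is bookkeeping on top of \Cref{theo:main:general}, the only points deserving a line of care being the verification of \eqref{eq:conditions:stepsize} for a constant step size when $C_b = 0$, and the elementary averaging identity, which relies on $(\tau, K)$ being drawn independently of the algorithm.
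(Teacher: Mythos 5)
Your proposal is correct and follows exactly the route the paper intends: specialize \Cref{theo:main:general} to $\kin{t}=\kin{}$, $C_b=0$ and constant $\pas_\star$ (checking that \eqref{eq:conditions:stepsize} reduces to $\Lambda_{t,k+1}\in\ooint{0,1/2}$), bound the weighted sum below by the infimum of the coefficients, and convert the double sums into expectations over the independent uniform index $(\tau,K)$ before dividing by $\pas_\star \kout \kin{}$. The paper gives no separate proof for this corollary precisely because it is this bookkeeping argument, so there is nothing to add.
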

An upper bound on $\Lambda_{t,k}$ can easily be obtained from
\eqref{eq:def:Lambda} as a function of $L_{\dot \lyap}, L, v_{\min},
v_{\max}$, $\kin{}, \lbatch, \pas_\star$, $C_{vb}$ and $\inf_{t,k}
\bar \nbrmcv_{t,k+1}$.

 \Cref{coro:maintheo} shows that, even by stopping \3PS\ with this simple rule,
the first term in the RHS is inversely proportional to the maximal
number of iterations $\kout \kin{}$. 

\bigskip

{\bf Complexity analysis when $\mathcal{E}_t = 0$, $\mathcal{U}_{t,k}
  = 0$ and $\kin{t} = \kin{}$.}  For smooth first-order optimization,
algorithms are compared through their complexity in order to satisfy
an $\epsilon$-first order stationary condition. In stochastic
composite optimization, this criterion is naturally extended to the
{\em approximate $\epsilon$-stationary condition} defined by
\[
\PE\left[ \Delta^\star_{\tau,K} \right] \leq \epsilon \eqsp,
\]
where $(\tau,K)$ is a random variable taking values in $[\kout]^\star
\times [\kin{}]^\star$; see e.g. \cite[Section 4]{ghadimi:etal:2016},
\cite[Section 3]{wang:etal:2019} and \cite{fort:moulines:2021}).

\Cref{coro:complexity} studies the proximal complexity
$\mathcal{K}_{\prox}$ defined as the number of calls to the $\prox$
operator in order to satisfy the approximate $\epsilon$-stationary
condition; the stochastic $\barprecond$-complexity
$\mathcal{K}_{\barprecond}$ defined as the number of calls to one of the
$\barprecond_i$'s; and the total number of iterations $\kin{}
\kout$. Again for sake of simplicity, and in order to compare our
results to the literature, we consider a simplified setting.
\begin{corollary}[of \Cref{coro:maintheo}]
  \label{coro:complexity}
Assume in addition that $\mathcal{E}_t = 0$ and $\mathcal{U}_{t,k} =
0$. The approximate $\epsilon$-stationary condition is satisfied with
$\pas_\star = v_{\min} / (4 \, L_{\dot \lyap})$, $\kin{}/\lbatch
=L_{\dot \lyap}^2/(v_{\min} v_{\max} L^2)$, $\lbatch = O(\sqrt{n}
\sqrt{ v_{\min} v_{\max}} L/L_{\dot \lyap})$ and $\kout \kin{}=
O(L_{\dot \lyap}/(\epsilon v_{\min}))$. Moreover, $\mathcal{K}_{\prox}
= O(L_{\dot \lyap}/(v_{\min} \, \epsilon))$ and
$\mathcal{K}_{\barprecond}= O(\sqrt{v_{\max}} L \sqrt{n}/(\epsilon
\sqrt{v_{\min}}))$.
  \end{corollary}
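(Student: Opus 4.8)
The plan is to specialise \Cref{coro:maintheo} to the exact-oracle regime and then tune the free design parameters so as to minimise the number of oracle calls. First I would note that the extra hypotheses $\mathcal{E}_t = 0$ and $\mathcal{U}_{t,k} = 0$ amount to using exact preconditioned gradients, i.e.\ $\batch_{t,0} = [n]^\star$ with $\delta_{t,0,i} = \barprecond_i(\hatS_{t,0})$ and $\delta_{t,k+1,i} = \barprecond_i(\hatS_{t,k}) - \barprecond_i(\hatS_{t,k-1})$; since every summand of $\mathcal{U}_{t,k}$ is nonnegative and the normalising sequences $\nbrmc_{t,k}, \nbrmcv_{t,k}, \bar\nbrmcv_{t,k}$ are finite, $\mathcal{U}_{t,k} = 0$ forces $C_b = C_v = C_{vb} = 0$. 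Hence the step-size requirement \eqref{eq:conditions:stepsize} reduces to the monotonicity $\pas_{t,k+1} \leq \pas_{t,k}$, satisfied with equality by the constant choice $\pas_{t,k} = \pas_\star$, and $\Lambda_{t,k}$ collapses to $\pas_\star L_{\dot \lyap}/v_{\min} + 2 \pas_\star^2 L^2 v_\max \kin{}/(v_{\min}\lbatch)$.

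Next I would substitute the prescribed values: $\pas_\star = v_{\min}/(4 L_{\dot \lyap})$ turns the first summand of $\Lambda_{t,k}$ into $1/4$, and $\kin{}/\lbatch = L_{\dot \lyap}^2/(v_{\min} v_{\max} L^2)$ turns the second into $1/8$, so $\Lambda_{t,k} \equiv 3/8 \in \ooint{0,1/2}$, which is admissible for \Cref{theo:main:general} and gives $\inf_{t,k}(1/2 - \Lambda_{t,k}) = 1/8$ in \Cref{coro:maintheo}. Discarding the nonnegative term $\mathcal{D}^\star_{\tau,K}$ and using $\mathcal{E}_t = \mathcal{U}_{t,k} = 0$, \Cref{coro:maintheo} yields $\PE[\Delta^\star_{\tau,K}] \leq 8\,(\PE[\lyap(\hatS_{1,0}) + g(\hatS_{1,0})] - \min_\Sset(\lyap + g))/(\kout \kin{} \pas_\star)$; requiring the right-hand side to be at most $\epsilon$ and substituting $\pas_\star$ gives $\kout \kin{} \geq 32\,(\PE[\lyap(\hatS_{1,0}) + g(\hatS_{1,0})] - \min_\Sset(\lyap + g))\, L_{\dot \lyap}/(\epsilon v_{\min})$, i.e.\ $\kout \kin{} = O(L_{\dot \lyap}/(\epsilon v_{\min}))$; up to ceilings this also fixes admissible integer values of $\kin{}$ and $\kout$.

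Finally I would count calls from \Cref{algo:3PS}. Exactly one evaluation of $\prox$ is performed per inner iteration (line \ref{algo2:prox}), hence $\mathcal{K}_{\prox} = \sum_{t=1}^{\kout} \kin{t} = \kout \kin{} = O(L_{\dot \lyap}/(v_{\min}\epsilon))$. For $\mathcal{K}_{\barprecond}$, the refresh at the start of each outer loop costs $n$ calls (a full batch $\batch_{t,0} = [n]^\star$ being needed for $\mathcal{E}_t = 0$), and each inner iteration evaluates the difference $\barprecond_i(\hatS_{t,k}) - \barprecond_i(\hatS_{t,k-1})$ over a batch of size $\lbatch$, i.e.\ $O(\lbatch)$ calls, so $\mathcal{K}_{\barprecond} = \kout n + O(\kout \kin{} \lbatch)$. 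Writing $\kout = \kout \kin{}/\kin{}$ and using $\kin{} = \lbatch\, L_{\dot \lyap}^2/(v_{\min} v_{\max} L^2)$, the two contributions are $\kout n = O(v_{\max} L^2 n/(\epsilon L_{\dot \lyap}\lbatch))$ and $\kout \kin{}\lbatch = O(L_{\dot \lyap}\lbatch/(\epsilon v_{\min}))$; balancing them forces $\lbatch^2 = v_{\min} v_{\max} L^2 n/L_{\dot \lyap}^2$, i.e.\ $\lbatch = O(\sqrt{n}\,\sqrt{v_{\min} v_{\max}}\, L/L_{\dot \lyap})$, and back-substitution gives $\mathcal{K}_{\barprecond} = O(\sqrt{v_{\max}}\, L\sqrt{n}/(\epsilon \sqrt{v_{\min}}))$. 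The only delicate points are the exact per-iteration accounting of oracle cost (the factor $2$ hidden in the $O(\lbatch)$ of the SPIDER difference, plus the full-batch refresh) and verifying that the chosen $\Lambda_{t,k}$ really lies in $\ooint{0,1/2}$; beyond this bookkeeping and the elementary optimisation over $\lbatch$, there is no analytic difficulty.
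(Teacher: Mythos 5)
Your proposal is correct and follows essentially the same route as the paper's own proof: constant stepsize, the same choices of $\pas_\star$ and $\kin{}/\lbatch$ giving $\Lambda_{t,k}=3/8$ and $\inf_{t,k}(1/2-\Lambda_{t,k})=1/8$, the bound $\PE[\Delta^\star_{\tau,K}]\leq 32 L_{\dot\lyap}(\PE[\lyap(\hatS_{1,0})+g(\hatS_{1,0})]-\min_\Sset(\lyap+g))/(v_{\min}\kout\kin{})$, and the oracle count $\kout n + 2\kout\kin{}\lbatch$ optimised over $\lbatch$. The only difference is cosmetic: you derive $\lbatch$ by explicitly balancing the two contributions to $\mathcal{K}_{\barprecond}$, whereas the paper simply states the choice and back-substitutes.
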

The proof is in \Cref{sec:proof:complexity}. This result shows that
the step size $\pas_\star$ and the number of inner loops $\kin{}$ are
independent of the accuracy $\epsilon$.

When applied to Stochastic Gradient Descent, \3PS\ in the setting of
\Cref{coro:complexity} is the {\tt Prox-SpiderBoost} algorithm studied
in \cite{wang:etal:2019}: \Cref{coro:complexity} and \cite[Theorem
  2]{wang:etal:2019} state the same complexity results. 
  \cite[Table 1]{wang:etal:2019} compares {\tt Prox-SpiderBoost} to
  other stochastic gradient algorithms for composite non-convex finite
  sum optimization. It is shown that the variance reduction based on
  {\tt SPIDER} order-level outperforms other variance reduction
  strategies such as the {\tt SVRG} one and the {\tt SAGA} on, introduced respectively by  \cite{johnson:zhang:2013} and \cite{defazio:etal:2014}. Hence,  \3PS\ reaches the state of the art among the proximal stochastic
gradient algorithms designed to solve finite sum non-convex composite
optimization.

When applied to EM, \3PS\ in the setting of \Cref{coro:complexity} is
the extension of the {\tt SPIDER-EM} algorithm studied in
\cite{fort:etal:2020} to the case there is a proximal step which
manages the constraint $g$. Here again, the comparison of
\Cref{coro:complexity} and \cite[Theorem 2]{fort:etal:2020} shows that
\3PS\ reaches the state of the art among the incremental EM
algorithms with variance reduction, including {\tt sEM-VR} and {\tt
  FIEM} introduced respectively in \cite{chen:etal:2018} and \cite{karimi:etal:2019} (see also \cite{fort:etal:2021}). See the comparison to the
literature in \cite{fort:etal:2020}.

Beyond these two applications, \Cref{coro:complexity} is - to our best
knowledge - the first complexity result for an algorithm designed to
solve \eqref{eq:problem-1} under the constraint \eqref{eq:problem-2}
and for non-convex finite sum composite optimization.

\bigskip

{\bf Approximate $\epsilon$-stationary condition: the cost of inexact
  preconditioned forward operators.} Let us discuss the cost of
inexact $\barprecond_i(s)$'s when the approximation is unbiased and
random (so that $C_b = C_{vb} = 0$, see \Cref{table:ctt}): does it
deteriorate the proximal complexity $\mathcal{K}_{\prox}$ and the
number of calls to an oracle of a preconditioned forward operator
$\barprecond_i$ (still denoted by $\mathcal{K}_{\barprecond}$ below) ?
detailed computations of the assertions below can be found in
\Cref{sec:proof:MCcost}.

If $\PE\left[ \| \mathcal{E}_t \|^2 \right] = O(\epsilon^{1-\pa'}/
(\sqrt{n} t)^{\pa'})$ for some $\pa' \in \coint{0,1}$ and
\[
 \nbrmcv_{t,k+1} = O\left(  \frac{n^{(\pa - \bar \pa)/2}}{\epsilon^{1-\pa}} \, t^{\pa} \, (k+1)^{\bar \pa} \right) 
 \]
 for some $\pa, \bar \pa \in \coint{0,1}$, then the approximate
 $\epsilon$-stationary condition is satisfied with $\kin{t} =
 O(\sqrt{n})$, $\lbatch = O(\sqrt{n})$ and $\kout = O(1/(\sqrt{n}
 \epsilon))$. In addition, $\mathcal{K}_{\prox} = O(1/ \epsilon)$ and
 $\mathcal{K}_{\barprecond}= O( \sqrt{n}/\epsilon)$. Therefore, the
 conclusions of \Cref{coro:complexity} remain valid, and the
 approximations of the $\precond_i$'s do not deteriorate the
 complexity performances of the algorithms, as soon as the
 approximation is small enough.

 Let us now evaluate the computational cost, in the case the unbiased
 random approximation is a Monte Carlo approximation computed from
 independent and identically distributed (i.i.d.) samples. In this case, $\nbrmcv_{t,\cdot}$ is the number of
 terms of the Monte Carlo sum (see \Cref{sec:hyp:montecarlo}). The
 Monte Carlo complexity $\mathcal{K}_{{\sf MC}}$ defined as the total number
 of Monte Carlo draws required to satisfy the approximate
 $\epsilon$-stationary condition is: $\mathcal{K}_{{\sf MC}} =
 O(\sqrt{n}/\epsilon^2)$ for any $\pa, \pa', \bar \pa \in
 \coint{0,1}$.

 To our best knowledge, it is the first complexity analysis with such
 a Monte Carlo approximation of the preconditioned forward operators $\barprecond_i$'s.

 \section{Application: Penalized Logistic Regression with random effects}
 \label{sec:application}
 \subsection{The model}
 Motivated by applications in classification, we consider a logistic
 regression model with random effects.

Let $n$ pairs of examples  $\{(X_i, Y_i), i \in [n]^\star \}$ where $X_i \in \rset^d$ collects the $d$ explanatory variables, and $Y_i$ is the binary response variable taking values in $\{-1, 1\}$. We assume that given $\{X_i, i \in [n]^\star \}$,  the
binary observations $\{Y_i, i \in [n]^\star \}$ are independent with
distribution
\begin{multline*}
 \{-1,1\} \ni  y_i \mapsto  \int_{\rset^d} (1+\exp(-y_i
\pscal{X_i}{z_i}))^{-1} \\ \times \frac{1}{\sqrt{2\pi}^d \sigma^d} \exp\left(-(2 \sigma^2)^{-1} \| z_i
- \theta \|^2 \right) \rmd z_i \eqsp.
\end{multline*}
In words, each
example $\# i$  has an individual regression vector
$Z_i$ in $\rset^d$ and given $Z_i$, the success probability $\PP(Y_i = 1 \mid Z_i)$ is
$(1+\exp(-\pscal{X_i}{Z_i}))^{-1}$. The regression vectors $Z_1,
\cdots, Z_n$ are  independent with  a Gaussian distribution $\mathcal{N}(\param, \sigma^2 \Id_d)$. 
$\theta$ is assumed to be unknown and $\sigma^2$ is known.

The objective is the estimation of $\param$ by maximizing the penalized log-likelihood criterion, with a ridge penalty $\mathrm{pen}(\param) \eqdef n \tau \| \param \|^2$, where $\tau>0$. By a change of variable, we obtain that the criterion to be minimized is (see \Cref{lem:changevar:penll})
\begin{multline*}
  F: \param \mapsto   
  - \frac{1}{n} \sum_{i=1}^n \log  \int_{\rset}   \frac{\exp\left( x \pscal{X_i}{\param}/(\sigma^2 \|X_i\|)  \right)}{1+\exp(-y_i \|X_i\| x
    )} \\
  \times \exp\left(- x^2 / (2\sigma^2)  \right)\rmd x + \|\param\|_{U}^2   \eqsp, 
\end{multline*}
where
\[
 U\eqdef \tau \Id_d + \frac{1}{2 \sigma^2} \frac{1}{n}\sum_{i=1}^n \frac{X_i X_i^\top}{
\|X_i\|^2} \eqsp.
\]
The following lemma shows that the minimizers of $F$ are in a
compact set $\mathcal{K}$ of $\rset^d$ thus implying that the
optimization problem can be constrained to $\mathcal{K}$. The proof is given in \Cref{sec:proof:lem:compactset}.
\begin{lemma}
  \label{lem:compactset}
The minimizers of $F$ are in the set $\mathcal{K} \eqdef \{\param \in \rset^d: \|\param\|^2 \leq (\ln 4) / \tau \}$.
  \end{lemma}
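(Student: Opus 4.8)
The plan is to bound $F$ from below by $-\log(\sqrt{2\pi}\,\sigma)+\tau\|\param\|^2$ and from above (via $F(0)$) by $-\log(\sqrt{2\pi}\,\sigma)+\ln 4$, and then read off the estimate at a minimizer. The only nontrivial point is that the Gaussian upper bound used for the data-fidelity term produces a quadratic form that cancels exactly against the non-$\tau$ part of the matrix $U$.

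First I would lower bound $F(\param)$. Since $(1+\exp(-y_i\|X_i\| x))^{-1}\le 1$ for every $x$, each inner integral is at most $\int_{\rset}\exp\!\big(x\,\pscal{X_i}{\param}/(\sigma^2\|X_i\|)-x^2/(2\sigma^2)\big)\,\rmd x$, and completing the square shows this equals $\sqrt{2\pi}\,\sigma\,\exp\!\big(\pscal{X_i}{\param}^2/(2\sigma^2\|X_i\|^2)\big)$. Applying $-\log$, averaging over $i$, and adding $\|\param\|_U^2$ gives
\[
F(\param)\ \ge\ -\log(\sqrt{2\pi}\,\sigma)\ -\ \frac{1}{2\sigma^2}\,\frac1n\sum_{i=1}^n\frac{\pscal{X_i}{\param}^2}{\|X_i\|^2}\ +\ \|\param\|_U^2.
\]
Expanding $\|\param\|_U^2=\param^\top U\param=\tau\|\param\|^2+\frac{1}{2\sigma^2}\,\frac1n\sum_{i=1}^n \pscal{X_i}{\param}^2/\|X_i\|^2$, the two sums cancel and one is left with
\[
F(\param)\ \ge\ -\log(\sqrt{2\pi}\,\sigma)+\tau\|\param\|^2.
\]
This exact cancellation is the crux of the argument; the remaining steps are routine.

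Next I would bound $\min F\le F(0)$ from above. At $\param=0$ the penalty term vanishes, and since $(1+\exp(-y_i\|X_i\| x))^{-1}\ge \tfrac12$ on $\{x:\ y_i x\ge 0\}$, each inner integral is at least $\tfrac12\int_{\{y_i x\ge 0\}}\exp(-x^2/(2\sigma^2))\,\rmd x=\sqrt{2\pi}\,\sigma/4$, whence $F(0)\le -\log(\sqrt{2\pi}\,\sigma/4)=-\log(\sqrt{2\pi}\,\sigma)+\ln 4$. Combining this with the previous display evaluated at a minimizer $\param_\star$ of $F$ yields $-\log(\sqrt{2\pi}\,\sigma)+\tau\|\param_\star\|^2\le F(\param_\star)\le F(0)\le -\log(\sqrt{2\pi}\,\sigma)+\ln 4$, i.e. $\tau\|\param_\star\|^2\le \ln 4$, which is exactly the assertion $\param_\star\in\mathcal{K}$.

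I do not expect a genuine obstacle here: the work is in checking the two elementary Gaussian computations (the completion-of-square identity and the half-line lower bound $\int_{\{y_i x\ge 0\}}\exp(-x^2/(2\sigma^2))\,\rmd x=\sqrt{2\pi}\,\sigma/2$) and in noticing the algebraic cancellation with $U$; one should also note in passing that $X_i\neq 0$ for all $i$ so that the change of variables defining $F$ is legitimate.
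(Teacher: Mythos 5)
Your proof is correct and follows essentially the same route as the paper: lower bound $F(\param)$ by $\tau\|\param\|^2-\tfrac12\ln(2\pi\sigma^2)$, upper bound $F(0)$ by $\ln 4-\tfrac12\ln(2\pi\sigma^2)$ via the same half-line trick ($1+\exp(-y_i\|X_i\|x)\le 2$ on $\{y_ix\ge 0\}$), and compare at a minimizer. The only cosmetic difference is in the lower bound: the paper simply observes that the likelihood is a product of probabilities in $\ooint{0,1}$ so the log-likelihood is negative, whereas you rederive the same fact in the transformed coordinates by an explicit Gaussian completion of the square whose quadratic term cancels against the non-$\tau$ part of $U$.
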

To solve this optimization problem, we propose two approaches: a
gradient one, solved in the original space $\param \in \rset^d$ (see
\Cref{sec:VMFB}); and an EM one, solved in the statistic space (see
\Cref{sec:EM}). The discussions in \Cref{sec:appli:Gdt} and
\Cref{sec:appli:EM} show that EM is a gradient approach for finding
the critical points of $s \mapsto F( U^{-1} s/2)$.

\subsection{A Gradient approach}
\label{sec:appli:Gdt}
We are interested in finding a critical point of $F$ in
$\mathcal{K}$. Equivalently, we want to solve
\[
0 \in \frac{1}{n} \sum_{i=1}^n G_i(\param) + \partial g(\param)
\]
where  $g$ is the $\{0, +\infty\}$-valued indicator function of the set $\mathcal{K}$ and 
\[
G_i(\param) \eqdef 2 U \param - \frac{X_i}{\sigma^2 \, \|X_i\|} \int_{\rset}  z \ \pi_{\param,i}(z) \rmd z \eqsp;
\]
$\pi_{\param,i}(z)$ is the probability density proportional to
\begin{equation} \label{eq:EM:pi}
\frac{\exp\left( z \pscal{X_i}{\param}/(\sigma^2 \|X_i\|)  - z^2 / (2\sigma^2)  \right)}{1+\exp(-y_i \|X_i\| z 
    )}  \eqsp.
\end{equation}
We apply \3PS\ with $B \eqdef \Id_q$ and
$\precond_i(\param,\Id_q) \eqdef -G_i(\param)$; note that $\prox_{\pas
  \, g}(\param) = \mathrm{argmin}_{x \in \mathcal{K}} \|x - \param
\|^2$.  $\precond_i$ is the sum of an explicit term and an integral
with no closed form: it will be approximated by a Monte Carlo method,
based on a Markov chain Monte Carlo (MCMC) sampler (see
\Cref{sec:MC:polyagamma} below). Therefore, $\delta_{t,k,i}$ will be a
biased random approximation.

\subsection{An EM approach}
\label{sec:appli:EM}
The criterion $F$ to be minimized  is of the form \eqref{eq:EM:LogLike} with $\Zset = \rset$, $\mu_{lv}(\rmd z) = \rmd z$ and
$p(Y_i,z; \theta)$ equal to
\[
\frac{\exp\left( z \pscal{X_i}{\param}/(\sigma^2 \|X_i\|)   - z^2 / (2\sigma^2)-  \| \param \|^2_{U} \right)}{1+\exp(-Y_i \|X_i\| z
    )}  \eqsp.
\]
The curved exponential family assumption on the complete data model is satisfied: $p(Y_i,z; \theta) = H(Y_i,z) \, \exp\left(\pscal{S(Y_i,z)}{\phi(\param)} - \psi(\param) \right)$ with $\phi(\param) \eqdef \param$,  $\psi(\param) \eqdef \| \param\|^2_{U}$ 
and
\[
S(Y_i,z) \eqdef z \, \frac{X_i}{ \sigma^2 \,\|X_i\|} \eqsp.
\]
From \Cref{sec:EM}, EM in the statistic space is of the form
\eqref{eq:problem-1}-\eqref{eq:problem-2}: it solves $0 \in n^{-1}
\sum_{i=1}^n \bar G_i(s) + \partial \bar g(s)$ where $\bar G_i(s) = B
\, G_i(B s)$, $B \eqdef U^{-1}/2$ and $\bar g(s)$ is the $\{0, +
\infty\}$-valued indicator function of the set $\{s \in \rset^d:
\map(s) \in \mathcal{K} \}$ where $\map(s) \eqdef B s$; it uses
\begin{equation} \label{eq:appli:precondEM}
\barprecond_i(s) \eqdef   \frac{X_i}{\sigma^2 \, \|X_i\|}  \int_\rset z \, \pi_{B s,i}(z) \rmd z - s \eqsp,
\end{equation}
and the metric induced by $\B(s) \eqdef B$. See
\Cref{sec:proof:appli:hinEM} for detailed computations. As in the
gradient approach, $\precond_i$ requires the expectation of the
distribution $\pi_{\cdot,i}$ (see \eqref{eq:EM:pi}) which has no
closed form. We will run \3PS\ with $B(s) \leftarrow B$ and a biased
random approximation of the $\barprecond_i(s)$'s (see
\Cref{sec:MC:polyagamma}); note that $ \prox_{\pas \, \bar g}^{B}(s)
= B^{-1} \, \mathrm{argmin}_{x \in \mathcal{K}} \left( (x-B s)^\top
B^{-1} (x- B s) \right)$.

\subsection{The MCMC approximation of $\barprecond_i$}
\label{sec:MC:polyagamma}
We discuss how to design an efficient MCMC sampler for the approximation of
\[
\mathcal{I}_i(\param) \eqdef \int_{\rset} z \, \pi_{\param,i}(z) \, \rmd z \eqsp, \quad \param \in \rset^d \eqsp,
\]
where $\pi_{\param,i}$ is defined, up to a normalizing constant, by \eqref{eq:EM:pi}.
By using an integration by parts and by applying \cite[Theorem 1]{polson:scott:windle:2013}, we show that a
data augmentation scheme is possible to approximate integrals
w.r.t. $\pi_{\param,i}(z)$. 
\begin{lemma}
  \label{lem:appli:MCMC:IPPandPolyagamma}
  For any $i \in [n]^\star$ and $\param \in \rset^d$, it holds
  \begin{multline*}
  \mathcal{I}_i(\param)  = \pscal{\frac{X_i}{\|X_i\|}}{\param}  \\
  + y_i \|X_i\| \sigma^2 \int_\rset \int_0^{+\infty} \frac{\bar \pi_{\param,i}(z,\omega)}{1+\exp\left( y_i \|X_i\| z \right)}  \, \rmd z \rmd \omega
  \eqsp,
  \end{multline*}
where $\bar \pi_{\param,i}(z,\omega)$ is a probability density on
  $\rset \times \ooint{0,+\infty}$. The conditional distribution of $z$ given $\omega$  is a Gaussian distribution with parameters
  \[
\frac{\pscal{X_i}{\param}/\|X_i\| + y_i  \|X_i\| \sigma^2/2}{1+\omega \sigma^2 \|X_i\|^2} \eqsp, \qquad \frac{\sigma^2}{1+\omega \sigma^2 \|X_i\|^2} \eqsp;
  \]
 the conditional distribution of $\omega$ given $z$ is a Polya-Gamma distribution with parameters $(1, \|X_i\| z)$.
  \end{lemma}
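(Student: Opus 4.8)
The plan is to establish the identity in two independent moves: first, an integration-by-parts step that turns the first moment $\mathcal{I}_i(\param)$ into an explicit term plus an integral involving the logistic factor, and second, a Polya-Gamma data augmentation that represents the resulting logistic weight as the $\omega$-marginal of a tractable joint density. Throughout, fix $i$ and $\param$, write $a \eqdef \pscal{X_i}{\param}/(\sigma^2\|X_i\|)$, $b\eqdef y_i\|X_i\|$, and recall from \eqref{eq:EM:pi} that $\pi_{\param,i}(z) \propto \exp(az - z^2/(2\sigma^2)) / (1+\exp(-bz))$. I would first note that the Gaussian-type factor $\exp(az - z^2/(2\sigma^2))$ is, up to a normalizing constant, the density of $\mathcal{N}(\sigma^2 a, \sigma^2)$, whose mean is $\sigma^2 a = \pscal{X_i}{\param}/\|X_i\|$; this is the source of the leading term $\pscal{X_i/\|X_i\|}{\param}$ in the claim.

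For the integration-by-parts step, I would write $z\,\pi_{\param,i}(z)$ using $z \exp(-z^2/(2\sigma^2)) = -\sigma^2 \frac{d}{dz}\exp(-z^2/(2\sigma^2))$, and integrate by parts against the remaining factor $\exp(az)/(1+\exp(-bz))$. The boundary terms vanish by Gaussian decay. Differentiating the product $\exp(az)/(1+\exp(-bz))$ produces two contributions: one proportional to $a$ times $\pi_{\param,i}(z)$ (which, after multiplying by $\sigma^2$, gives exactly $\sigma^2 a = \pscal{X_i}{\param}/\|X_i\|$ once the density integrates to one), and one from differentiating $(1+\exp(-bz))^{-1}$, namely $-b\exp(-bz)(1+\exp(-bz))^{-2}$. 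A short algebraic rearrangement $\exp(-bz)/(1+\exp(-bz)) = 1 - 1/(1+\exp(-bz))$ — together with the identity $\exp(az)(1+\exp(-bz))^{-1} = \exp((a-b)z)(1+\exp(bz))^{-1} \cdot e^{bz}$, or more cleanly by just tracking the factor $(1+\exp(bz))^{-1}$ — should collapse this second contribution into the advertised integral $y_i\|X_i\|\sigma^2 \int \bar\pi_{\param,i}(z)/(1+\exp(y_i\|X_i\| z))\,\rmd z$ (before the $\omega$-augmentation), where $\bar\pi_{\param,i}(z)$ is $\pi_{\param,i}$ rewritten with the logistic factor flipped via $\exp(az)/(1+\exp(-bz)) = \exp((a+b)z)/(1+\exp(bz))$, absorbing the $e^{bz}$ into the Gaussian exponent and shifting the mean to $\pscal{X_i}{\param}/\|X_i\| + y_i\|X_i\|\sigma^2$. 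I would then complete the square in this shifted exponent to read off that, before augmentation, the $z$-marginal is Gaussian with mean and variance as stated in the $\omega=0$ limit of the displayed formulas.

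For the Polya-Gamma step, I would invoke \cite[Theorem 1]{polson:scott:windle:2013}: $(1+\exp(c))^{-1} = \tfrac12 e^{-c/2}\int_0^\infty \exp(-\omega c^2/2)\,p_{\mathrm{PG}(1,0)}(\omega)\,\rmd \omega$, applied with $c = y_i\|X_i\| z$. Substituting this into the single remaining logistic factor $(1+\exp(bz))^{-1}$ inside $\bar\pi_{\param,i}$ introduces the extra factor $\exp(-\omega b^2 z^2/2) = \exp(-\omega\sigma^2\|X_i\|^2 z^2/(2\sigma^2))$, which combines with the Gaussian exponent $-z^2/(2\sigma^2)$ to give variance $\sigma^2/(1+\omega\sigma^2\|X_i\|^2)$; completing the square once more gives the stated conditional mean. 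The factor $e^{-c/2} = \exp(-y_i\|X_i\| z/2)$ shifts the linear coefficient by $-y_i\|X_i\|/2$, so that after the square-completion the mean becomes $(\pscal{X_i}{\param}/\|X_i\| + y_i\|X_i\|\sigma^2/2)/(1+\omega\sigma^2\|X_i\|^2)$, matching the claim. Normalizing the resulting joint integrand in $(z,\omega)$ defines $\bar\pi_{\param,i}(z,\omega)$, and the Gaussian conditional of $z\mid\omega$ and the Polya-Gamma$(1,\|X_i\|z)$ conditional of $\omega\mid z$ (Polya-Gamma tilting: $p_{\mathrm{PG}(1,0)}(\omega)\exp(-\omega c^2/2) \propto p_{\mathrm{PG}(1,c)}(\omega)$) follow by inspection of the joint density.

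The main obstacle I expect is bookkeeping in the integration-by-parts step: keeping the two types of logistic factor $(1+\exp(-bz))^{-1}$ and $(1+\exp(bz))^{-1}$ straight, correctly absorbing the exponential pieces $e^{\pm bz}$ and $e^{-bz/2}$ into the Gaussian exponent at each stage, and tracking how every shift of the linear coefficient propagates through the successive square-completions into the final conditional mean. None of the individual manipulations is deep, but a single sign or factor-of-two slip anywhere will break the match with the stated Gaussian parameters, so I would organize the computation by carrying an explicit "linear coefficient / quadratic coefficient" pair through each transformation rather than re-deriving the normal density from scratch.
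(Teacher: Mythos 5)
Your plan is correct and follows essentially the same route as the paper: an integration by parts that extracts the leading term $\pscal{X_i/\|X_i\|}{\param}$ and produces the extra factor $(1+\exp(y_i\|X_i\|z))^{-1}$, followed by the Polya--Gamma augmentation of \cite[Theorem 1]{polson:scott:windle:2013} and two square-completions to read off the stated conditionals. The only (cosmetic) difference is that the paper first completes the square to center the Gaussian at $\pscal{X_i}{\param}/\|X_i\|$ and integrates by parts only against the logistic factor via $z = (z-a_i)+a_i$, whereas you differentiate the full product $e^{az}/(1+e^{-bz})$; both yield the same identity, and your flagged bookkeeping risks (e.g.\ the sign of the derivative of $(1+e^{-bz})^{-1}$, and the fact that the $y_i\|X_i\|\sigma^2/2$ shift in the conditional mean only appears after the $e^{-c/2}$ factor of the PG identity) are exactly where care is needed.
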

The proof is given in \Cref{sec:proof:lem:MCMC:IPP}. Therefore, a
Monte Carlo approximation of integrals w.r.t. $\pi_{\param,i}$ are
obtained from a Gibbs sampler targeting the distribution $\bar
\pi_{\param,i}(z,\omega)$: it produces a sequence of pairs $\{(Z_r,
\Omega_r), r \geq 0 \}$ and only the $Z_r$'s are retained for the
Monte Carlo approximation. For example, $\barprecond_i(s)$ given by
\eqref{eq:appli:precondEM} can be approximated by
\begin{multline} \label{eq:appli:precondMC}
  \barprecond_i(s) \approx -s + \frac{X_i}{\sigma^2 \|X_i\|^2} \pscal{X_i}{B s} \\
  + y_i \|X_i\| \frac{1}{m} \sum_{r=1}^m \left(1 + \exp(y_i \|X_i\| Z_r^{s,i}) \right)^{-1}  \eqsp.
\end{multline}

This Gibbs sampler is uniformly ergodic (see \cite[Proposition
  3.1]{choi:hobert:2013}); consequently, upon noting that $z \mapsto
H_i(z) \eqdef (1 + \exp( y_i \|X_i\| z))^{-1}$ is bounded by one
uniformly in $i$ and $z$, the conditions \Cref{hyp:MCMC} in
\Cref{sec:hyp:montecarlo} are verified with $U$ equal to the constant
function $1$ and with a geometric convergence rate $\rho(r) \eqdef
\upsilon^r$ for some $\upsilon \in \ooint{0,1}$ (remember that $\Sset$
is a compact set in our application); details are provided in
\Cref{sec:appli:uniformergoPolyaGamma}.

Therefore, \Cref{hyp:error} is verified and the rates
$\nbrmc_{t,k+1}$, $\nbrmcv_{t,k+1}$ and $\bar \nbrmcv_{t,k+1}$ are
equal, and equal to the number of points in the Monte Carlo sum (see
\Cref{prop:check:hyp:error}).

\subsection{Numerical illustrations}
\label{sec:numerical:appli}
Let us run \3PS for minimizing the criterion $F$; based on previous results comparing variance reduced Expectation Maximization algorithms and variance reduced Gradient algorithms (see e.g. \cite[section 4]{chen:etal:2018}), we restrict our attention to the EM
approach. In this numerical application, $n = 24 \, 989$ and $d = 21$; we choose $\tau = 1$ and $\sigma^2 = 0.05$.

{\bf The data set.} The $n$ pairs $(y_i, X_i)$ are built from the
MNIST data set. The $13 \, 007$ examples labeled $y_i=-1$ are the
examples labeled $1$ or $7$ in the MNIST training data set; the $11
\, 982$ examples labeled $y_i = 1$ are the examples labeled $3$ or
$8$ in the MNIST training data set. The covariates $X_i$ are obtained
as follows. Let $\mathsf{X}^{\mathrm{im}}$ be the $784 \times n$
matrix collecting the $784$ pixels for each image. The pixels take
values in $\ccint{0,1}$. Then the rows of $\mathsf{X}^{\mathrm{im}}$
are centered; by a PCA, each image is reduced to a vector in
$\rset^{20}$. This yields $\mathsf{X}^{\mathrm{red}} \in \rset^{20
  \times n}$. Finally, $\mathsf{X}^{\mathrm{red}}$ is augmented with a
row of ones, yielding $\mathsf{X} \in \rset^{21 \times n}$. The
columns of $\mathsf{X}$ are the $X_i$'s.

{\bf The algorithms.} We compare four algorithms. {\tt EM} denotes the
SAEM algorithm (\citet{lavielle:delyon:moulines:1999}) combined with a proximal step: each iteration
processes the full data set so that there is one iteration of {\tt EM}
per epoch:
\[
\hatS_{r+1}^{\tt EM} \eqdef \prox_{\pas \,g}^B(\hatS_r^{\tt EM} + \frac{\pas}{n} \sum_{i=1}^n \widehat{\barprecond_i(\hatS_r^{\tt EM}))} \eqsp.
\]
 {\tt Online EM} is the algorithm given by \citet{cappe:moulines:2009}
 combined with a proximal step; each iteration processes $\lbatch$
 examples and below, we will run $\kin{} \eqdef \lceil{n/\lbatch \rceil}$ iterations per epoch:
\[
\hatS_{r+1}^{\tt OEM} \eqdef \prox_{\pas \,g}^B(\hatS_r^{\tt OEM} + \frac{\pas}{ \lbatch} \sum_{i \in \batch_{r+1}} \widehat{\barprecond_i(\hatS_r^{\tt 0EM}))} \eqsp.
\]
For {\tt EM} and {\tt Online EM},
$\widehat{\barprecond_i(\hatS_t^\bullet)}$ is a Monte Carlo approximation
of $\barprecond_i(\hatS_t^\bullet)$ computed with $m^t$ points.  {\tt
  \3PS} is \Cref{algo:3PS}; we choose $\kin{t} = \kin{}$ and $\kin{}
= \lceil n/\lbatch \rceil$ so that one epoch corresponds to the
$\kin{}$ inner loops; we choose $\lbatch_t' = n$ so that the
initialization of each outer loop is one epoch; the $\delta_{t,k,i}$
are computed by Monte Carlo sums (see \eqref{eq:appli:precondMC}) with
$m^0$ points for $\delta_{t,0,i}$ and $m^t$ points for
$\delta_{t,k+1,i}$; since $\hatS_{t,0} = \hatS_{t,-1}$, we set
$\delta_{t,1,i} =0$ for all $i$, so that $\Smem_{t,1} = \Smem_{t,0} = n^{-1}
\sum_{i=1}^n \delta_{t,0,i}$.  Finally, \3PS and \3PS-{\tt corr}
differ as follows: the Monte Carlo approximation $\delta_{t,k+1,i}$
necessitates a Monte Carlo approximation of $\barprecond_i(\hatS_{t,k})$
and one of $\barprecond_i(\hatS_{t,k-1})$. In \3PS, the Monte Carlo
approximations are based on two independent chains (see
\eqref{eq:appli:precondMC}) while in \3PS-{\tt corr} the chains are
correlated. \\ All the algorithms are initialized at the null vector
$\hatS_\init = 0 \in \rset^d$. The step size is equal to $\pas = 0.4$ during
the first six epochs and then equal to $\pas = 0.1$. The length of all
the paths is $20$ epochs. On all the figures except
Figure~\ref{fig:iterate}, we report a mean value computed over $25$
independent runs of each algorithm; the shadowed area is delimited by
the minimal and maximal value of the displayed criterion over these runs.

{\bf Analyses.} Most of the comparisons are based on the evolution of
\[
\Delta_{t,k+1} \eqdef \frac{\|\prox_{\pas \,g}^B(\hatS_{t,k} + \pas \, \Smem_{t,k+1}) -
  \hatS_{t,k} \|_B^2 }{\pas^2}
\]
as a function of the number of epochs; this criterion is an
approximation of $\Delta_{t,k+1}^\star$ (see \eqref{eq:def:Deltastar})
which can not be computed here since $\barprecond$ has no closed form in
this application.  The criterion $\Delta_{t,k+1}$ for \3PS and
\3PS-{\tt corr}, is compared to $\Delta_r^{\tt EM}$ defined by
\[
\frac{\|\prox_{\pas \,g}^B(\hatS_r^{\tt EM} + \pas n^{-1} \sum_{i=1}^n \widehat{\barprecond_i(\hatS_r^{\tt EM}))} -
  \hatS_{r}^{\tt EM} \|_B^2}{\pas^2} \eqsp;
\]
and to $\Delta_r^{\tt OEM}$ defined by
\[
\frac{\|\prox_{\pas \,g}^B(\hatS_r^{\tt OEM} + \pas \lbatch^{-1}
  \sum_{i \in \batch_{r+1}} \widehat{\barprecond_i(\hatS_r^{\tt OEM}))} -
  \hatS_{r}^{\tt OEM} \|_B^2}{\pas^2} \eqsp.
\]
The best algorithm will have the smallest value of
$\Delta_{t,k+1}$.

We first study the role of some design parameters of
\3PS, such as the number of Monte Carlo points when computing
$\delta_{t,0,i}$ (denoted by $m^0$) and $\delta_{t,k+1,i}$ (denoted by
$m^t$) and the balance between $\kin{}$ and $\lbatch$ which satisfy $\kin{} \lbatch \approx n$.  \\ On
Figure~\ref{fig:nbrmc}, two strategies are chosen: first, $m^0 = m^t =
2 \lceil \sqrt{n} \rceil$; then $m^0 = m^t = 5 \lceil \sqrt{n}
\rceil$; in all cases, $\kin{} = \lceil \sqrt{n}/10 \rceil$ and
$\lbatch = \lceil n/\kin{} \rceil$.  For comparison, {\tt EM} and {\tt
  Online EM} are also run, with a number of Monte Carlo point equal to
$m^t$ at each iteration. 

\definecolor{olivegreen}{rgb}{0.47, 0.67,
      0.19}
\begin{figure} [h]
  \includegraphics[width=\columnwidth]{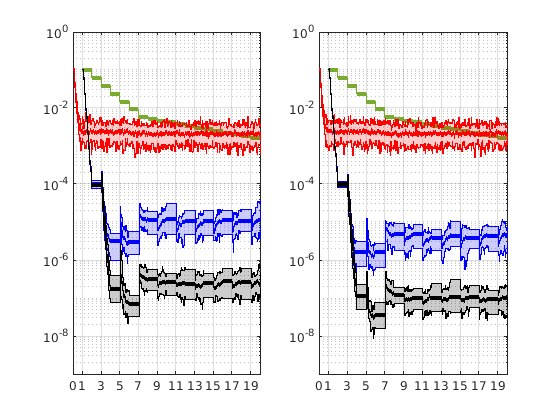}
\caption{Different strategies for the number of Monte Carlo points
  when approximating $\barprecond_i(s)$ - see
  \eqref{eq:appli:precondMC}. Evolution of \textcolor{olivegreen}{$\Delta^{{\tt EM}}_r$ in green},
  \textcolor{red}{$\Delta_r^{\tt OEM}$ in red}, 
  \textcolor{blue}{$\Delta_{t,k+1}$ for {\tt 3P-SPIDER} in blue}  and $\Delta_{t,k+1}$ for
            {\tt 3P-SPIDER corr} in black, as a function of the number
            of epochs.  [left] $m^0 = m^t = 2 \lceil \sqrt{n}
\rceil$, [right] $m^0 = m^t = 5 \lceil \sqrt{n}
\rceil$.}
\label{fig:nbrmc}
  \end{figure}

On Figure~\ref{fig:nbrinner}, the case when
$\kin{} = \lceil \sqrt{n}/10 \rceil$ is compared to the case $\kin{} =
\lceil \sqrt{n}/2 \rceil$; in both cases, $\lbatch = \lceil n/\kin{}
\rceil$ and $m^0 = m^t = 2 \lceil \sqrt{n} \rceil$.
\begin{figure} [h]
  \includegraphics[width=\columnwidth]{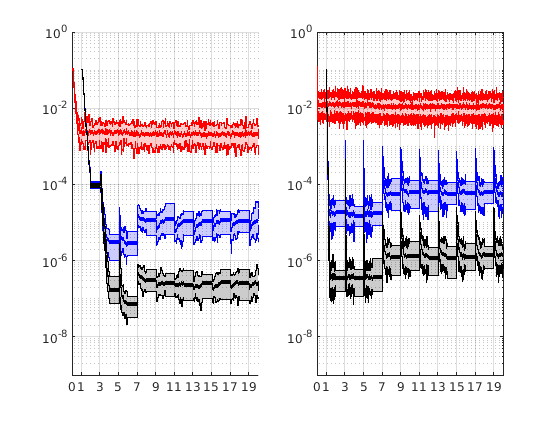}
\caption{Number of inner loops per epoch. Evolution of
  \textcolor{red}{$\Delta_r^{\tt OEM}$ in red},
  \textcolor{blue}{$\Delta_{t,k+1}$ for {\tt 3P-SPIDER} in blue} and
  $\Delta_{t,k+1}$ for {\tt 3P-SPIDER corr} in black, as a function of
  the number of epochs.  [left] $\kin{} = \lceil \sqrt{n}/10 \rceil$
  and $\lbatch = \lceil n/\kin{} \rceil$. [right] $\kin{} = \lceil
  \sqrt{n}/2 \rceil$ and $\lbatch = \lceil n/\kin{} \rceil$.}
\label{fig:nbrinner}
  \end{figure}

 Each algorithm returns a sequence of
points in the $s$-space, from which a sequence of points in the
$\param$-space is deduced through the formula $\param = \map(s) = B s
\in \rset^d$.  On Figure~\ref{fig:iterate}, three components of this
$\theta$-sequence are displayed, versus the number of epochs.  \\ 

\begin{figure} [h]
  \includegraphics[width=\columnwidth]{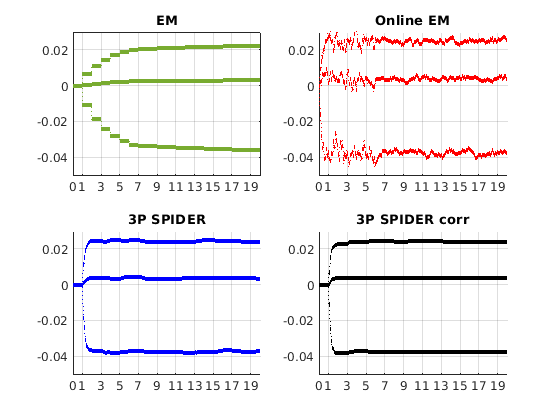}
\caption{Estimation of three parameters. Evolution of the three
  components of $\param$ by \textcolor{olivegreen}{{\tt
        EM} in green} (top, left), \textcolor{red}{{\tt
      OEM} in red} (top, right), \textcolor{blue}{
   {\tt 3P-SPIDER} in blue} (bottom, left) and 
 {\tt 3P-SPIDER corr} in black (bottom, right), as a function of
  the number of epochs.}
\label{fig:iterate}
\end{figure}
Finally, we also display on Figure~\ref{fig:normiterate} the evolution
of the squared norm of the iterates $\| \hatS_{t,k}\|^2$ obtained by
\3PS and \3PS-{\tt corr}, and $\|\hatS_r^{\tt OM}\|^2$ and
$\|\hatS_r^{\tt OEM}\|^2$ obtained resp. by {\tt EM} and {\tt Online
  EM}. They are plotted as a function of the epochs.

\begin{figure} [h]
  \includegraphics[width=\columnwidth]{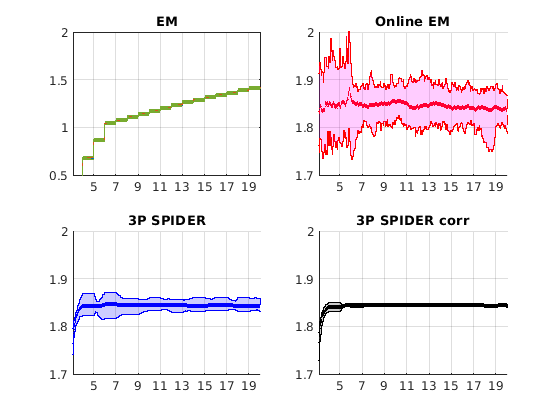}
\caption{Squared norm of the iterates. Evolution of \textcolor{olivegreen}{$\| \hatS^{{\tt EM}}_r\|^2$ in green} (top, left),
  \textcolor{red}{$\| \hatS^{{\tt OEM}}_r\|^2$ in red} (top, right), 
  \textcolor{blue}{$\| \hatS_{t,k}\|^2$ for {\tt 3P-SPIDER} in blue}  (bottom, left) and $\| \hatS_{t,k}\|^2$ for
            {\tt 3P-SPIDER corr} in black (bottom, right), as a function of the number
            of epochs. 
  }
\label{fig:normiterate}
  \end{figure}

{\bf Conclusions.} {\tt EM} has a slow convergence rate and even fails
to converge before $20$ epochs contrary to the other algorithms (see
e.g. Figure~\ref{fig:normiterate}): one update of the iterate per
epoch is not enough especially during the first iterations when more updates even based on part of the data set is a better strategy (see
e.g. the behavior of {\tt Online EM}, which contains $\kin{}$ updates
per epoch). \\ {\tt Online EM}, \3PS and \3PS-{\tt corr} process part
of the data set at each iteration; compared to {\tt Online EM}, the
\3PS's contain a variance reduction. All the plots illustrate the
benefit of this variance reduction, which reduces the variability at
convergence. \\ The choice of $\pas$ impacts this variability: see
e.g. Figures~\ref{fig:nbrmc}, \ref{fig:nbrinner} and
Figure~\ref{fig:normiterate} where a change occurs at epoch $\# 7$
(remember that from epoch $2\ell$ to $2\ell+1$, {\tt Online EM} runs
$\kin{}$ updates of the iterates while the \3PS's do not update the
iterate since they compute $\Smem_{t,0}$). \\ \3PS-{\tt corr} improves
on \3PS. The control variate has a larger impact when the correlation
is increased, as illustrated by all plots. It decreases the
variability introduced by the mini-batches ($\lbatch < n$) and the
variability introduced by the Monte Carlo approximation
$\delta_{t,k+1,i}$.  \\ Given the budget of $n$ examples processed per
outer loops, Figure~\ref{fig:nbrinner} shows that at convergence, the
accuracy is improved by larger mini batch sizes and therefore a
smaller number of inner loops. Not surprisingly, a larger number of
Monte Carlo points decreases the variability at convergence (see
Figure~\ref{fig:nbrmc}).

\section{Proof of \Cref{sec:algo:3P}}
\label{sec:proof1}
\subsection{Proof of \Cref{lem:ExistenceUniqueness:prox}} \label{sec:proof:lem:Prox}
\Cref{lem:ExistenceUniqueness:prox2} collects the two statements of
\Cref{lem:ExistenceUniqueness:prox} and a third property.
\begin{lemma} 
\label{lem:ExistenceUniqueness:prox2} Assume \Cref{hyp:g}. 
\begin{enumerate}
\item For any $\pas >0$, $B \in \Pmatrix_+^q$ and $s \in \rset^q$, the
  optimization problem  \eqref{eq:varmatric:prox} has a unique
  solution, characterized as the unique point ${\sf p} \in \Sset$
  satisfying $- \pas^{-1} \, B ({\sf p} -s) \in \partial g({\sf p})$.
\item  \label{lem:ExistenceUniqueness:prox:item2} For any $\pas >0$, $B
  \in \Pmatrix_+^q$, $s \in \Sset$ and $h \in \rset^q$,
\begin{equation} \label{eq:fixedpoint}
s = \prox_{\pas g}^{B}(s + \pas h) \quad \text{iff} \quad  B h \in  \partial g(s). 
\end{equation}
\item Let $\pas >0$ and $B \in \Pmatrix_+^q$. The operator
  $\prox_{\pas g}^B$ is firmly nonexpansive; this implies that for any
  $s,s' \in \rset^q$,
  \begin{multline*}
  \| \prox_{\pas g}^{B}(s') - \prox_{\pas g}^{B}(s) \|_B^2 \\
  \leq
 \pscal{\prox_{\pas g}^{B}(s') - \prox_{\pas g}^{B}(s)}{s'-s}_B \eqsp.
    \end{multline*}
\end{enumerate}
\end{lemma}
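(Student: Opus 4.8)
The plan is to establish item~1 first and then read off items~2 and~3 from it. For item~1, fix $\pas>0$, $B\in\Pmatrix_+^q$, $s\in\rset^q$ and consider $\Phi\eqdef\pas\,g(\cdot)+\frac{1}{2}\|\cdot-s\|_B^2$. Under \Cref{hyp:g}, $g$ is proper, lsc and convex, so $\Phi$ is proper (its domain is $\Sset\neq\emptyset$), lsc, and strongly convex, the modulus being inherited from $\frac{1}{2}\|\cdot-s\|_B^2$ since $B$ is positive definite; strong convexity also makes $\Phi$ coercive. Hence $\argmin_{\rset^q}\Phi$ exists and is unique. For the characterization, I would invoke Fermat's rule $0\in\partial\Phi(\mathsf p)$ together with the subdifferential sum rule: the summand $\frac{1}{2}\|\cdot-s\|_B^2$ is real-valued, convex and differentiable on all of $\rset^q$, with gradient $B(\mathsf p-s)$, so $\partial\Phi(\mathsf p)=\pas\,\partial g(\mathsf p)+B(\mathsf p-s)$ with no qualification condition to check. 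Thus $\mathsf p$ is the minimizer iff $-\pas^{-1}B(\mathsf p-s)\in\partial g(\mathsf p)$; and such a $\mathsf p$ lies in $\Sset$ automatically, since $g(\mathsf p)=+\infty$ would give $\Phi(\mathsf p)=+\infty$.

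For item~2, I would apply item~1 with base point $s+\pas h$ in place of $s$: for $s\in\Sset$, the identity $s=\prox_{\pas g}^{B}(s+\pas h)$ holds iff $s$ is the unique minimizer of $\pas\,g(\cdot)+\frac{1}{2}\|\cdot-(s+\pas h)\|_B^2$, iff $-\pas^{-1}B(s-(s+\pas h))\in\partial g(s)$, which simplifies to $Bh\in\partial g(s)$. For item~3, set $\mathsf p\eqdef\prox_{\pas g}^{B}(s)$ and $\mathsf p'\eqdef\prox_{\pas g}^{B}(s')$; item~1 gives $-\pas^{-1}B(\mathsf p-s)\in\partial g(\mathsf p)$ and $-\pas^{-1}B(\mathsf p'-s')\in\partial g(\mathsf p')$. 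Monotonicity of $\partial g$ (the function $g$ being convex) then yields $\pscal{\mathsf p-\mathsf p'}{-\pas^{-1}B(\mathsf p-s)+\pas^{-1}B(\mathsf p'-s')}\geq 0$; multiplying by $\pas>0$ and rearranging gives $\|\mathsf p-\mathsf p'\|_B^2\leq\pscal{\mathsf p-\mathsf p'}{s-s'}_B$, which is the announced firm nonexpansiveness relative to the $B$-metric. Alternatively this last item is a direct specialization of standard results on proximity operators, e.g.~\cite{combettes:vu:2014,bauschke:combettes:2011}.

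The computations are entirely routine; the only point requiring a word of care is the use of the sum rule for subdifferentials in item~1, but it applies unconditionally here because the quadratic term is real-valued and continuous on the whole space, so there is no constraint qualification to verify against the domain of $g$. I do not expect any genuine obstacle beyond this bookkeeping.
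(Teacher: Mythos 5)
Your proposal is correct and follows essentially the same route as the paper: the paper simply cites Hiriart-Urruty--Lemar\'echal (Chapter XV, Lemma 4.1.1 and Theorem 4.1.4) for items 1 and 3 and derives item 2 from the characterization, while you write out the standard arguments behind those citations (strong convexity for existence/uniqueness, Fermat's rule with the unconditional sum rule for the characterization, and monotonicity of $\partial g$ for firm nonexpansiveness in the $B$-metric). All steps check out, including the remark that the sum rule needs no qualification condition because the quadratic term is finite and differentiable on all of $\rset^q$.
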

\begin{proof}
  Existence, uniqueness and characterization are established in
  \citet[Chapter XV, Lemma 4.1.1]{hiriarturruty:lemarechal:1996}.  The
  statement \eqref{eq:fixedpoint} follows from the characterization;
  note that $\prox_{\pas g}^B(s) \in \Sset$ for any $s \in \rset^q$.
  The firmly nonexpansive property is a consequence of \citet[Chapter
    XV, Theorem 4.1.4]{hiriarturruty:lemarechal:1996}.
\end{proof}

\section{Proof of \Cref{sec:theory}}
\label{sec:proof2}
\subsection{Notations}
\label{sec:def:filtrations}
Define for any $s \in \Sset$,
 \[
 \barprecond(s) \eqdef \frac{1}{n} \sum_{i=1}^n \barprecond_i(s) \eqsp, \qquad
 \barprecond_\batch \eqdef \frac{1}{\lbatch} \sum_{i \in \batch} \barprecond_i
 \eqsp,
 \]
 where $\batch$ is an $n$-tuple of elements of $[n]^\star$ (with or
 without multiplicity) of cardinal $\lbatch$.

All the random variables are defined on a probability space
$(\Omega, \mathcal{A}, \PP)$. It is endowed with the following
filtrations for $t \geq 0$ and $k \geq 0$,
 \begin{align*}
   \F_{0,\kin{0}} & \eqdef \sigma(\hatS_{\mathrm{init}}), \\
   \F_{t,0}
                  & \eqdef \F_{t-1, \kin{t-1}} \bigvee \sigma\left(\batch_{t,0}, \delta_{t,0,i} \ \text{for all} \ i \right),  \\
   \F_{t,k+\frac{1}{2}} & \eqdef \F_{t,k} \bigvee \sigma( \batch_{t,k+1}), \\
   \F_{t,k+1} & \eqdef \F_{t,k+\frac{1}{2}} \bigvee \sigma\left(\delta_{t,k+1,i} \ \text{for all} \ i \in \batch_{t,k+1} \right) \eqsp.
\end{align*}

 For any
 $t \in [\kout]^\star$, set
\begin{align*}
  \mathcal{E}_t & \eqdef   \Smem_{t,0} - 
                  \barprecond(\hatS_{t,0})   = 
                  \frac{1}{\lbatch'_t} \sum_{i \in
                    \batch_{t,0}} \delta_{t,0,i} -  \barprecond(\hatS_{t,0}) \eqsp.
\end{align*}
$\mathcal{E}_t$ is the error when replacing the full sum using exact
terms $\barprecond_i(\hatS_{t,0})$, with a possibly subsum of size
$\lbatch_t' <n$ using approximations of
$\barprecond_i(\hatS_{t,0})$.
Remember that
\[
\xi_{t,k+1,i} \eqdef \delta_{t,k+1,i} - \barprecond_i(\hatS_{t,k}) +
                 \barprecond_i(\hatS_{t,k-1})  \eqsp,
                 \]
                 and
                 \begin{align*}
                 \mu_{t,k+1,i} & \eqdef \PE\left[ \xi_{t,k+1,i}  \vert \F_{t,k+1/2} \right] \eqsp, \\
                 \sigma^2_{t,k+1,i} &\eqdef  \PE\left[ \| \xi_{t,k+1,i}  -  \mu_{t,k+1,i}  \|^2 \vert \F_{t,k+1/2} \right] \eqsp.
                 \end{align*}
                 Finally, set
                 \[ 
  \eta_{t,k+1}  \eqdef  \frac{1}{\lbatch} \sum_{i \in
                 \batch_{t,k+1}} \xi_{t,k+1,i}.
\]
Throughout the proof, we will use the shorthand notation
\[
B_{t,k} \eqdef \B(\hatS_{t,k}) \eqsp.
\]

\subsection{Preliminary lemmas}
\begin{lemma}
  \label{lem:batch} Let $\batch$ be a batch of $[n]^\star$ of size
  $\lbatch$, sampled at random (with or without replacement).
\begin{enumerate}
\item For any family $\{f_1, \cdots, f_n \}$, $ \PE\left[ \lbatch^{-1}
  \sum_{i \in \batch} f_i \right] = n^{-1} \sum_{i=1}^n f_i$.
\item For any family $\{f_1, \cdots,  f_n \}$,
\begin{multline*}
  \PE\left[ \Big \| \frac{1}{\lbatch} \sum_{i \in \batch} f_i -
    \frac{1}{n}\sum_{i=1}^n f_i \Big \|^2 \right] \\ \leq
  \frac{1}{\lbatch \, n} \sum_{i=1}^n \| f_i - \frac{1}{n}\sum_{j=1}^n
  f_j \|^2 \eqsp.
  \end{multline*}
\item Assume \Cref{hyp:globalL}.  For any $s,s' \in \Sset$, it holds
  \begin{multline*}
    \PE\left[ \Big \| \left\{ \barprecond_\batch(s) - \barprecond_\batch(s')
      \right\} - \left\{ \barprecond(s) - \barprecond(s')
      \right\} \Big \|^2 \right]
    \\
    \leq \frac{1}{\lbatch}\left(L^2 \|s-s'\|^2 - \|  \barprecond(s) -
       \barprecond(s')\|^2 \right) \eqsp.
\end{multline*} 
\end{enumerate}
\end{lemma}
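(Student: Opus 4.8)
The plan is to prove the three assertions in order, deriving (iii) from (ii) by an elementary substitution and (ii) from the argument behind (i).

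First I would settle the unbiasedness claim (i). Writing $\batch=(I_1,\dots,I_\lbatch)$, in the with-replacement case the draws $I_\ell$ are independent and uniform on $[n]^\star$, so $\PE[f_{I_\ell}]=n^{-1}\sum_{j=1}^n f_j$ for every $\ell$ and linearity of expectation concludes; in the without-replacement case each $j\in[n]^\star$ lies in $\batch$ with probability $\lbatch/n$, so $\PE[\lbatch^{-1}\sum_{i\in\batch}f_i]=\lbatch^{-1}\sum_{j=1}^n\PP(j\in\batch)\,f_j=n^{-1}\sum_{j=1}^n f_j$.

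Next, for (ii) I would center the summands: set $\bar f\eqdef n^{-1}\sum_{j=1}^n f_j$ and $g_i\eqdef f_i-\bar f$, so $\sum_{i=1}^n g_i=0$ and $\lbatch^{-1}\sum_{i\in\batch}f_i-\bar f=\lbatch^{-1}\sum_{\ell=1}^\lbatch g_{I_\ell}$. Expanding $\PE[\|\lbatch^{-1}\sum_{\ell}g_{I_\ell}\|^2]$ as a double sum, the $\lbatch$ diagonal terms contribute $\lbatch^{-1}n^{-1}\sum_{j=1}^n\|g_j\|^2$, while each of the $\lbatch(\lbatch-1)$ off-diagonal terms equals $\PE[\pscal{g_{I_\ell}}{g_{I_{\ell'}}}]$ for two distinct draws. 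With replacement this equals $\|\bar g\|^2=0$; without replacement it equals $\frac{1}{n(n-1)}\sum_{i\ne j}\pscal{g_i}{g_j}=-\frac{1}{n(n-1)}\sum_{i=1}^n\|g_i\|^2\leq 0$, using $\|\sum_i g_i\|^2=0$. In either case the off-diagonal contribution is nonpositive, giving $\PE[\|\lbatch^{-1}\sum_{i\in\batch}g_i\|^2]\leq\frac{1}{\lbatch n}\sum_{j=1}^n\|g_j\|^2$ (indeed, without replacement one gets the sharper factor $(n-\lbatch)/(n-1)\leq 1$).

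Finally, (iii) follows by applying (ii) to $f_i\eqdef\barprecond_i(s)-\barprecond_i(s')$, for which $\lbatch^{-1}\sum_{i\in\batch}f_i=\barprecond_\batch(s)-\barprecond_\batch(s')$ and $\bar f=\barprecond(s)-\barprecond(s')$. The identity $\sum_{i=1}^n\|f_i-\bar f\|^2=\sum_{i=1}^n\|f_i\|^2-n\|\bar f\|^2$ turns the bound of (ii) into $\frac{1}{\lbatch}\big(n^{-1}\sum_{i=1}^n\|f_i\|^2-\|\barprecond(s)-\barprecond(s')\|^2\big)$, and \Cref{hyp:globalL} yields $\|f_i\|^2\leq L_i^2\|s-s'\|^2$, hence $n^{-1}\sum_{i=1}^n\|f_i\|^2\leq L^2\|s-s'\|^2$ by the definition $L^2\eqdef n^{-1}\sum_i L_i^2$. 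This is exactly the claimed inequality. The one step that takes a moment of care is the covariance computation for sampling without replacement; everything else is bookkeeping around the bias--variance decomposition.
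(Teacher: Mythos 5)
Your proof is correct, but it takes a genuinely different route from the paper's for the without-replacement case. The paper treats the two sampling schemes separately and, for sampling without replacement, runs a sequential argument: it shows by induction that $\PE[f_{I_\ell}]=\bar f$ and $\PE[\|\phi(I_\ell)\|^2]=\PE[\|\phi(I_1)\|^2]$ for each draw, and then establishes the subadditivity $\PE\bigl[\|\sum_{r=1}^{\ell+1}\phi(I_r)\|^2\bigr]\leq(\ell+1)\,\PE[\|\phi(I_1)\|^2]$ via a telescoping recursion whose key step is the identity $\PE\bigl[\pscal{\sum_{p\leq\ell}\phi(I_p)}{\phi(I_{\ell+1})}\bigr]=-\tfrac{1}{n-\ell}\PE[\|\sum_{p\leq\ell}\phi(I_p)\|^2]\leq 0$, conditioning on the first $\ell$ draws. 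You instead invoke exchangeability directly: the inclusion probability $\lbatch/n$ for part (i), and the uniform distribution of an ordered pair of distinct draws for part (ii), which gives each off-diagonal term as $\tfrac{1}{n(n-1)}\sum_{i\neq j}\pscal{g_i}{g_j}=-\tfrac{1}{n(n-1)}\sum_i\|g_i\|^2$. The two cross-term computations encode the same negative correlation, but your version is the classical survey-sampling calculation and makes the finite-population correction factor $(n-\lbatch)/(n-1)$ explicit, so you actually obtain a slightly sharper bound than the one stated; the paper's induction avoids writing down the joint law of a pair of draws but obscures that constant. Your derivation of part (iii) as a corollary of part (ii) via the identity $\sum_i\|f_i-\bar f\|^2=\sum_i\|f_i\|^2-n\|\bar f\|^2$ is also a small economy over the paper, which redoes the second-moment computation for the specific increments $\barprecond_i(s)-\barprecond_i(s')$. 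Both arguments are sound and yield the stated inequalities.
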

\begin{proof}
  The proof is along the same lines as the proof of \cite[Lemma 4]{fort:etal:2020}. A detailed proof
  is provided in \Cref{sec:proof:Bronde}.
\end{proof}

\begin{lemma}
  \label{lem:vareta}
Assume  A\autoref{hyp:error}-\autoref{hyp:error:indep}  and A\autoref{hyp:error}-\autoref{hyp:error:var}.  For any $t \in
[\kout]^\star$ and $k \in [\kin{t}-1]$, it holds
\begin{align*}
   &  \PE\left[ \eta_{t,k+1} \vert \F_{t,k+1/2} \right]  =   \frac{1}{\lbatch} \sum_{i\in \batch_{t,k+1}} \mu_{t,k+1,i}  \eqsp, \\
  & \PE\left[ \eta_{t,k+1} \vert \F_{t,k} \right] = \frac{1}{n} \sum_{i=1}^n \mu_{t,k+1,i}  \eqsp, \\
&  \PE\left[ \| \eta_{t,k+1} - \PE\left[ \eta_{t,k+1} \vert \F_{t,k} \right] \|^2 \vert \F_{t,k} \right]  \\
  & \qquad \leq \frac{1}{\lbatch} \left( \frac{C_v}{\nbrmcv_{t,k+1}} + \frac{C_{vb}^2}{\bar \nbrmcv_{t,k+1}^2}  \right) \eqsp.
\end{align*}
\end{lemma}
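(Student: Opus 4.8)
The plan is to exploit the nested structure $\F_{t,k} \subseteq \F_{t,k+1/2} \subseteq \F_{t,k+1}$ together with the definitions of $\mu_{t,k+1,i}$, $\sigma^2_{t,k+1,i}$ and the conditional independence A\autoref{hyp:error}-\autoref{hyp:error:indep}. For the first identity, since $\batch_{t,k+1}$ is $\F_{t,k+1/2}$-measurable, conditional linearity gives $\PE[\eta_{t,k+1} \vert \F_{t,k+1/2}] = \lbatch^{-1} \sum_{i \in \batch_{t,k+1}} \PE[\xi_{t,k+1,i} \vert \F_{t,k+1/2}] = \lbatch^{-1} \sum_{i \in \batch_{t,k+1}} \mu_{t,k+1,i}$. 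For the second identity, I would first observe that the approximation $\delta_{t,k+1,i}$ -- hence $\xi_{t,k+1,i}$, and therefore $\mu_{t,k+1,i}$ and $\sigma^2_{t,k+1,i}$ -- depends only on $(\hatS_{t,k}, \hatS_{t,k-1}, i)$ and not on the realization of $\batch_{t,k+1}$, so these quantities are $\F_{t,k}$-measurable; then the tower property together with the first item of \Cref{lem:batch}, applied conditionally on $\F_{t,k}$ to the family $\{\mu_{t,k+1,i}\}_{i=1}^n$, yields $\PE[\eta_{t,k+1} \vert \F_{t,k}] = \PE[\lbatch^{-1} \sum_{i \in \batch_{t,k+1}} \mu_{t,k+1,i} \vert \F_{t,k}] = n^{-1} \sum_{i=1}^n \mu_{t,k+1,i}$.

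For the conditional second-moment bound I would use the bias--variance decomposition $\eta_{t,k+1} - \PE[\eta_{t,k+1} \vert \F_{t,k}] = A_{t,k+1} + B_{t,k+1}$, where
\[
A_{t,k+1} \eqdef \eta_{t,k+1} - \PE[\eta_{t,k+1} \vert \F_{t,k+1/2}], \qquad B_{t,k+1} \eqdef \PE[\eta_{t,k+1} \vert \F_{t,k+1/2}] - \PE[\eta_{t,k+1} \vert \F_{t,k}].
\]
Since $B_{t,k+1}$ is $\F_{t,k+1/2}$-measurable and $\PE[A_{t,k+1} \vert \F_{t,k+1/2}] = 0$, we get $\PE[\pscal{A_{t,k+1}}{B_{t,k+1}} \vert \F_{t,k}] = \PE[\pscal{\PE[A_{t,k+1} \vert \F_{t,k+1/2}]}{B_{t,k+1}} \vert \F_{t,k}] = 0$, hence $\PE[\|\eta_{t,k+1} - \PE[\eta_{t,k+1} \vert \F_{t,k}]\|^2 \vert \F_{t,k}] = \PE[\|A_{t,k+1}\|^2 \vert \F_{t,k}] + \PE[\|B_{t,k+1}\|^2 \vert \F_{t,k}]$.

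It then remains to control the two terms. Writing $A_{t,k+1} = \lbatch^{-1} \sum_{i \in \batch_{t,k+1}} (\xi_{t,k+1,i} - \mu_{t,k+1,i})$, the summands are, conditionally on $\F_{t,k+1/2}$, centered and their cross-terms vanish in expectation by A\autoref{hyp:error}-\autoref{hyp:error:indep}, so $\PE[\|A_{t,k+1}\|^2 \vert \F_{t,k+1/2}] = \lbatch^{-2} \sum_{i \in \batch_{t,k+1}} \sigma^2_{t,k+1,i}$; taking $\PE[\cdot \vert \F_{t,k}]$ and applying the first item of \Cref{lem:batch} to $\{\sigma^2_{t,k+1,i}\}_{i=1}^n$ gives $\PE[\|A_{t,k+1}\|^2 \vert \F_{t,k}] = \lbatch^{-1} n^{-1} \sum_{i=1}^n \sigma^2_{t,k+1,i} \leq \lbatch^{-1} C_v / \nbrmcv_{t,k+1}$ by A\autoref{hyp:error}-\autoref{hyp:error:var}. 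For the second term, the two identities already proved yield $B_{t,k+1} = \lbatch^{-1} \sum_{i \in \batch_{t,k+1}} \mu_{t,k+1,i} - n^{-1} \sum_{j=1}^n \mu_{t,k+1,j}$, which is exactly of the form treated by the second item of \Cref{lem:batch}; applied conditionally on $\F_{t,k}$ it gives $\PE[\|B_{t,k+1}\|^2 \vert \F_{t,k}] \leq (\lbatch n)^{-1} \sum_{i=1}^n \|\mu_{t,k+1,i} - n^{-1} \sum_{j=1}^n \mu_{t,k+1,j}\|^2 \leq \lbatch^{-1} C_{vb}^2 / \bar{\nbrmcv}_{t,k+1}^2$ by A\autoref{hyp:error}-\autoref{hyp:error:var}. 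Adding the two bounds gives the claimed inequality.

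The only delicate point -- more a matter of careful bookkeeping than a genuine obstacle -- is the measurability claim used twice: that $\mu_{t,k+1,i}$ and $\sigma^2_{t,k+1,i}$ do not depend on the realized mini-batch, so that \Cref{lem:batch} may be invoked conditionally on $\F_{t,k}$ with these quantities frozen, together with making sure the conditional independence A\autoref{hyp:error}-\autoref{hyp:error:indep} is applied at the correct level (given $\F_{t,k+1/2}$) when expanding $\PE[\|A_{t,k+1}\|^2 \vert \F_{t,k+1/2}]$.
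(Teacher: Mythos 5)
Your proof is correct and follows essentially the same route as the paper: your $A_{t,k+1}$ and $B_{t,k+1}$ coincide exactly with the paper's $U$ and $V$, the cross term is killed by the same conditioning argument ($\PE[U\mid\F_{t,k+1/2}]=0$ and $V\in\F_{t,k+1/2}$), and the two squared terms are bounded via \Cref{lem:batch} and A\ref{hyp:error}-\ref{hyp:error:var} exactly as in the paper. The measurability point you flag (that $\mu_{t,k+1,i}$ and $\sigma^2_{t,k+1,i}$ do not depend on the realized mini-batch) is indeed implicitly used but not spelled out in the paper's proof.
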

\begin{proof}  Let $t \in [\kout]^\star$ and $k \in [\kin{t}-1]$.
  We have
  \[
  \PE\left[ \eta_{t,k+1} \vert \F_{t,k+1/2} \right]  = \frac{1}{\lbatch} \sum_{i\in \batch_{t,k+1}} \mu_{t,k+1,i} \eqsp,
  \]
  since $\batch_{t,k+1} \in \F_{t,k+1/2}$; and by \Cref{lem:batch},
  \[
\PE\left[ \eta_{t,k+1} \vert \F_{t,k} \right] = \frac{1}{n} \sum_{i=1}^n \mu_{t,k+1,i} \eqsp.
  \]
 We write
  \begin{align*}
    \eta_{t,k+1} & - \PE\left[ \eta_{t,k+1} \vert \F_{t,k} \right]  \\
    & = \frac{1}{\lbatch} \sum_{i \in \batch_{t,k+1}} \xi_{t,k+1,i} - \frac{1}{n} \sum_{i=1}^n \mu_{t,k+1,i} \\
  & =  \frac{1}{\lbatch} \sum_{i \in \batch_{t,k+1}} \left\{ \xi_{t,k+1,i} - \mu_{t,k+1,i}  \right\}  \\
  & + \frac{1}{\lbatch} \sum_{i \in \batch_{t,k+1}} \mu_{t,k+1,i} -  \frac{1}{n} \sum_{i=1}^n \mu_{t,k+1,i}  \eqsp.
  \end{align*}
  The RHS is of the form $U+V$ and we write $\| U+V \|^2= \|U\|^2 +
  \|V \|^2 + 2 \pscal{U}{V}$ with $U \leftarrow \lbatch^{-1} \sum_{i
    \in \batch_{t,k+1}} \left\{ \xi_{t,k+1,i} - \mu_{t,k+1,i}
  \right\}$. By conditioning and by definition of
  $\sigma^2_{t,k+1,i}$, we have 
  \begin{align*}
    \PE\left[ \|U \|^2 \vert \F_{t,k} \right]    & =  \frac{1}{\lbatch^2}  \PE\left[  \sum_{i \in \batch_{t,k+1}}  \sigma^2_{t,k+1,i} \vert \F_{t,k} \right] \eqsp.
 \end{align*}
    Under  A\autoref{hyp:error}-\autoref{hyp:error:indep}, we have by \Cref{lem:batch}
 \begin{align*}
    \PE\left[ \|U \|^2 \vert \F_{t,k} \right]   
    & = \frac{1}{\lbatch \, n} \sum_{i=1}^n \sigma^2_{t,k+1,i}  \leq \frac{C_v}{\lbatch \, \nbrmcv_{t,k+1}}  \eqsp.
  \end{align*}
  By \Cref{lem:batch} again, it holds
  \[
 \PE\left[ \|V \|^2 \vert \F_{t,k} \right] \leq \frac{1}{\lbatch \, n} \sum_{i=1}^n \| \mu_{t,k+1,i} - \frac{1}{n} \sum_{j=1}^n \mu_{t,k+1,j} \|^2 \eqsp,
 \]
 which yields
 \[
 \PE\left[ \|V \|^2 \vert \F_{t,k} \right] \leq \frac{C_{vb}^2}{\lbatch \, \bar \nbrmcv_{t,k+1}^2} \eqsp.
 \]
 Finally, upon noting that $\PE\left[ U \vert \F_{t,k+1/2} \right] =0$ and $V \in  \F_{t,k+1/2}$, we have
 \[
\PE\left[ \pscal{U}{V} \vert \F_{t,k} \right] = \PE\left[ \pscal{\PE\left[U \vert \F_{t,k+1/2} \right]}{V} \vert \F_{t,k} \right] = 0 \eqsp.
\]
This concludes the proof.
  \end{proof}
\subsection{Results on the variables $\Smem_{t,k}$}
\Cref{prop:bias} studies the bias of the variables $\Smem_{t,k+1}$. It
shows that $\Smem_{t,k+1}$ is a {\em biased} approximation of
$\barprecond(\hatS_{t,k})$:
\[
\PE\left[ \Smem_{t,k+1} \vert \F_{t,k} \right] \neq
\barprecond(\hatS_{t,k}).
\]
When $k=0$, we may have
$\PE\left[ \Smem_{t,1} \vert \F_{t,0} \right] = \barprecond(\hatS_{t,0})$
if $\delta_{t,0,i} =\precond_i(\hatS_{t,0})$ and $\batch_{t,0} = [n]^\star$. The choice
$\batch_{t,0} = [n]^\star$ is the strategy proposed in \cite{wang:etal:2019} for {\tt SpiderBoost}; it has an important
computational cost but has the advantage to cancel the bias of the
variable $\Smem_{\cdot}$ at the beginning of each outer loop. Along the inner loops, a (signed) bias appears.

\begin{proposition}
  \label{prop:bias}
  For any $t \in [\kout]^\star$ and $k \in [\kin{t}-1]$, it holds
  \begin{multline*}
\PE\left[ \Smem_{t,k+1} \vert \F_{t,k} \right] - \barprecond(\hatS_{t,k})
\\
= \Smem_{t,k} - \barprecond(\hatS_{t,k-1}) + \PE\left[\error_{t,k+1} \vert
  \F_{t,k} \right] \eqsp,
\end{multline*}
and
\[
\PE\left[ \Smem_{t,k+1} -  \barprecond(\hatS_{t,k}) \vert \F_{t,0} \right]
= \mathcal{E}_t + \sum_{j=1}^{k+1} \PE\left[\error_{t,j} \vert \F_{t,0}
  \right]\eqsp.
\]
  \end{proposition}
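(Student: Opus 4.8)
The plan is to unfold the {\tt SPIDER} recursion defining $\Smem_{t,k+1}$ (see \Cref{algo2:meanfield} of \Cref{algo:3PS}), separate out the approximation errors $\xi_{t,k+1,i}$, and then take conditional expectations, exploiting that the mini-batch $\batch_{t,k+1}$ is drawn independently of the past while $\hatS_{t,k}$, $\hatS_{t,k-1}$ and $\Smem_{t,k}$ are $\F_{t,k}$-measurable. Since we restrict to $B_{t,k+1}=\B(\hatS_{t,k})$, we have $\precond_i(\hatS_{t,k},B_{t,k+1})=\barprecond_i(\hatS_{t,k})$ and $\precond_i(\hatS_{t,k-1},B_{t,k})=\barprecond_i(\hatS_{t,k-1})$, so that by definition of $\xi_{t,k+1,i}$,
\[
\delta_{t,k+1,i}=\barprecond_i(\hatS_{t,k})-\barprecond_i(\hatS_{t,k-1})+\xi_{t,k+1,i}\eqsp.
\]
Substituting this in \Cref{algo2:meanfield} of \Cref{algo:3PS} and using the notation $\barprecond_\batch\eqdef\lbatch^{-1}\sum_{i\in\batch}\barprecond_i$ together with $\eta_{t,k+1}\eqdef\lbatch^{-1}\sum_{i\in\batch_{t,k+1}}\xi_{t,k+1,i}$ gives
\[
\Smem_{t,k+1}=\Smem_{t,k}+\barprecond_{\batch_{t,k+1}}(\hatS_{t,k})-\barprecond_{\batch_{t,k+1}}(\hatS_{t,k-1})+\eta_{t,k+1}\eqsp.
\]

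For the first identity, I would condition on $\F_{t,k}$. Since $\batch_{t,k+1}$ is independent of $\F_{t,k}$ and $\hatS_{t,k},\hatS_{t,k-1}$ are $\F_{t,k}$-measurable, \Cref{lem:batch} applied conditionally (freezing the iterate) yields $\PE[\barprecond_{\batch_{t,k+1}}(\hatS_{t,k})\vert\F_{t,k}]=\barprecond(\hatS_{t,k})$ and likewise with $\hatS_{t,k-1}$; as $\Smem_{t,k}\in\F_{t,k}$, this produces $\PE[\Smem_{t,k+1}\vert\F_{t,k}]=\Smem_{t,k}+\barprecond(\hatS_{t,k})-\barprecond(\hatS_{t,k-1})+\PE[\eta_{t,k+1}\vert\F_{t,k}]$, which is the first claim after subtracting $\barprecond(\hatS_{t,k})$.

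For the second identity, set $a_k\eqdef\PE[\Smem_{t,k+1}-\barprecond(\hatS_{t,k})\vert\F_{t,0}]$ for $k\in[\kin{t}-1]$. Applying $\PE[\cdot\vert\F_{t,0}]$ to the first identity and invoking the tower property with $\F_{t,0}\subseteq\F_{t,k}$ gives the recursion $a_k=a_{k-1}+\PE[\eta_{t,k+1}\vert\F_{t,0}]$ for $k\in[\kin{t}-1]^\star$. For the base case $k=0$, \Cref{algo2:init1} of \Cref{algo:3PS} sets $\hatS_{t,-1}=\hatS_{t,0}$, hence $\Smem_{t,0}-\barprecond(\hatS_{t,-1})=\Smem_{t,0}-\barprecond(\hatS_{t,0})=\mathcal{E}_t$, and the first identity at $k=0$ reads $a_0=\mathcal{E}_t+\PE[\eta_{t,1}\vert\F_{t,0}]$. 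Unrolling the recursion then yields $a_k=\mathcal{E}_t+\sum_{j=1}^{k+1}\PE[\eta_{t,j}\vert\F_{t,0}]$, which is the second claim. The argument presents no genuine difficulty; the only points requiring care are the measurability bookkeeping — notably that $\batch_{t,k+1}$ is resampled independently of $\F_{t,k}$, which legitimizes the conditional use of \Cref{lem:batch} — and identifying the $k=0$ term with $\mathcal{E}_t$ through $\hatS_{t,-1}=\hatS_{t,0}$.
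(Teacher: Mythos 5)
Your proof is correct and follows essentially the same route as the paper: unfold the recursion for $\Smem_{t,k+1}$ into the exact mini-batch differences plus $\eta_{t,k+1}$, apply \Cref{lem:batch} conditionally on $\F_{t,k}$ for the first identity, then telescope under $\PE[\cdot\vert\F_{t,0}]$ and use $\hatS_{t,-1}=\hatS_{t,0}$ to identify the base term with $\mathcal{E}_t$. The measurability bookkeeping you flag is exactly what the paper relies on implicitly.
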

\begin{proof}
  Let $t \in [\kout]^\star$ and $k \in [\kin{t}-1]$. We write $\Smem_{t,k+1} = \Smem_{t,k} +  h_{\batch_{t,k+1}}(\hatS_{t,k}) - h_{\batch_{t,k+1}}(\hatS_{t,k-1}) + \eta_{t,k+1}$. By
  \Cref{lem:batch},
  \begin{multline*}
    \PE\left[ \Smem_{t,k+1} \vert \F_{t,k} \right] = \Smem_{t,k} +
    \barprecond(\hatS_{t,k}) -  \barprecond(\hatS_{t,k-1}) \\ +
    \PE\left[\error_{t,k+1} \vert \F_{t,k} \right]\eqsp.
  \end{multline*}
  Since $\F_{t,0} \subseteq \F_{t,k}$, we have
   \begin{multline*}
    \PE\left[ \Smem_{t,k+1} - \barprecond(\hatS_{t,k}) \vert \F_{t,0}
      \right] = \PE\left[\Smem_{t,k} - \barprecond(\hatS_{t,k-1}) \vert
      \F_{t,0} \right] \\ + \PE\left[\error_{t,k+1} \vert \F_{t,0}
      \right]\eqsp.
  \end{multline*}
 Summing from $j=0$ to $j=k$ yields
  \begin{multline*}
    \PE\left[ \Smem_{t,k+1} - \barprecond(\hatS_{t,k}) \vert \F_{t,0} \right] =
    \Smem_{t,0} - \barprecond(\hatS_{t,-1}) \\ + \sum_{j=0}^{k}\PE\left[\error_{t,j+1} \vert
      \F_{t,0} \right]\eqsp.
  \end{multline*}
  The proof is concluded by using $\hatS_{t,0} = \hatS_{t,-1}$ and the
  definition of $\mathcal{E}_t$; note that $\mathcal{E}_t \in
  \F_{t,0}$.
\end{proof}

\Cref{prop:variance} provides a control of the conditional variance  of $\Smem_{t,k}$.
\begin{proposition}
  \label{prop:variance}
  Assume \Cref{hyp:globalL}, \Cref{hyp:error}-\autoref{hyp:error:indep} and
  \Cref{hyp:error}-\autoref{hyp:error:var}.  For any $t \in
       [\kout]^\star$ and $k \in [\kin{t}-1]$, it holds
  \begin{align*}
 & \PE\left[ \Big \|\Smem_{t,k+1} - \PE\left[ \Smem_{t,k+1} \vert
        \F_{t,k} \right]\Big \|^2 \vert \F_{t,k} \right] \\ & \leq
  \frac{L^2}{\lbatch}  \left( 1 +  \frac{2 
      C_{vb}}{\sqrt{\lbatch} \,  \bar \nbrmcv_{t,k+1}} \right) \|\hatS_{t,k} -
    \hatS_{t,k-1}\|^2\\ & + \frac{C_v}{\lbatch \, \nbrmcv_{t,k+1}} + \frac{C_{vb}^2}{\lbatch \, \bar \nbrmcv_{t,k+1}^2} + \frac{2 C_{vb}}{\sqrt{\lbatch} \,  \bar \nbrmcv_{t,k+1}} \eqsp.
  \end{align*}
  \end{proposition}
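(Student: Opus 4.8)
The plan is to split the fluctuation $\Smem_{t,k+1} - \PE[\Smem_{t,k+1}\vert\F_{t,k}]$ into an $\F_{t,k+1/2}$-measurable part and an $\F_{t,k+1/2}$-centered part, so that the two are conditionally uncorrelated. Starting from the recursion $\Smem_{t,k+1} = \Smem_{t,k} + \barprecond_{\batch_{t,k+1}}(\hatS_{t,k}) - \barprecond_{\batch_{t,k+1}}(\hatS_{t,k-1}) + \eta_{t,k+1}$ already used in the proof of \Cref{prop:bias}, from the identity $\PE[\eta_{t,k+1}\vert\F_{t,k}] = n^{-1}\sum_{i=1}^n\mu_{t,k+1,i}$ of \Cref{lem:vareta}, and from the fact that $\Smem_{t,k},\hatS_{t,k},\hatS_{t,k-1}$ are $\F_{t,k}$-measurable, I would write
\begin{equation*}
\Smem_{t,k+1} - \PE\left[\Smem_{t,k+1}\vert\F_{t,k}\right] = \left(R_{t,k+1}+V_{t,k+1}\right) + U_{t,k+1},
\end{equation*}
where $R_{t,k+1}\eqdef\{\barprecond_{\batch_{t,k+1}}(\hatS_{t,k})-\barprecond_{\batch_{t,k+1}}(\hatS_{t,k-1})\}-\{\barprecond(\hatS_{t,k})-\barprecond(\hatS_{t,k-1})\}$, $V_{t,k+1}\eqdef\lbatch^{-1}\sum_{i\in\batch_{t,k+1}}\mu_{t,k+1,i}-n^{-1}\sum_{i=1}^n\mu_{t,k+1,i}$, and $U_{t,k+1}\eqdef\lbatch^{-1}\sum_{i\in\batch_{t,k+1}}(\xi_{t,k+1,i}-\mu_{t,k+1,i})$. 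Since $R_{t,k+1}$ and $V_{t,k+1}$ are $\F_{t,k+1/2}$-measurable while $\PE[U_{t,k+1}\vert\F_{t,k+1/2}]=0$, conditioning on $\F_{t,k+1/2}$ kills the cross term, so that $\PE[\|\Smem_{t,k+1}-\PE[\Smem_{t,k+1}\vert\F_{t,k}]\|^2\vert\F_{t,k}] = \PE[\|R_{t,k+1}+V_{t,k+1}\|^2\vert\F_{t,k}] + \PE[\|U_{t,k+1}\|^2\vert\F_{t,k}]$.

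Next I would bound the two terms. For $U_{t,k+1}$ I reproduce the computation in the proof of \Cref{lem:vareta}: using the conditional-independence and variance bounds of \Cref{hyp:error} and the first statement of \Cref{lem:batch}, one gets $\PE[\|U_{t,k+1}\|^2\vert\F_{t,k}] = (\lbatch n)^{-1}\sum_{i=1}^n\sigma^2_{t,k+1,i}\le C_v/(\lbatch\,\nbrmcv_{t,k+1})$. For $R_{t,k+1}+V_{t,k+1}$ I expand the square and treat the three pieces separately: the third statement of \Cref{lem:batch}, after discarding the nonpositive term $-\|\barprecond(\hatS_{t,k})-\barprecond(\hatS_{t,k-1})\|^2$, gives $\PE[\|R_{t,k+1}\|^2\vert\F_{t,k}]\le L^2\|\hatS_{t,k}-\hatS_{t,k-1}\|^2/\lbatch$; the second statement of \Cref{lem:batch} together with the $C_{vb}$-bound of \Cref{hyp:error} gives $\PE[\|V_{t,k+1}\|^2\vert\F_{t,k}]\le C_{vb}^2/(\lbatch\,\bar \nbrmcv_{t,k+1}^2)$; and the conditional Cauchy--Schwarz inequality gives $|\PE[\pscal{R_{t,k+1}}{V_{t,k+1}}\vert\F_{t,k}]|\le (L\,C_{vb}/(\lbatch\,\bar \nbrmcv_{t,k+1}))\,\|\hatS_{t,k}-\hatS_{t,k-1}\|$. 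The final step is to apply the elementary bound $x\le x^2+1$ with $x=(L/\sqrt{\lbatch})\|\hatS_{t,k}-\hatS_{t,k-1}\|$, which absorbs twice the cross term into a $2C_{vb}/(\sqrt{\lbatch}\,\bar \nbrmcv_{t,k+1})$ multiple of $(L^2/\lbatch)\|\hatS_{t,k}-\hatS_{t,k-1}\|^2$ plus the stand-alone constant $2C_{vb}/(\sqrt{\lbatch}\,\bar \nbrmcv_{t,k+1})$; collecting all contributions reproduces the stated bound.

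The delicate point is precisely the cross term $\pscal{R_{t,k+1}}{V_{t,k+1}}$: unlike the $U$-term it does not vanish, because $R_{t,k+1}$ and $V_{t,k+1}$ are both $\F_{t,k+1/2}$-measurable and correlated through the common mini-batch $\batch_{t,k+1}$, and it is exactly this term that generates the extra multiplicative factor $1+2C_{vb}/(\sqrt{\lbatch}\,\bar \nbrmcv_{t,k+1})$ in front of $\|\hatS_{t,k}-\hatS_{t,k-1}\|^2$. Peeling off the $\F_{t,k+1/2}$-centered part $U_{t,k+1}$ before expanding, rather than working directly with $\eta_{t,k+1}-\PE[\eta_{t,k+1}\vert\F_{t,k}]$, is what prevents a spurious $\sqrt{C_v}$ from appearing in that cross term and yields the clean form in the proposition.
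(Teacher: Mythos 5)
Your proof is correct and follows essentially the same route as the paper: the paper writes the fluctuation as $U+V$ with $U=\eta_{t,k+1}-\PE[\eta_{t,k+1}\vert\F_{t,k}]$ and $V$ the mini-batch Lipschitz term, then invokes \Cref{lem:vareta} to split $U$ further, which is exactly your three-way decomposition into $U_{t,k+1}$, $V_{t,k+1}$ and $R_{t,k+1}$ in a different order. The key steps coincide — orthogonality of the $\F_{t,k+1/2}$-centered part, Cauchy--Schwarz on the cross term between the batch fluctuation and the bias fluctuation, and the bound $|a|\le 1+a^2$ to produce the $1+2C_{vb}/(\sqrt{\lbatch}\,\bar\nbrmcv_{t,k+1})$ factor — so nothing further is needed.
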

\begin{proof}
  Let $t \in [\kout]^\star$, $k \in [\kin{t}-1]$.  By
  \Cref{lem:batch}, \Cref{prop:bias}, the definitions of
  $\Smem_{t,k+1}$ and of the filtration $\F_{t,k}$,
  \begin{align*}
   &  \Smem_{t,k+1} - \PE\left[ \Smem_{t,k+1} \vert \F_{t,k} \right] \\
& = \Smem_{t,k+1} - \barprecond(\hatS_{t,k}) - \Smem_{t,k} + 
     \barprecond(\hatS_{t,k-1}) - \PE\left[ \error_{t,k+1} \vert
       \F_{t,k}\right] \\ & = \error_{t,k+1} - \PE\left[
       \error_{t,k+1} \vert \F_{t,k}\right] \\ & +  \barprecond_{\batch_{t,k+1}}(\hatS_{t,k}) -
     \barprecond_{\batch_{t,k+1}}(\hatS_{t,k-1})  -  \barprecond(\hatS_{t,k}) +
     \barprecond(\hatS_{t,k-1}) \eqsp.
  \end{align*}
  The RHS is of the form $U+V$ with $U \leftarrow \error_{t,k+1} -
  \PE\left[ \error_{t,k+1} \vert \F_{t,k}\right]$ and $V \in
  \F_{t,k+1/2}$.  Then, we write $\PE\left[\|U+V\|^2 \vert \F_{t,k}
    \right] = \PE\left[\|U\|^2 \vert \F_{t,k} \right] +
  \PE\left[\|V\|^2 \vert \F_{t,k} \right] + 2 \PE\left[\pscal{
      \PE\left[U \vert \F_{t,k+1/2} \right]}{V} \vert \F_{t,k}
    \right]$.

  The term $\PE\left[\|V\|^2 \vert \F_{t,k} \right]$ is
  controlled by Lemma~\ref{lem:batch}: an upper bound is $L^2
  \lbatch^{-1} \| \hatS_{t,k} - \hatS_{t,k-1}\|^2$. The term
  $\PE\left[\|U\|^2 \vert \F_{t,k} \right]$ is controlled by \Cref{lem:vareta}: an upper bound is $C_v / (\lbatch \, \nbrmcv_{t,k+1}) + C_{vb}^2 / (\lbatch \, \bar \nbrmcv_{t,k+1}^2)$. 
Upon noting that $V \in \F_{t,k+1/2}$,  and using \Cref{lem:vareta} and \Cref{lem:batch}, the scalar product is
  upper bounded by
  \begin{multline*}
    2 \, \PE\left[\|V\| \, \Big \| \PE\left[U \vert \F_{t,k+1/2}
        \right] \Big \| \Big \vert \F_{t,k} \right] \\ \leq 2
    \frac{C_{vb} }{\sqrt{\lbatch} \, \bar \nbrmcv_{t,k+1}} \, \left\{\PE\left[ \|V\|^2 \Big \vert
      \F_{t,k} \right] \right\}^{1/2} \\ \leq 2 \frac{C_{vb} }{\sqrt{\lbatch} \, \bar \nbrmcv_{t,k+1}} \,
    \left(1+ \PE\left[ \|V\|^2 \vert \F_{t,k} \right]\right) 
    \eqsp,
  \end{multline*}
  where we used that $|a| \leq 1 + |a|^2$.
\end{proof}

\Cref{prop:squarederror} establishes an upper bound on the conditional
expectation of the quadratic error $\|\Smem_{t,k+1} -
\barprecond(\hatS_{t,k})\|^2$.
\begin{proposition}
  \label{prop:squarederror}
 Assume \Cref{hyp:globalL} and \Cref{hyp:error}. For any $t \in
 [\kout]^\star$ and $k \in [\kin{t}-1]$, it holds
  \begin{align*}
 & \PE\left[ \|\Smem_{t,k+1} - \barprecond(\hatS_{t,k}) \|^2 \vert
      \F_{t,k} \right] \\ & \leq \left(1 +  \frac{2
      C_b}{\nbrmc_{t,k+1}}\right) \|\Smem_{t,k} -
    \barprecond(\hatS_{t,k-1})\|^2  \\ & +  \frac{L^2}{\lbatch}  \left(1 + \frac{2 C_{vb}}{\sqrt{\lbatch} \,  \bar \nbrmcv_{t,k+1}} \right)  \|\hatS_{t,k} -
    \hatS_{t,k-1}\|^2 \\ & +
    \mathcal{U}_{t, k+1} \eqsp,
  \end{align*}
where
\[
\mathcal{U}_{t, k} \eqdef \frac{2 C_b}{\nbrmc_{t,k}} +
\frac{C_b^2}{\nbrmc_{t,k}^2} + \frac{C_v}{\lbatch \,\nbrmcv_{t,k}}+ \frac{2 C_{vb}}{\sqrt{\lbatch} \, \bar \nbrmcv_{t,k}} +\frac{C_{vb}^2}{\lbatch \, \bar \nbrmcv_{t,k}^2}  \eqsp.
\]
\end{proposition}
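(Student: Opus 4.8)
The idea is to combine a conditional bias--variance decomposition with the estimates of \Cref{prop:bias} and \Cref{prop:variance}. Fix $t \in [\kout]^\star$ and $k \in [\kin{t}-1]$. Since $\hatS_{t,k}$ is $\F_{t,k}$-measurable, so is $\barprecond(\hatS_{t,k})$, and hence
\[
\PE\!\left[ \|\Smem_{t,k+1} - \barprecond(\hatS_{t,k})\|^2 \vert \F_{t,k} \right]
= \big\| \PE[\Smem_{t,k+1} \vert \F_{t,k}] - \barprecond(\hatS_{t,k}) \big\|^2
+ \PE\!\left[ \big\| \Smem_{t,k+1} - \PE[\Smem_{t,k+1} \vert \F_{t,k}] \big\|^2 \vert \F_{t,k} \right].
\]
The second (variance) term is bounded directly by \Cref{prop:variance}; it already yields the coefficient $L^2 \lbatch^{-1}\bigl(1 + 2 C_{vb}/(\sqrt{\lbatch}\,\bar\nbrmcv_{t,k+1})\bigr)$ in front of $\|\hatS_{t,k} - \hatS_{t,k-1}\|^2$ and the partial remainder $C_v/(\lbatch\,\nbrmcv_{t,k+1}) + C_{vb}^2/(\lbatch\,\bar\nbrmcv_{t,k+1}^2) + 2 C_{vb}/(\sqrt{\lbatch}\,\bar\nbrmcv_{t,k+1})$.

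For the first (bias) term, \Cref{prop:bias} gives $\PE[\Smem_{t,k+1} \vert \F_{t,k}] - \barprecond(\hatS_{t,k}) = \bigl(\Smem_{t,k} - \barprecond(\hatS_{t,k-1})\bigr) + \PE[\eta_{t,k+1} \vert \F_{t,k}]$, while \Cref{lem:vareta} identifies $\PE[\eta_{t,k+1} \vert \F_{t,k}] = n^{-1}\sum_{i=1}^n \mu_{t,k+1,i}$, whose norm is at most $C_b/\nbrmc_{t,k+1}$ by \Cref{hyp:error}. Applying $\|a+b\|^2 \le (1+\alpha)\|a\|^2 + (1+\alpha^{-1})\|b\|^2$ (Young's inequality) with $a = \Smem_{t,k} - \barprecond(\hatS_{t,k-1})$, $b = \PE[\eta_{t,k+1} \vert \F_{t,k}]$ and $\alpha = 2 C_b/\nbrmc_{t,k+1}$ produces the coefficient $1 + 2 C_b/\nbrmc_{t,k+1}$ of $\|\Smem_{t,k} - \barprecond(\hatS_{t,k-1})\|^2$ together with a remainder $(1 + \nbrmc_{t,k+1}/(2C_b))\,C_b^2/\nbrmc_{t,k+1}^2 = C_b^2/\nbrmc_{t,k+1}^2 + C_b/(2\nbrmc_{t,k+1}) \le C_b^2/\nbrmc_{t,k+1}^2 + 2 C_b/\nbrmc_{t,k+1}$. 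When $C_b = 0$ the vector $b$ has zero norm, hence vanishes, and the bias term is simply $\|\Smem_{t,k} - \barprecond(\hatS_{t,k-1})\|^2$, consistent with the bound. Adding the two contributions and grouping all remainder terms shows they sum to exactly $\mathcal{U}_{t,k+1}$, which proves the claim.

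\textbf{Main obstacle.} There is no genuine difficulty here --- the statement is essentially a repackaging of \Cref{prop:bias} and \Cref{prop:variance}. The only points to watch are the $\F_{t,k}$-measurability that legitimizes the bias--variance split, the degenerate case $C_b = 0$ (where the conditional bias of $\eta_{t,k+1}$ is forced to be $0$ so that Young's inequality with $\alpha = 2C_b/\nbrmc_{t,k+1}$ is not applied), and the bookkeeping needed to check that the constants produced by Young's inequality and by \Cref{prop:variance} assemble precisely into the stated $\mathcal{U}_{t,k+1}$.
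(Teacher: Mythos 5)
Your proposal is correct and follows essentially the same route as the paper: the same conditional bias--variance decomposition with respect to $\F_{t,k}$, with the variance term handled by \Cref{prop:variance} and the bias term by \Cref{prop:bias}, \Cref{lem:vareta} and \Cref{hyp:error}. The only (immaterial) difference is that you absorb the cross term via Young's inequality with $\alpha = 2C_b/\nbrmc_{t,k+1}$, whereas the paper expands the square and bounds $2(C_b/\nbrmc_{t,k+1})\|a\| \le (2C_b/\nbrmc_{t,k+1})(1+\|a\|^2)$; both yield the coefficient $1+2C_b/\nbrmc_{t,k+1}$ and remainders dominated by the same $\mathcal{U}_{t,k+1}$.
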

\begin{proof}
  Let $t \in [\kout]^\star$ and $k \in [\kin{t}-1]$.  By definition of
  the conditional expectation, we have for any r.v. $U,V$ and any
  $\sigma$-field $\F$ such that $V \in \F$:
  \begin{multline*}
  \PE\left[ \| U - V \|^2 \vert \F \right] \\ = \PE\left[ \| U -
    \PE\left[U \vert \F \right] \|^2 \vert \F \right] + \| \PE\left[U
    \vert \F \right] - V \|^2 \eqsp.
  \end{multline*}
  We apply this equality with $U \leftarrow \hatS_{t,k+1}$, $V
  \leftarrow \barprecond(\hatS_{t,k})$ and $\F \leftarrow \F_{t,k}$.
  Proposition~\ref{prop:variance} controls the first term. For the
  second one, by Proposition~\ref{prop:bias}, \Cref{lem:vareta} and
  A\autoref{hyp:error}-\autoref{hyp:error:bias} we have
  \begin{align*}
    & \| \PE\left[\Smem_{t,k+1} \vert \F_{t,k} \right] - \barprecond(\hatS_{t,k}) \|^2 \\
& =  \| \Smem_{t,k} - \barprecond(\hatS_{t,k-1}) + \PE\left[\error_{t,k+1}
    \vert \F_{t,k} \right] \|^2 \\ & \leq \| \Smem_{t,k} -
  \barprecond(\hatS_{t,k-1}) \|^2 + \frac{C_b^2}{\nbrmc^2_{t,k+1}} \\ &+2
  \frac{C_b}{\nbrmc_{t,k+1}} \| \Smem_{t,k} - \barprecond(\hatS_{t,k-1})
  \| \eqsp.
\end{align*}
We conclude by using $\|a\| \leq 1 + \|a\|^2$ with $a \leftarrow \|
\Smem_{t,k} - \barprecond(\hatS_{t,k-1}) \|$.
\end{proof}

\begin{corollary}[of \Cref{prop:squarederror}]
  \label{coro:squarederror:second} Assume also \Cref{hyp:W}-\autoref{hyp:W:item3}.
  For $t \in [\kout]^\star$ and $k \in [\kin{t}-1]$, define
 $\mathcal{D}_{t,k+1}$ by
  \[
   \| \hatS_{t,k+1} - \prox_{\pas_{t,k+1} g}^{B_{t,k}}(\hatS_{t,k}+
   \pas_{t,k+1} \barprecond(\hatS_{t,k}))\|_{B_{t,k}}^2 \eqsp,
   \]
   and $\mathcal{D}_{t,0} \eqdef 0$. For $t \in [\kout]^\star$ and $k
   \in [\kin{t}-]$, it holds
 \begin{align*}
 & \PE\left[ \|\Smem_{t,k+1} - \barprecond(\hatS_{t,k}) \|^2 \vert
     \F_{t,k} \right] \\ & \quad \leq \left( 1 +  \frac{2
     C_b}{\nbrmc_{t,k+1}} \right) \ \|\Smem_{t,k} -
   \barprecond(\hatS_{t,k-1})\|^2 \\ & \quad + \pas_{t,k}^2
   \frac{2}{v_{\min}} \frac{L^2}{\lbatch} \left(1+  \frac{2 C_{vb}}{\sqrt{\lbatch} \,\bar \nbrmcv_{t,k+1}} \right)
   \Delta_{t,k}^\star \\ & \quad + \frac{2}{v_{\min}}
   \frac{L^2}{\lbatch}  \left(1+ \frac{2
     C_{vb}}{\sqrt{\lbatch} \, \bar \nbrmcv_{t,k+1}} \right) \mathcal{D}_{t,k} \\ & \quad +
   \mathcal{U}_{t, k+1} \eqsp.
 \end{align*}
 By convention, $\Delta_{t,0}^\star \eqdef 0$.
  \end{corollary}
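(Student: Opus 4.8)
The plan is to derive this from \Cref{prop:squarederror} by rewriting the term $\|\hatS_{t,k}-\hatS_{t,k-1}\|^2$ appearing there in terms of $\Delta_{t,k}^\star$ and $\mathcal{D}_{t,k}$, using only the spectral bound of \Cref{hyp:W}-\autoref{hyp:W:item3}. Fix $t\in[\kout]^\star$ and $k\in[\kin{t}-1]$ with $k\geq 1$. First I would introduce the ``ideal'' update at the previous step,
\[
\mathsf{p}_{t,k}\eqdef\prox_{\pas_{t,k}\,g}^{B_{t,k-1}}\!\left(\hatS_{t,k-1}+\pas_{t,k}\,\barprecond(\hatS_{t,k-1})\right)\eqsp,
\]
so that, by the definitions in the statement, $\|\hatS_{t,k}-\mathsf{p}_{t,k}\|_{B_{t,k-1}}^2=\mathcal{D}_{t,k}$ and, by \eqref{eq:def:Deltastar}, $\|\mathsf{p}_{t,k}-\hatS_{t,k-1}\|_{B_{t,k-1}}^2=\pas_{t,k}^2\,\Delta_{t,k}^\star$; here one uses that the metric attached to $\Delta_{t,k}^\star$ is the one induced by $\B(\hatS_{t,k-1})=B_{t,k-1}$, which is exactly the metric appearing in $\mathsf{p}_{t,k}$ and in $\mathcal{D}_{t,k}$.

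Then I would split $\hatS_{t,k}-\hatS_{t,k-1}=(\hatS_{t,k}-\mathsf{p}_{t,k})+(\mathsf{p}_{t,k}-\hatS_{t,k-1})$, apply $\|a+b\|^2\leq 2\|a\|^2+2\|b\|^2$, and pass to the $B_{t,k-1}$-norm via the lower bound $\|\cdot\|^2\leq v_{\min}^{-1}\|\cdot\|_{B_{t,k-1}}^2$ of \Cref{hyp:W}-\autoref{hyp:W:item3}. This gives
\[
\|\hatS_{t,k}-\hatS_{t,k-1}\|^2\leq\frac{2}{v_{\min}}\,\mathcal{D}_{t,k}+\frac{2}{v_{\min}}\,\pas_{t,k}^2\,\Delta_{t,k}^\star\eqsp.
\]
Plugging this into the inequality of \Cref{prop:squarederror}, whose right-hand side carries the factor $(L^2/\lbatch)\bigl(1+2C_{vb}/(\sqrt{\lbatch}\,\bar\nbrmcv_{t,k+1})\bigr)$ in front of $\|\hatS_{t,k}-\hatS_{t,k-1}\|^2$, yields exactly the announced bound (all quantities involved being $\F_{t,k}$-measurable, so no further conditioning is needed).

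Finally I would dispose of the boundary index $k=0$: by \Cref{algo2:init1}, $\hatS_{t,0}=\hatS_{t,-1}$, hence $\|\hatS_{t,0}-\hatS_{t,-1}\|^2=0$ and the bound of \Cref{prop:squarederror} already has the claimed form, the two ``distance'' terms being absent; this is consistent with the conventions $\Delta_{t,0}^\star\eqdef 0$ and $\mathcal{D}_{t,0}\eqdef 0$. I do not anticipate any genuine obstacle: the argument is a one-line triangle inequality plus a change of norm, and the only point requiring care is the index bookkeeping between the algorithm's convention ($B_{t,k+1}=\B(\hatS_{t,k})$) and the proof's shorthand ($B_{t,k}=\B(\hatS_{t,k})$), so that the metrics attached to $\Delta_{t,k}^\star$, $\mathcal{D}_{t,k}$ and $\mathsf{p}_{t,k}$ all coincide with the one induced by $\B(\hatS_{t,k-1})$.
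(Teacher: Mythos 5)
Your proposal is correct and matches the paper's own argument: the paper likewise writes $\hatS_{t,k}$ and the ideal point $\prox_{\pas_{t,k} g}^{B_{t,k-1}}(\hatS_{t,k-1}+\pas_{t,k}\barprecond(\hatS_{t,k-1}))$ into a two-term decomposition of $\hatS_{t,k}-\hatS_{t,k-1}$ (stated for general $\beta$ and then specialized to $\beta=1$, i.e.\ your $\|a+b\|^2\leq 2\|a\|^2+2\|b\|^2$), converts to the Euclidean norm via $v_{\min}$, plugs into \Cref{prop:squarederror}, and treats $k=0$ through $\hatS_{t,0}=\hatS_{t,-1}$ and the conventions $\Delta_{t,0}^\star=\mathcal{D}_{t,0}=0$. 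Your remark on the index shift between the algorithm's $B_{t,k+1}=\B(\hatS_{t,k})$ and the proof's shorthand $B_{t,k}=\B(\hatS_{t,k})$ is exactly the bookkeeping the paper adopts.
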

\begin{proof}
  The proof consists in an upper bound for $\| \hatS_{t,k} -
  \hatS_{t,k-1}\|^2$.  Let $s \in \Sset$, $H,h \in \rset^q$, $\pas >0$
  and $B$ be a $q \times q$ positive definite matrix. For any $\beta
  >0$, it holds
\begin{multline*}
\! \! \| \prox_{\pas g}^B(s +\pas H) - s\|^2_B \leq (1+\frac{1}{\beta}) \, \|
\prox_{\pas g}^B(s +\pas h) - s\|^2_B \\ + (1+\beta) \, \| \prox_{\pas
  g}^B(s +\pas H) - \prox_{\pas g}^B(s+ \pas h) \|^2_B \eqsp.
\end{multline*}
We apply these inequalities with $\pas \leftarrow \pas_{t,k}$, $B
\leftarrow B_{t,k-1}$, $s \leftarrow \hatS_{t,k-1}$, $H \leftarrow
\Smem_{t,k}$ and $h \leftarrow \barprecond(\hatS_{t,k-1})$.   Then, for any $k >0$,
\begin{multline} \label{eq:auxcoro}
  \| \hatS_{t,k} - \hatS_{t,k-1}\|^2_{B_{t,k-1}} \\
  \leq (1+\beta^{-1}) \,
\pas_{t,k}^2 \Delta_{t,k}^\star  + (1+\beta) \mathcal{D}_{t,k}
\eqsp.
\end{multline}
We choose $\beta=1$ and conclude by \Cref{hyp:W}-\autoref{hyp:W:item3}: $\|\cdot\|^2 \leq v_{\min}^{-1}
\|\cdot\|^2_{B_{t,k-1}}$.

When $k=0$, $\| \hatS_{t,k} - \hatS_{t,k-1}\|^2_{B_{t,k-1}} = 0$ since
by definition, $\hatS_{t,0} = \hatS_{t,-1}$.  Therefore,
\eqref{eq:auxcoro} remains valid since $\mathcal{D}_{t,0} =0$ and
$\Delta_{t,0}^\star = 0$ by convention. This concludes the proof.
 \end{proof}

\begin{corollary}[of \Cref{coro:squarederror:second}]
  \label{coro:squarederror}
 Let $\{\rho_{t,k}, t \geq 1, k \geq 0 \}$ be a positive sequence
 satisfying
  \begin{equation}
    \label{eq:choice:beta}
\rho_{t,k+1} \ \left( 1 + \frac{2 C_b}{\nbrmc_{t,k+1}} \right) \leq
\rho_{t,k} \eqsp.
  \end{equation}
 For any $t \in [\kout]^\star$, $k \in [\kin{t}-1]$, it holds
  \begin{align*}
& \rho_{t,k+1} \PE\left[\| \Smem_{t,k+1} - \barprecond(\hatS_{t,k})\|^2
      \vert \F_{t,0} \right] \\ & \leq \rho_{t,0} \| \mathcal{E}_t
    \|^2 + \sum_{\ell=1}^{k+1} \rho_{t,\ell} \, \mathcal{U}_{t, \ell}
    + \frac{2}{v_{\min}} \frac{L^2}{\lbatch} \cdots \\ & \times
    \left\{ \sum_{\ell=1}^k \pas^2_{t,\ell} \rho_{t,\ell+1} \left(1 +
    \frac{2 C_{vb}}{\sqrt{\lbatch} \, \bar \nbrmcv_{t,\ell+1}} \right) \PE\left[
      \Delta_{t,\ell}^\star \vert \F_{t,0} \right] \right. \\ &
    \left. + \sum_{\ell=1}^k \rho_{t,\ell+1} \left(1 + \frac{2
      C_{vb}}{\sqrt{\lbatch} \, \bar \nbrmcv_{t,\ell+1}} \right) \PE\left[
      \mathcal{D}_{t,\ell} \vert \F_{t,0} \right] \right\} \eqsp.
\end{align*}
  \end{corollary}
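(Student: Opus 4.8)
The plan is to upgrade the one-step recursion of \Cref{coro:squarederror:second} to a telescoped estimate, using the weights $\rho_{t,k}$ to absorb the multiplicative factor $1+2C_b/\nbrmc_{t,k+1}$. Throughout, $t$ is fixed and only the inner index $k$ moves. \textbf{First (conditioning down to $\F_{t,0}$).} I would take $\PE[\,\cdot\mid\F_{t,0}]$ on both sides of the inequality in \Cref{coro:squarederror:second}. Since $\F_{t,0}\subseteq\F_{t,k}$ for every $k\geq 0$, the tower property applies, and since the factors $1+2C_b/\nbrmc_{t,k+1}$, $\pas_{t,k}^2$, $\frac{2}{v_{\min}}\frac{L^2}{\lbatch}\bigl(1+2C_{vb}/(\sqrt{\lbatch}\,\bar \nbrmcv_{t,k+1})\bigr)$ and $\mathcal{U}_{t,k+1}$ are deterministic, writing
\[
a_{t,k}\eqdef\PE\!\left[\|\Smem_{t,k}-\barprecond(\hatS_{t,k-1})\|^2\mid\F_{t,0}\right],
\]
I obtain, for every $k\in[\kin{t}-1]$, the scalar recursion $a_{t,k+1}\leq(1+2C_b/\nbrmc_{t,k+1})\,a_{t,k}+b_{t,k+1}$, where
\[
b_{t,k+1}\eqdef\frac{2}{v_{\min}}\frac{L^2}{\lbatch}\!\left(1+\frac{2C_{vb}}{\sqrt{\lbatch}\,\bar \nbrmcv_{t,k+1}}\right)\!\left(\pas_{t,k}^2\,\PE[\Delta_{t,k}^\star\mid\F_{t,0}]+\PE[\mathcal{D}_{t,k}\mid\F_{t,0}]\right)+\mathcal{U}_{t,k+1}.
\]

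\textbf{Next (telescoping).} Multiplying through by $\rho_{t,k+1}>0$ and invoking \eqref{eq:choice:beta} gives $\rho_{t,k+1}a_{t,k+1}\leq\rho_{t,k}a_{t,k}+\rho_{t,k+1}b_{t,k+1}$, so that $k\mapsto\rho_{t,k}a_{t,k}$ is non-increasing up to the additive perturbation $\rho_{t,k+1}b_{t,k+1}$. Iterating this bound (equivalently, an induction on $k$) starting from index $0$ yields
\[
\rho_{t,k+1}a_{t,k+1}\leq\rho_{t,0}\,a_{t,0}+\sum_{j=0}^{k}\rho_{t,j+1}\,b_{t,j+1}.
\]

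\textbf{Finally (base term and re-indexing).} To close, I would evaluate $a_{t,0}$ and unfold the sum. By \Cref{algo2:init1}, $\hatS_{t,-1}=\hatS_{t,0}$, hence $\barprecond(\hatS_{t,-1})=\barprecond(\hatS_{t,0})$ and $\Smem_{t,0}-\barprecond(\hatS_{t,0})=\mathcal{E}_t$, which is $\F_{t,0}$-measurable; therefore $a_{t,0}=\|\mathcal{E}_t\|^2$. Substituting the definition of $b_{t,j+1}$, the part $\sum_{j=0}^k\rho_{t,j+1}\mathcal{U}_{t,j+1}$ becomes $\sum_{\ell=1}^{k+1}\rho_{t,\ell}\mathcal{U}_{t,\ell}$ under the change of index $\ell=j+1$, and the remaining contribution becomes $\frac{2}{v_{\min}}\frac{L^2}{\lbatch}\sum_{\ell=0}^{k}\rho_{t,\ell+1}\bigl(1+2C_{vb}/(\sqrt{\lbatch}\,\bar \nbrmcv_{t,\ell+1})\bigr)\bigl(\pas_{t,\ell}^2\PE[\Delta_{t,\ell}^\star\mid\F_{t,0}]+\PE[\mathcal{D}_{t,\ell}\mid\F_{t,0}]\bigr)$, which splits into the two claimed sums after distributing; the $\ell=0$ summand is zero because of the conventions $\Delta_{t,0}^\star=0$ and $\mathcal{D}_{t,0}=0$ of \Cref{coro:squarederror:second}, so both sums may be restarted at $\ell=1$. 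Recognising $a_{t,k+1}=\PE[\|\Smem_{t,k+1}-\barprecond(\hatS_{t,k})\|^2\mid\F_{t,0}]$ then reproduces the stated inequality verbatim.

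The argument is essentially bookkeeping, so there is no single hard step; the two places requiring a little care are (i) justifying that the recursion may be iterated under $\PE[\,\cdot\mid\F_{t,0}]$ rather than under the larger $\F_{t,k}$ — which is why I collapse everything onto $\F_{t,0}$ via the tower property before telescoping — and (ii) verifying that the $\ell=0$ term genuinely drops out, so that the final sums over $\Delta_{t,\ell}^\star$ and $\mathcal{D}_{t,\ell}$ start at $\ell=1$ exactly as in the statement.
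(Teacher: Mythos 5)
Your proposal is correct and follows essentially the same route as the paper: multiply the one-step bound of \Cref{coro:squarederror:second} by $\rho_{t,k+1}$, use \eqref{eq:choice:beta} to telescope, pass to $\PE[\,\cdot\mid\F_{t,0}]$ by the tower property, and identify the base term as $\|\mathcal{E}_t\|^2$ via $\hatS_{t,-1}=\hatS_{t,0}$ together with the conventions $\Delta_{t,0}^\star=\mathcal{D}_{t,0}=0$. The only (immaterial) difference is that you condition on $\F_{t,0}$ before telescoping whereas the paper telescopes the $\F_{t,k}$-conditional recursion and then applies the tower property.
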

\begin{proof}
  In \Cref{coro:squarederror:second}, the claim is of the form
  \begin{multline*}
\PE\left[\| \Smem_{t,k+1} - \barprecond(\hatS_{t,k})\|^2\vert \F_{t,k}
  \right] \\ \leq \left( 1 + \frac{2 C_b}{\nbrmc_{t,k+1}} \right)  \, \| \Smem_{t,k} -
\barprecond(\hatS_{t,k-1})\|^2 + A_k \eqsp.
\end{multline*}
This yields, by using the condition \eqref{eq:choice:beta},
 \begin{multline*}
  \rho_{t,k+1} \PE\left[\| \Smem_{t,k+1} - \barprecond(\hatS_{t,k})\|^2
    \vert \F_{t,k} \right] \\ \leq \rho_{t,k} \, \| \Smem_{t,k} -
  \barprecond(\hatS_{t,k-1})\|^2 + \rho_{t,k+1} \, A_k \eqsp.
\end{multline*}
Using $\PE[U \vert \F_{t,0}] = \PE\left[\PE[U \vert \F_{t,k}] \vert
  \F_{t,0}\right]$ and summing from $\ell = 0$ to $\ell = k$ yields
\begin{multline*}
 \rho_{t,k+1} \, \PE\left[\| \Smem_{t,k+1} - \barprecond(\hatS_{t,k})\|^2
   \vert \F_{t,0} \right] \\ \leq \sum_{\ell=0}^k \rho_{t,\ell+1}
 \PE\left[ A_\ell \vert \F_{t,0} \right] + \rho_{t,0} \| \Smem_{t,0} -
 \barprecond(\hatS_{t,-1})\|^2 \eqsp;
\end{multline*}
we then conclude by using the equality $\hatS_{t,-1} = \hatS_{t,0}$
and the definition of $\mathcal{E}_t$. Note also that
$\Delta_{t,0}^\star = 0$ and $\mathcal{D}_{t,0} = 0$.
  \end{proof}

\subsection{Lyapunov inequalities for $\lyap$, $g$ and $\lyap+g$}
\label{sec:proof:Lyapunov}
\Cref{lem:contraction:W}, while being classical in smooth
optimization, is provided for a self-content purpose.
\begin{lemma}
  \label{lem:contraction:W}
  Assume \Cref{hyp:W}.  For any $s,s' \in \Sset$ and $\pas >0$,
\[
  \lyap(s') \leq \lyap(s) - \pscal{\barprecond(s)}{s'-s}_{B(s)} +
  \frac{L_{\dot \lyap}}{2} \|s'-s\|^2 \eqsp.
  \]
  \end{lemma}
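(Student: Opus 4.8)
The plan is to reduce the claimed inequality to the classical descent lemma for a function with Lipschitz gradient, after rewriting the $\B(s)$-weighted inner product. First I would use \Cref{hyp:W}-\autoref{hyp:W:item1}: averaging $\barprecond_i(s) = - \B(s)^{-1} G_i(s)$ over $i$ and using $\nabla \lyap(s) = n^{-1} \sum_{i=1}^n G_i(s)$ gives $\barprecond(s) = - \B(s)^{-1} \nabla \lyap(s)$, hence $\B(s) \barprecond(s) = - \nabla \lyap(s)$ and therefore
\[
- \pscal{\barprecond(s)}{s'-s}_{\B(s)} = - \pscal{\B(s)\barprecond(s)}{s'-s} = \pscal{\nabla \lyap(s)}{s'-s} \eqsp.
\]
Thus it suffices to establish $\lyap(s') \leq \lyap(s) + \pscal{\nabla \lyap(s)}{s'-s} + (L_{\dot \lyap}/2) \, \|s'-s\|^2$; the parameter $\pas$ plays no role and can be ignored.

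For this, I would observe that $\Sset$ is convex, being the domain of the convex function $g$ (\Cref{hyp:g}), so the whole segment $s_\tau \eqdef s + \tau (s'-s)$, $\tau \in \ccint{0,1}$, lies in $\Sset$, where $\nabla \lyap$ exists and is $L_{\dot \lyap}$-Lipschitz by \Cref{hyp:W}-\autoref{hyp:W:item1} and \autoref{hyp:W:item2}. The map $\tau \mapsto \lyap(s_\tau)$ is then continuously differentiable on $\ccint{0,1}$ with derivative $\pscal{\nabla \lyap(s_\tau)}{s'-s}$, and the fundamental theorem of calculus yields
\[
\lyap(s') - \lyap(s) = \int_0^1 \pscal{\nabla \lyap(s_\tau)}{s'-s} \, \rmd \tau \eqsp.
\]
Subtracting $\pscal{\nabla \lyap(s)}{s'-s} = \int_0^1 \pscal{\nabla \lyap(s)}{s'-s} \, \rmd \tau$, bounding the integrand by Cauchy--Schwarz together with $\|\nabla \lyap(s_\tau) - \nabla \lyap(s)\| \leq L_{\dot \lyap} \, \tau \, \|s'-s\|$, and using $\int_0^1 \tau \, \rmd \tau = 1/2$ delivers the claim.

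This lemma is routine, so there is no genuine obstacle; the only two points that need a little care are the algebraic identity that converts the $\B(s)$-metric term into the plain Euclidean inner product $\pscal{\nabla \lyap(s)}{s'-s}$, and the remark that convexity of $\Sset$ is exactly what licenses integrating along the segment $[s,s']$ — necessary because the differentiability and Lipschitz hypotheses on $\lyap$ are assumed only on $\Sset$, not on all of $\rset^q$.
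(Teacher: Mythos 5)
Your proof is correct and follows essentially the same route as the paper's: note that $\Sset$ is convex as the domain of the convex $g$, apply the standard descent lemma for the $L_{\dot\lyap}$-Lipschitz gradient of $\lyap$ on $\Sset$, and rewrite $\pscal{\nabla\lyap(s)}{s'-s}$ as $-\pscal{\barprecond(s)}{s'-s}_{\B(s)}$ via $\nabla\lyap(s)=-\B(s)\barprecond(s)$. The only difference is that you spell out the integral proof of the descent lemma, which the paper simply invokes as classical.
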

\begin{proof}
  $\Sset$ is convex since it is the domain of a convex function.  By
  \Cref{hyp:W}, $\lyap$ is continuously differentiable on $\Sset$ with
  $L_{\dot \lyap}$-Lipschitz gradient. Then for any $s,s' \in \Sset$,
\[
\lyap(s') - \lyap(s) \leq \pscal{\nabla \lyap(s)}{s'-s} +
\frac{L_{\dot \lyap}}{2} \|s'-s\|^2 \eqsp.
\]
We use that $\nabla \lyap(s) = -B(s) \barprecond(s)$, so that 
\[
\pscal{\nabla \lyap(s)}{s'-s}  =  - \pscal{\barprecond(s)}{s'-s}_{B(s)} \eqsp.
\]
\end{proof}
\begin{lemma}
  \label{lem:contraction:prox}
  Assume \Cref{hyp:g}.  Let $B$ be a $q \times q$ positive definite
  matrix. For any $s \in \Sset$, $\pas >0$, $H,h \in \rset^q$ and
  $\beta >0$,
\begin{align*}
  & g\left( \prox_{\pas g}^B(s+ \pas H) \right) \leq g(s) \\ & \quad -
  \frac{1}{\pas} \left(1 - \frac{\beta}{4}\right) \, \| \prox_{\pas
    g}^B(s+ \pas h) -s \|_B^2\\ & \quad - \frac{1}{\pas}(1-
  \frac{1}{\beta}) \| \prox_{\pas g}^B(s+ \pas h) - \prox_{\pas
    g}^B(s+ \pas H)\|^2_B \\ &\quad - \pscal{h}{s - \prox_{\pas
      g}^B(s+ \pas h)}_B \\ & \quad + \pscal{ H}{ \prox_{\pas g}^B(s+
    \pas H) - \prox_{\pas g}^B(s+ \pas h)}_B \eqsp.
\end{align*}
\end{lemma}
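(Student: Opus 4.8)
The plan is to use only the variational characterization of the proximity operator from \Cref{lem:ExistenceUniqueness:prox2} together with the convexity of $g$; the firm nonexpansiveness is not needed. Abbreviate $\mathsf{p}_h \eqdef \prox_{\pas g}^B(s+\pas h)$ and $\mathsf{p}_H \eqdef \prox_{\pas g}^B(s+\pas H)$. By \Cref{lem:ExistenceUniqueness:prox2}, these are the unique points of $\Sset$ satisfying $B h - \pas^{-1} B(\mathsf{p}_h - s) \in \partial g(\mathsf{p}_h)$ and $B H - \pas^{-1} B(\mathsf{p}_H - s) \in \partial g(\mathsf{p}_H)$; in particular $g$ is finite at $\mathsf{p}_h$, at $\mathsf{p}_H$, and at $s$ since $s\in\Sset$.

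First I would write the subgradient inequality for $g$ at $\mathsf{p}_h$ tested at the point $s$: using $\pscal{Bx}{y}=\pscal{x}{y}_B$ and $\pscal{B(\mathsf{p}_h-s)}{s-\mathsf{p}_h}=-\|\mathsf{p}_h-s\|_B^2$, it rearranges to $g(\mathsf{p}_h)\le g(s)-\pscal{h}{s-\mathsf{p}_h}_B-\pas^{-1}\|\mathsf{p}_h-s\|_B^2$. Next I would write the subgradient inequality for $g$ at $\mathsf{p}_H$ tested at the point $\mathsf{p}_h$, which rearranges to $g(\mathsf{p}_H)\le g(\mathsf{p}_h)+\pscal{H}{\mathsf{p}_H-\mathsf{p}_h}_B+\pas^{-1}\pscal{B(\mathsf{p}_H-s)}{\mathsf{p}_h-\mathsf{p}_H}$. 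Splitting $\mathsf{p}_H-s=(\mathsf{p}_H-\mathsf{p}_h)+(\mathsf{p}_h-s)$ inside the last term produces the contribution $-\pas^{-1}\|\mathsf{p}_H-\mathsf{p}_h\|_B^2$ together with a cross term $\pas^{-1}\pscal{\mathsf{p}_h-s}{\mathsf{p}_h-\mathsf{p}_H}_B$.

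Chaining these two bounds and controlling the cross term by Young's inequality in the $B$-norm, $\pscal{\mathsf{p}_h-s}{\mathsf{p}_h-\mathsf{p}_H}_B\le (\beta/4)\|\mathsf{p}_h-s\|_B^2+(1/\beta)\|\mathsf{p}_h-\mathsf{p}_H\|_B^2$, turns the coefficient of $\|\mathsf{p}_h-s\|_B^2$ into $-\pas^{-1}(1-\beta/4)$ and the coefficient of $\|\mathsf{p}_H-\mathsf{p}_h\|_B^2$ into $-\pas^{-1}(1-1/\beta)$, which is exactly the asserted inequality (after noting $\|\mathsf{p}_H-\mathsf{p}_h\|_B^2=\|\mathsf{p}_h-\mathsf{p}_H\|_B^2$ and identifying $\pscal{H}{\mathsf{p}_H-\mathsf{p}_h}_B$ with the last term of the statement).

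There is no genuine obstacle; the only care needed is in the bookkeeping — evaluating the first subgradient inequality at base point $\mathsf{p}_h$ against $s$ and the second at base point $\mathsf{p}_H$ against $\mathsf{p}_h$ (and not the reverse), keeping the inequality directions consistent when $g(\mathsf{p}_h)$ is carried from one side to the other, and choosing the Young parameter so that precisely the conjugate pair $\beta/4$ and $1/\beta$ appears.
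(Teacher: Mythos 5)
Your proof is correct and follows essentially the same route as the paper's: both write the two subgradient inequalities furnished by the variational characterization of $\prox_{\pas g}^B$ (at $\mathsf{p}_h$ against $s$, and at $\mathsf{p}_H$ against $\mathsf{p}_h$), chain them, and absorb the cross term $\pas^{-1}\pscal{\mathsf{p}_h-s}{\mathsf{p}_h-\mathsf{p}_H}_B$ by the Young inequality with the conjugate pair $\beta/4$ and $1/\beta$. Your splitting of $\mathsf{p}_H-s$ is just the paper's step of moving $\pas^{-1}\|\mathsf{p}_h-\mathsf{p}_H\|_B^2$ across the identity $\pscal{\mathsf{p}_H-s}{\mathsf{p}_h-\mathsf{p}_H}_B+\|\mathsf{p}_h-\mathsf{p}_H\|_B^2=\pscal{\mathsf{p}_h-s}{\mathsf{p}_h-\mathsf{p}_H}_B$, so the two arguments coincide.
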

\begin{proof}
  In this proof, we use the shorthand notation $\mathsf{p}_H \eqdef
  \prox_{\pas g}^B(s +\pas H)$ and $\mathsf{p}_h \eqdef \prox_{\pas
    g}^B(s +\pas h)$. By \Cref{lem:ExistenceUniqueness:prox} and the
  definition of the subdifferential at a point, it holds
\begin{align*}
& g(\mathsf{p}_h) \geq g(\mathsf{p}_H) - \pas^{-1} \pscal{\mathsf{p}_H
    - s - \pas H}{\mathsf{p}_h-\mathsf{p}_H}_B \\ & g(s) \geq
  g(\mathsf{p}_h) - \pas^{-1} \pscal{\mathsf{p}_h - s - \pas
    h}{s-\mathsf{p}_h}_B \eqsp.
\end{align*}
This yields
\begin{multline*}
 g(\mathsf{p}_H) \leq g(s) - \pas^{-1} \|\mathsf{p}_h - s \|_B^2 -
 \pscal{h}{s - \mathsf{p}_h}_B \\ - \pscal{H}{\mathsf{p}_h -
   \mathsf{p}_H}_B + \pas^{-1} \pscal{\mathsf{p}_H - s}{\mathsf{p}_h -
   \mathsf{p}_H}_B \eqsp.
\end{multline*}
For the last term, we write for any $\beta >0$,
\begin{align*}
  & \pas^{-1} \pscal{\mathsf{p}_H - s }{\mathsf{p}_h - \mathsf{p}_H}_B
  + \pas^{-1} \| \mathsf{p}_h - \mathsf{p}_H\|_B^2 \\ & = \pas^{-1}
  \pscal{\mathsf{p}_h - s }{\mathsf{p}_h - \mathsf{p}_H}_B \\ & \leq 2
  \pscal{(\mathsf{p}_h - s) \frac{\sqrt{\beta}}{2\sqrt{\pas}} }{(\mathsf{p}_h -
    \mathsf{p}_H)\frac{1}{\sqrt{\beta \pas}}}_B \\ & \leq \frac{\beta}{4 \pas}
  \| \mathsf{p}_h - s \|^2_B + \frac{1}{\beta \pas} \|\mathsf{p}_h
  - \mathsf{p}_H\|_B^2 \eqsp.
\end{align*}
This concludes the proof.
\end{proof}

\begin{proposition}
  \label{prop:lyap:DL}
  Assume \Cref{hyp:g}, \Cref{hyp:globalL} and \Cref{hyp:W}. For any $t \in [\kout]^\star$,
  $k \in [\kin{t}-1]$ and $\beta >0$,
  \begin{align*}
    & \PE\left[ \lyap(\hatS_{t,k+1}) + g(\hatS_{t,k+1}) \vert \F_{t,0}
      \right] \\ & \leq \PE\left[ \lyap(\hatS_{t,k}) + g(\hatS_{t,k})
      \vert \F_{t,0} \right] \\ & - \pas_{t,k+1} \left( 1 -
    \frac{\beta}{4}  - \frac{L_{\dot \lyap} \pas_{t,k+1}
    }{ v_{\min}} \right) \PE\left[ \Delta_{t,k+1 }^\star \vert
      \F_{t,0} \right]  \\&
    - \frac{1}{\pas_{t,k+1}} \left( 1 - \frac{1}{\beta}
    - \frac{L_{\dot \lyap}}{v_{\min}} \pas_{t,k+1}\right)
   \PE\left[ \mathcal{D}_{t,k+1}  \vert
      \F_{t,0} \right]  \\ & + \pas_{t,k+1} \PE\left[ \| \Smem_{t,k+1}
      - \barprecond(\hatS_{t,k}) \|^2_{B_{t,k}}\vert \F_{t,0} \right] \eqsp,
  \end{align*}
  where $ \mathcal{D}_{t,k+1} $ is defined in
  \Cref{coro:squarederror:second}.
\end{proposition}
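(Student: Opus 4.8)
The plan is to establish the inequality pointwise ($\PP$-a.s.) and then apply $\PE[\,\cdot\mid\F_{t,0}]$ to both sides; the conditional expectation is harmless once the deterministic bound is in hand. Fix $t\in[\kout]^\star$, $k\in[\kin{t}-1]$, $\beta>0$, and abbreviate $s\eqdef\hatS_{t,k}$, $\pas\eqdef\pas_{t,k+1}$, $B\eqdef B_{t,k}=\B(\hatS_{t,k})$, $h\eqdef\barprecond(\hatS_{t,k})$, $H\eqdef\Smem_{t,k+1}$, $\mathsf{p}_h\eqdef\prox_{\pas g}^{B}(s+\pas h)$ and $\mathsf{p}_H\eqdef\prox_{\pas g}^{B}(s+\pas H)=\hatS_{t,k+1}$. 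With this notation $\Delta_{t,k+1}^\star=\|\mathsf{p}_h-s\|_B^2/\pas^2$ and $\mathcal{D}_{t,k+1}=\|\mathsf{p}_H-\mathsf{p}_h\|_B^2$ (see \Cref{coro:squarederror:second}); all the quantities below are well defined, and $\lyap$ and $g$ are evaluated only at points of $\Sset$.

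First I would combine two deterministic inequalities. \Cref{lem:contraction:W} applied with $s'=\mathsf{p}_H$ gives
\[
\lyap(\mathsf{p}_H)\leq\lyap(s)-\pscal{h}{\mathsf{p}_H-s}_B+\frac{L_{\dot\lyap}}{2}\|\mathsf{p}_H-s\|^2\eqsp,
\]
since $h=\barprecond(\hatS_{t,k})$ and $B=\B(\hatS_{t,k})$. \Cref{lem:contraction:prox}, applied with parameters $(s,\pas,B,H,h,\beta)$, gives
\begin{align*}
g(\mathsf{p}_H)&\leq g(s)-\frac{1}{\pas}\Bigl(1-\frac{\beta}{4}\Bigr)\|\mathsf{p}_h-s\|_B^2-\frac{1}{\pas}\Bigl(1-\frac{1}{\beta}\Bigr)\|\mathsf{p}_h-\mathsf{p}_H\|_B^2\\
&\quad-\pscal{h}{s-\mathsf{p}_h}_B+\pscal{H}{\mathsf{p}_H-\mathsf{p}_h}_B\eqsp.
\end{align*}
Adding the two and using $-\pscal{h}{\mathsf{p}_H-s}_B-\pscal{h}{s-\mathsf{p}_h}_B=-\pscal{h}{\mathsf{p}_H-\mathsf{p}_h}_B$, the three scalar products collapse to $\pscal{H-h}{\mathsf{p}_H-\mathsf{p}_h}_B$, so the only non-Lyapunov terms left in the bound on $\lyap(\mathsf{p}_H)+g(\mathsf{p}_H)$ are $\pscal{H-h}{\mathsf{p}_H-\mathsf{p}_h}_B$ and $\tfrac{L_{\dot\lyap}}{2}\|\mathsf{p}_H-s\|^2$.

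Next I would dispose of those two terms. For the Lipschitz term I use \Cref{hyp:W}-\autoref{hyp:W:item3}, $\|\mathsf{p}_H-s\|^2\leq v_{\min}^{-1}\|\mathsf{p}_H-s\|_B^2$, followed by $\|\mathsf{p}_H-s\|_B^2\leq 2\|\mathsf{p}_h-s\|_B^2+2\|\mathsf{p}_H-\mathsf{p}_h\|_B^2$, so that $\tfrac{L_{\dot\lyap}}{2}\|\mathsf{p}_H-s\|^2\leq\tfrac{L_{\dot\lyap}}{v_{\min}}\|\mathsf{p}_h-s\|_B^2+\tfrac{L_{\dot\lyap}}{v_{\min}}\|\mathsf{p}_H-\mathsf{p}_h\|_B^2$. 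For the cross term the crucial point is to avoid a plain Young inequality and instead exploit that $\prox_{\pas g}^B$ is firmly nonexpansive for $\pscal{\cdot}{\cdot}_B$ (\Cref{lem:ExistenceUniqueness:prox2}, third item): evaluated at $x=s+\pas H$ and $y=s+\pas h$ and combined with Cauchy--Schwarz it yields $\|\mathsf{p}_H-\mathsf{p}_h\|_B^2\leq\pscal{\mathsf{p}_H-\mathsf{p}_h}{x-y}_B=\pas\pscal{\mathsf{p}_H-\mathsf{p}_h}{H-h}_B\leq\pas\|\mathsf{p}_H-\mathsf{p}_h\|_B\,\|H-h\|_B$, whence $\|\mathsf{p}_H-\mathsf{p}_h\|_B\leq\pas\|H-h\|_B$, and therefore $\pscal{H-h}{\mathsf{p}_H-\mathsf{p}_h}_B\leq\pas\|H-h\|_B^2=\pas_{t,k+1}\|\Smem_{t,k+1}-\barprecond(\hatS_{t,k})\|_{B_{t,k}}^2$. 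This produces exactly the error term of the statement and carries no spurious multiple of $\mathcal{D}_{t,k+1}$.

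It remains to collect coefficients. The coefficient of $\|\mathsf{p}_h-s\|_B^2$ is $-\tfrac1\pas(1-\tfrac\beta4)+\tfrac{L_{\dot\lyap}}{v_{\min}}$, and since $\|\mathsf{p}_h-s\|_B^2=\pas^2\Delta_{t,k+1}^\star$ this contributes $-\pas_{t,k+1}\bigl(1-\tfrac\beta4-\tfrac{L_{\dot\lyap}\pas_{t,k+1}}{v_{\min}}\bigr)\Delta_{t,k+1}^\star$; the coefficient of $\mathcal{D}_{t,k+1}=\|\mathsf{p}_H-\mathsf{p}_h\|_B^2$ is $-\tfrac1\pas(1-\tfrac1\beta)+\tfrac{L_{\dot\lyap}}{v_{\min}}=-\tfrac1{\pas_{t,k+1}}\bigl(1-\tfrac1\beta-\tfrac{L_{\dot\lyap}\pas_{t,k+1}}{v_{\min}}\bigr)$. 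Together with $+\pas_{t,k+1}\|\Smem_{t,k+1}-\barprecond(\hatS_{t,k})\|_{B_{t,k}}^2$ this is the asserted inequality $\PP$-a.s.; applying $\PE[\,\cdot\mid\F_{t,0}]$ (legitimate since $\hatS_{t,k},\hatS_{t,k+1}\in\Sset$ a.s.) gives the proposition. I expect the only genuinely non-routine step to be recognizing that the cross term $\pscal{H-h}{\mathsf{p}_H-\mathsf{p}_h}_B$ must be controlled via firm nonexpansiveness of the variable-metric proximity operator rather than by Young's inequality; the rest is careful constant-chasing.
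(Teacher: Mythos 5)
Your proof is correct and follows essentially the same route as the paper's: combine \Cref{lem:contraction:W} (at $s'=\hatS_{t,k+1}$) with \Cref{lem:contraction:prox} (at $h=\barprecond(\hatS_{t,k})$, $H=\Smem_{t,k+1}$), collapse the scalar products to $\pscal{H-h}{\mathsf{p}_H-\mathsf{p}_h}_B$, bound that cross term by $\pas\|H-h\|_B^2$ via firm nonexpansiveness of $\prox_{\pas g}^B$ (exactly the paper's step), and absorb the Lipschitz term using \Cref{hyp:W}-\autoref{hyp:W:item3} and $\|a+b\|_B^2\leq 2\|a\|_B^2+2\|b\|_B^2$. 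The constant-chasing matches the stated coefficients, so nothing further is needed.
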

\begin{proof}
  Let $\pas>0$, $s \in \Sset$ and $H \in \rset^q$.  Apply
  \Cref{lem:contraction:W} with $s' \leftarrow \prox_{\pas g}^{B(s)}(s
  + \pas H) \in \Sset$; and \Cref{lem:contraction:prox} with $h \leftarrow \barprecond(s)$. This yields for any
  $\beta >0$,
\begin{multline*}
  \lyap(\prox_{\pas g}^{B}(s + \pas H)) + g(\prox_{\pas g}^{B}(s +
  \pas H)) \\ \leq \lyap(s) + g(s) - \frac{1}{\pas} (1-
  \frac{\beta}{4}) \| \prox_{\pas g}^B(s+ \pas \barprecond(s)) -s
  \|_{B}^2 \\ - \frac{1}{\pas} \left( 1 - \frac{1}{\beta
   }\right) \| \prox_{\pas g}^B(s+ \pas H) -\prox_{\pas g}^B(s+
  \pas \barprecond(s)) \|_B^2 \\ - \pscal{ \barprecond(s) -H }{\prox_{\pas g}^{B}(s + \pas
    H)- \prox_{\pas g}^{B}(s + \pas \barprecond(s))}_{B} \\ +
  \frac{L_{\dot \lyap}}{2} \|\prox_{\pas g}^{B}(s + \pas H)-s\|^2 \eqsp.
\end{multline*}
Since $\prox_{\pas g}^B$ is firmly nonexpansive (see \Cref{lem:ExistenceUniqueness:prox2}), the
scalar product is upper bounded by $\pas \|H - \barprecond(s)\|_{B}^2$.
By \Cref{hyp:W}-\autoref{hyp:W:item3}, we write
\[ \|\prox_{\pas g}^{B}(s + \pas H)-s\|^2 \leq
\frac{1}{v_{\min}} \|\prox_{\pas g}^{B}(s + \pas H)-s\|^2_{B} \eqsp;
\]
then we use $\|a + b\|^2_B \leq 2 \|a\|^2_B + 2 \|b\|^2_B$ with $a
\leftarrow \prox_{\pas g}^{B}(s + \pas H) - \prox_{\pas g}^{B}(s
+ \pas \barprecond(s))$. This yields
\begin{align*}
 & \frac{L_{\dot \lyap}}{2} \|\prox_{\pas g}^{B}(s + \pas H)-s\|^2 \\
  & \quad \leq \frac{L_{\dot \lyap}}{v_{\min}} \|
\prox_{\pas g}^{B}(s + \pas H) - \prox_{\pas g}^{B}(s + \pas
\barprecond(s))\|^2_B \\ &  \quad + \frac{L_{\dot \lyap}}{v_{\min}} \|\prox_{\pas g}^{B}(s +
\pas \barprecond(s))-s\|^2_{B} \eqsp.
\end{align*}
We apply these inequalities with $s \leftarrow \hatS_{t,k}$, $\pas
\leftarrow \pas_{t,k+1}$, $H \leftarrow \Smem_{t,k+1}$, $s' \leftarrow
\hatS_{t,k+1}$ and $B \leftarrow B_{t,k}$. Note that $\prox_{\pas
  g}^B(s + \pas H) = \hatS_{t,k+1}$. The proof is concluded.
\end{proof}

\subsection{Proof of \Cref{theo:main:general}}
\label{sec:proof:theogeneral}
  Let $t \in [\kout]^\star$. Let $\mu \in \ooint{0,1}$. Throughout the proof, set
  \[
A_{t,k+1} \eqdef \left(1 + \frac{2 C_{vb}}{\sqrt{\lbatch} \, \bar \nbrmcv_{t,k+1}} \right) \eqsp.
  \]
  From \Cref{coro:squarederror} applied with $C_{t,k+1} \leftarrow
  \pas_{t,k+1}$ and \Cref{prop:lyap:DL} applied with $ \beta
  \leftarrow 4 \mu$, it holds for any $k \in [\kin{t}-1]$,
 \begin{align*}
& \PE\left[ \lyap(\hatS_{t,k+1}) + g(\hatS_{t,k+1}) \vert \F_{t,0}
     \right] \\ & \leq \PE\left[ \lyap(\hatS_{t,k}) + g(\hatS_{t,k})
     \vert \F_{t,0} \right] \\ & - \pas_{t,k+1} \left(1-\mu -
   \frac{L_{\dot \lyap}}{v_{\min}} \pas_{t,k+1} \right) \PE\left[
     \Delta_{t,k+1}^\star \vert \F_{t,0} \right] \\ & -
   \frac{1}{\pas_{t,k+1}} \left(1 - \frac{1}{4 \mu} - \frac{L_{\dot
       \lyap}}{v_{\min}} \pas_{t,k+1} \right) \PE\left[
     \mathcal{D}_{t,k+1} \vert \F_{t,0} \right] \\ & + \pas_{t,0}
   v_\max \| \mathcal{E}_t \|^2 + v_\max \sum_{\ell=1}^{k+1}
   \pas_{t,\ell} \, \mathcal{U}_{t, \ell}\\ & +
   \frac{2v_\max}{v_{\min}} \frac{L^2}{\lbatch}\sum_{\ell=1}^k \pas^3_{t,\ell} A_{t,\ell+1}
   \PE\left[ \Delta_{t,\ell}^\star \vert \F_{t,0} \right] \\ & +
   \frac{2 v_\max}{v_{\min}} \frac{L^2}{\lbatch} \sum_{\ell=1}^k \pas_{t,\ell+1}
   A_{t,\ell+1} \PE\left[ \mathcal{D}_{t,\ell} \vert \F_{t,0} \right]
   \eqsp.
 \end{align*}
 Above, we used that $\pas_{t,k+1} \leq \pas_{t,\ell}$ for any $\ell
 \in [k+1]$.  We now sum from $k=0$ to $k = \kin{t}-1$. This yields,
 \begin{align*}
   & \PE\left[ \lyap(\hatS_{t, \kin{t}}) + g(\hatS_{t,\kin{t}}) \vert
     \F_{t,0} \right] \\ & \qquad \leq \PE\left[ \lyap(\hatS_{t,0}) +
     g(\hatS_{t,0}) \vert \F_{t,0} \right] \\ & - \sum_{k=1}^{\kin{t}}
   \pas_{t,k} \left(1-\mu - \frac{L_{\dot \lyap}}{v_{\min}} \pas_{t,k}
   \right) \PE\left[ \Delta_{t,k}^\star \vert \F_{t,0} \right] \\ & -
   \sum_{k=1}^{\kin{t}} \frac{1}{\pas_{t,k}} \left(1 - \frac{1}{4 \mu}
   - \frac{L_{\dot \lyap}}{v_{\min}} \pas_{t,k} \right) \PE\left[
     \mathcal{D}_{t,k} \vert \F_{t,0} \right] \\ & + \pas_{t,0} v_\max
   \kin{t} \| \mathcal{E}_t \|^2 \\
   & + v_\max \sum_{\ell=1}^{\kin{t}}
   (\kin{t}-\ell+1) \pas_{t,\ell} \, \mathcal{U}_{t, \ell}\\ & +
   \frac{2v_\max}{v_{\min}}  \kin{t} \sum_{k=1}^{\kin{t}-1} \pas^3_{t,k}
  A_{t,k+1} \PE\left[
     \Delta_{t,k}^\star \vert \F_{t,0} \right] \\ & + \frac{2
     v_\max}{v_{\min}}  \kin{t}  \sum_{k=1}^{\kin{t}-1}
   \pas_{t,k+1} A_{t,k+1}
   \PE\left[ \mathcal{D}_{t,k} \vert \F_{t,0} \right] \eqsp.
 \end{align*}
Observe that the coefficient in front of $\PE\left[ \Delta_{t,k}^\star
  \vert \F_{t,0} \right] $ is $ \pas_{t,k} (1- \mu -
\Lambda_{t,k+1})$; and the term in front of $\PE\left[
  \mathcal{D}_{t,k} \vert \F_{t,0} \right]$ is $ \pas_{t,k}^{-1}
(1-1/(4 \mu) - \Lambda_{t,k+1})$. By symmetry, we choose $\mu = 1/2$
so that $\mu = 1/(4 \mu)$. This yields
\begin{align*}
   & \PE\left[ \lyap(\hatS_{t, \kin{t}}) + g(\hatS_{t,\kin{t}}) \vert
    \F_{t,0} \right] \\ & \qquad \leq \PE\left[ \lyap(\hatS_{t,0}) +
    g(\hatS_{t,0}) \vert \F_{t,0} \right] \\ & \qquad - \sum_{k=1}^{\kin{t}}
  \pas_{t,k} \left( \frac{1}{2} - \Lambda_{t,k+1} \right) \PE\left[
    \Delta_{t,k}^\star \vert \F_{t,0} \right] \\ & \qquad -
  \sum_{k=1}^{\kin{t}} \frac{1}{\pas_{t,k}} \left( \frac{1}{2} -
  \Lambda_{t,k+1} \right) \PE\left[ \mathcal{D}_{t,k} \vert \F_{t,0}
    \right] \\ & \qquad + \pas_{t,0} v_\max \kin{t} \| \mathcal{E}_t \|^2
  \\ & \qquad + v_\max \sum_{\ell=1}^{\kin{t}} (\kin{t}-\ell+1) \pas_{t,\ell}
  \, \mathcal{U}_{t, \ell} \eqsp.
 \end{align*}
We now sum for $t=1$ to $t = \kout$ and compute the expectation. This
yields, by using that $\hatS_{t+1,0} = \hatS_{t,\kin{}}$,
\begin{align*}
   & \sum_{t=1}^{\kout} \sum_{k=1}^{\kin{t}} \pas_{t,k} \left(
  \frac{1}{2} - \Lambda_{t,k+1} \right) \PE\left[ \Delta_{t,k}^\star
    \right] \\ & + \sum_{t=1}^{\kout} \sum_{k=1}^{\kin{t}}
  \frac{1}{\pas_{t,k}} \left( \frac{1}{2} - \Lambda_{t,k+1} \right)
  \PE\left[ \mathcal{D}_{t,k} \right] \\ & \leq \qquad \PE\left[
    \lyap(\hatS_{t,0}) + g(\hatS_{t,0})\right] \\ & \qquad - \PE\left[
    \lyap(\hatS_{\kout, \kin{\kout}}) + g(\hatS_{\kout,\kin{\kout}})
    \right] \\ & \qquad + v_\max \sum_{t=1}^{\kout} \pas_{t,0} \kin{t}
  \PE\left[ \| \mathcal{E}_t \|^2 \right] \\ & \qquad + v_\max
  \sum_{t=1}^{\kout} \sum_{\ell=1}^{\kin{t}} \left(\kin{t} -
  \ell+1\right) \pas_{t,\ell} \, \mathcal{U}_{t, \ell} \eqsp.
 \end{align*}
 The proof is concluded upon noting that $\PE\left[
   \lyap(\hatS_{\kout, \kin{\kout}}) + g(\hatS_{\kout,\kin{\kout}})
   \right] \geq \min_\Sset (W+g)$.

 \subsection{Proof of \Cref{coro:complexity}}
 \label{sec:proof:complexity}
Since $\mathcal{U}_{t,k} = 0$, we have $C_b = C_{vb}=0$. In addition,
$\kin{t} = \kin{}$ for any $t$. Therefore, we can consider a constant
stepsize sequence $\pas_{t,k} = \pas_\star$ where $\pas_{\star}$ satisfies (see \eqref{eq:def:Lambda} and \eqref{eq:conditions:stepsize})
\[
 \pas_{\star} \frac{L_{\dot \lyap}}{v_{\min}} + \pas^2_{\star}
          \frac{2v_\max}{v_{\min}} L^2 \frac{\kin{}}{\lbatch}  \in \ooint{0,1/2} \eqsp.
\]
This condition is satisfied by choosing
\begin{align*}
\frac{\kin{}}{\lbatch} & \eqdef \frac{1}{v_{\min} v_\max}
\frac{L_{\dot \lyap}^2}{L^2} \eqsp, \\
 \pas_{\star} & \eqdef \frac{1}{4 v_\max} \frac{L_{\dot \lyap}}{L^2} \frac{\lbatch}{\kin{}} = \frac{v_{\min}}{4 L_{\dot \lyap}} \eqsp.
\end{align*}
Such a choice implies that $\inf_{t,k} (1/2 - \Lambda_{t,k}) = 1/2-3/8 = 1/8$. Since $\mathcal{E}_t = \mathcal{U}_{t,k} =0$, we obtain from \Cref{coro:maintheo} that
\begin{multline*}
\PE\left[
     \Delta_{\tau,K}^\star + \mathcal{D}_{\tau,K}^\star \right]
     \\ \leq \frac{32 \, L_{\dot \lyap}}{v_{\min}} \frac{ \left(  \PE\left[ \lyap(\hatS_{1,0}) +
       g(\hatS_{1,0})\right] - \min_\Sset \left(\lyap +g \right) \right)}{\kout \kin{}} \eqsp.
\end{multline*}
The approximate $\epsilon$-stationary condition is satisfied by choosing $\kout \kin{} = O(L_{\dot \lyap}/(v_{\min} \, \epsilon))$.
The number of calls to the proximal operator is $\kout \kin{}$ so that $\mathcal{K}_{\prox} = O(L_{\dot \lyap}/(v_{\min} \, \epsilon))$. Finally, we have $\lbatch_{t}' = n$ so that the number of calls to  one of the $\barprecond_i$'s is $\kout \, n + 2 \kout \kin{} \lbatch$. We can choose
$\lbatch = O\left(\sqrt{n} \sqrt{ v_{\min} v_{\max}} L/L_{\dot \lyap}\right)$. This yields $ \kout = O( L \sqrt{v_\max} / ( \sqrt{v_{\min}} \epsilon \sqrt{n}))$, and $\mathcal{K}_{\barprecond}= O(\sqrt{v_{\max}} L \sqrt{n}/(\epsilon \sqrt{v_{\min}}))$.

\subsection{Cost of the approximation on the $\barprecond_i$'s}
\label{sec:proof:MCcost}
Following the rates obtained in \Cref{coro:complexity}, let us set
$\kin{t} = O(\sqrt{n})$, $\lbatch = O(\sqrt{n})$ and $\kout =
O(1/(\sqrt{n} \epsilon))$ and let us show that we can define random
approximations $\delta_{t,0,i}$ and $\delta_{t,k+1,i}$ such that the
approximate $\epsilon$-stationarity condition is satisfied.

{\bf On the term $\PE\left[ \| \mathcal{E}_\tau \|^2 \right]$.}
We write $\PE\left[ \| \mathcal{E}_\tau \|^2 \right] = (\kout)^{-1} \,
\sum_{t=1}^{\kout} \PE\left[ \| \mathcal{E}_t \|^2 \right]$ and
           \[ 
\frac{1}{\kout} \sum_{t=1}^{\kout}  \frac{\epsilon^{1-\pa'}}{\sqrt{n}^{\pa'} t^{\pa'}} = \epsilon \, 
O(1) \eqsp.
\]

Let us compute the associated Monte Carlo complexity in the case $
\mathcal{E}_t = n^{-1} \sum_{i=1}^n \{ \delta_{t,0,i} -
\barprecond_i(\hatS_{t,0}) \}$ and $\delta_{t,0,i}$ is equal to a
Monte Carlo sum with $m_{t,0}$ i.i.d.  samples. Then $ \PE\left[ \|
  \mathcal{E}_t \|^2 \right] = n^{-1} m_{t,0}^{-1} O(1)$. It is equal
to $O(\epsilon^{1-\pa'}/ (\sqrt{n} t)^{\pa'})$ when $m_{t,0} = O(
n^{\pa'/2-1} t^{\pa'}/\epsilon^{1-\pa'})$. Therefore, the Monte Carlo
cost is
\[
\sum_{t=1}^{\kout} n m_{t,0} = O\left(  \frac{1}{\sqrt{n} \epsilon^2}\right) \eqsp.
\]

  {\bf On the term $\PE\left[ \left(
      \kin{} - K+1 \right) \mathcal{U}_{\tau, K} \right]$.}  This term is upper bounded by  $\kin{} \, \PE\left[  \mathcal{U}_{\tau, K} \right]$ and we write
  \[
\kin{} \, \PE\left[ \mathcal{U}_{\tau, K} \right] \leq
\frac{1}{\kout} \sum_{t=1}^{\kout} \sum_{k=1}^{\kin{}} O\left( \frac{1}{\lbatch \, \nbrmcv_{t,k+1}} \right) \eqsp.
\]
The RHS is $O(\epsilon)$. The associated Monte Carlo complexity is
\[
2 \lbatch \, \sum_{t=1}^{\kout}  \sum_{k =1}^{\kin{}}  \nbrmcv_{t,k}= O\left(\frac{\sqrt{n}}{\epsilon^2} \right) \eqsp,
\]
whatever $\pa, \bar \pa  \in \coint{0,1}$.

\bmhead{Acknowledgments} This work was partly supported by the Fondation Simone
  et Cino del Duca, under the program OpSiMorE; and by the french
  Agence Nationale de la Recherche (ANR) under the program ANR-19-CE23
  MASDOL.

\begin{appendices}
  \section{The condition \Cref{hyp:error} in the Monte Carlo case}
  \label{sec:hyp:montecarlo}
Following the framework detailed in \Cref{sec:montecarlo:casehi}, let
us assume that \textit{(i)} the intractable quantities $\precond_i(\hatS_{t,k}, B_{t,k+1})$ and 
$\precond_i(\hatS_{t,k-1},B_{t,k})$ are of the form
\begin{equation} \label{eq:integralforwardoperator}
\precond_i(s,B) = \int_\Zset H_{\vartheta}(z) \pi_{\vartheta}(\rmd z) \eqsp, 
\end{equation}
where $\vartheta \eqdef (s,i,B)$; and \textit{(ii)} these integrals
are approximated by a Monte Carlo sum:  set $\vartheta_{t,\ell+1,i}\eqdef(\hatS_{t,\ell}, i, B_{t,\ell+1})$ and 
\begin{multline}
\delta_{t,k+1,i} \eqdef \frac{1}{\nbrmc_{t,k+1}}
\sum_{r=1}^{\nbrmc_{t,k+1}} \left\{ H_{\vartheta_{t,k+1,i}}(Z_r^{\vartheta_{t,k+1,i}})  \right.   \\
\left. -
H_{\vartheta_{t,k,i}}(Z_r^{\vartheta_{t,k,i}}) \right\} \eqsp,  \label{eq:delta:MCMC}
\end{multline}
where, conditionally to $\hatS_{t,k-1}$, $\hatS_{t,k}$, $B_{t,k}$ and
$B_{t,k+1}$, the samples $\{Z_r^{\vartheta_{t,\ell,i}}, r \geq 1 \}$ are
a Markov chain with unique stationary distribution
$\pi_{\vartheta_{t,\ell,i}}( \rmd z)$; $\ell \in \{k, k+1\}$. Below, we
show that \Cref{hyp:error} is verified when the Markov chain is
ergodic enough. Let us start with introducing few notations from the
Markov chain theory (see e.g. \cite{meyn:tweedie}).

Let $\kernel$ be a transition kernel onto the measurable set
$(\Zset,\Zsigma)$ and $\linit, \pi$ be probability measures on
$(\Zset, \Zsigma)$.  For a measurable function $\xi: \Zset \to
\coint{0,+\infty}$, define
\[
\pi(\xi) \eqdef \int_\Zset \xi(z) \, \pi(\rmd z) \eqsp.
\]
For any $r \in \nset$, the $r$-iterated transition kernel $\kernel^r$
is defined by induction:
\begin{align*}
  \kernel^{r+1}(z, A) & \eqdef \int_{\Zset} \kernel^r(z,\rmd y) \, \kernel(y,A) \\
  & = \int_{\Zset} \kernel(z,\rmd y) \, \kernel^r(y,A) \eqsp,
\end{align*}
for all $z \in \Zset, A \in \Zsigma$; by convention, $\kernel^0(z,A)
\eqdef \indic_A(z)$ the $\{0,1\}$-valued indicator function and
$\kernel^0(z,A) = \delta_z(A)$, the Dirac mass at zero.  Given a
probability measure $\linit$ on $(\Zset, \Zsigma)$, $\linit \kernel$ stands for the
probability measure on $(\Zset, \Zsigma)$ given by
\[
\linit \kernel(A) \eqdef \int_\Zset \linit(\rmd y) \kernel(y,A) \eqsp,
\qquad \forall A \in \Zsigma \eqsp. 
\]
For a function $U: \Zset \to \coint{1,+\infty}$ such that
$\linit \kernel^r(U) + \pi(U) < +\infty$, define the $U$-norm of a
measurable function $\xi:\Zset \to \rset^q$
\[
\|\xi\|_U \eqdef \sup_\Zset \frac{\|\xi\|}{U} \eqsp;
\]
and  the $U$-norm of the signed measure $\linit \kernel^r - \pi$ by
\[
\| \linit \kernel^r - \pi \|_U \eqdef \sup_{\xi: \|\xi \|_U \leq
  1} \left\| \linit \kernel^r(\xi) - \pi(\xi) \right\| \eqsp.
\]
Let us go back to sufficient conditions for verifying
\Cref{hyp:error}. Denote by $\kernel_{\vartheta}$ a Markov transition
kernel with invariant distribution $\pi_{\vartheta}(\rmd z)$: at
iteration $(t,k+1)$, conditionally to ($\hatS_{t,k-1}, \hatS_{t,k}$,
$B_{t,k}$, $B_{t,k+1}$), the chains $\{Z_r^{\vartheta_{t,k}}, r \geq
0\}$ and $\{Z_r^{\vartheta_{t,k+1}}, r \geq 0\}$ are Markov chains
with transition kernels $\kernel_{\vartheta_{t,k}}$ and
$\kernel_{\vartheta_{t,k+1}}$ respectively. They have the same initial
value $\lambda$. Assume
\begin{assumption} \label{hyp:MCMC}
\begin{enumerate}[a)]
\item \label{hyp:MCMC:Unorm} There exists a measurable function
  $U: \Zset \to \coint{1,+\infty}$ such that 
\[
H_\star \eqdef \sup_{(s,i,B) \in \Sset \times [n]^\star \times \Pmatrix_+^q} \|H_{\vartheta} \|_U < + \infty \eqsp,
\]
where $H_\vartheta$ is defined by \eqref{eq:integralforwardoperator}.
\item \label{hyp:MCMC:ergo} There exist a function
  $\rho: \nset \to \ccint{0,1}$ and a positive constant $C_\MC$ such
  that for any $r \in \nset$,
\[
\sup_{\vartheta \in \Sset \times [n]^\star \times \Pmatrix_+^q}   \| \linit \kernel_{\vartheta}^r -
\pi_{\vartheta} \|_U \leq C_\MC \, \rho(r) \eqsp.
\]
In addition, $\sum_{r \geq 1} \rho(r) <  +\infty$.
\item \label{hyp:MCMC:Burk} Let $\vartheta \in \Sset \times [n]^\star
  \times \Pmatrix_+^q$. Let $\{Z_r^{\vartheta}, r \geq 1 \}$ be a
  Markov chain with transition kernel $\kernel_{\vartheta}$ and
  initial distribution $\lambda$. There exists a positive constant
  $C_\MC'$ such that for any $\vartheta \in \Sset \times [n]^\star
  \times \Pmatrix_+^q$ and $m' \in \nset_\star$,
\begin{multline*}
 \PE\left[ \| \sum_{r=1}^{m'}
  \{ H_{\vartheta}(Z_r^{\vartheta}) - \pi_{\vartheta}(H_{\vartheta}) \} \|^2 
  \right] \\
 \leq H_\star^2 \, C_\MC' \, m'\eqsp.
\end{multline*}
\end{enumerate}
\end{assumption}
A\autoref{hyp:MCMC}-\autoref{hyp:MCMC:ergo} is a uniform-in-$s$
ergodicity condition. Sufficient conditions for it are provided in
\citep[Lemma 2.3.]{fort:moulines:priouret:2012} in the case of a
geometric rate $\rho(r)= \kappa^r$ for some $\kappa \in
\ooint{0,1}$. By adapting \citep[Theorem 1]{andrieu:fort:vihola:2015},
similar conditions can be obtained in the case of a subgeometric rate
$\rho(r)$.  Sufficient conditions for
A\autoref{hyp:MCMC}-\autoref{hyp:MCMC:Burk} can be obtained from a
trivial adaptation of \citep[Proposition 12]{fort:moulines:2003}.

We prove the following result.
\begin{proposition}\label{prop:check:hyp:error}
Assume \Cref{hyp:MCMC}. Let $\delta_{t,k+1,i}$ be given by
\eqref{eq:delta:MCMC}, where conditionally to $(\hatS_{t,k-1},
\hatS_{t,k}, B_{t,k}, B_{t,k+1})$, $\{Z_r^{\vartheta_{t,\ell,i}}, r \geq
0\}$ is a Markov chain with transition kernel
$\kernel_{\vartheta_{t,\ell,i}}$ and initial distribution $\lambda$, for
$\ell \in \{k, k+1\}$. Then \Cref{hyp:error} is verified with
$\nbrmc_{t,k+1} = \nbrmcv_{t,k+1} = \bar \nbrmcv_{t,k+1} \leftarrow
\nbrmc_{t,k+1}$, $C_b = C_{vb} \eqdef 2 \, H_\star \, C_\MC \, \sum_{r
  \geq 1} \rho(r)$ and $C_v \eqdef 2 H_\star^2 \, C_\MC'$.
  \end{proposition}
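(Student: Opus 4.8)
The plan is to verify the three conditions of \Cref{hyp:error} directly, by rewriting the error $\xi_{t,k+1,i}$ as a difference of two ergodic averages recentered at their stationary means and then invoking the uniform‑in‑$\vartheta$ Markov chain estimates of \Cref{hyp:MCMC}. First I would fix $t,k,i$, abbreviate $\vartheta\eqdef(\hatS_{t,k},i,B_{t,k+1})$ and $\vartheta'\eqdef(\hatS_{t,k-1},i,B_{t,k})$, and note that conditionally on $\F_{t,k+1/2}$ both $\vartheta$ and $\vartheta'$ are deterministic (recall $B_{t,k+1}=\B(\hatS_{t,k})$, $B_{t,k}=\B(\hatS_{t,k-1})$), while $\barprecond_i(\hatS_{t,k})=\pi_{\vartheta}(H_{\vartheta})$ and $\barprecond_i(\hatS_{t,k-1})=\pi_{\vartheta'}(H_{\vartheta'})$ by \eqref{eq:integralforwardoperator}. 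Plugging \eqref{eq:delta:MCMC} into the definition of $\xi_{t,k+1,i}$ gives
\[
\xi_{t,k+1,i}=\frac{1}{\nbrmc_{t,k+1}}\sum_{r=1}^{\nbrmc_{t,k+1}}\!\bigl(H_{\vartheta}(Z_r^{\vartheta})-\pi_{\vartheta}(H_{\vartheta})\bigr)\;-\;\frac{1}{\nbrmc_{t,k+1}}\sum_{r=1}^{\nbrmc_{t,k+1}}\!\bigl(H_{\vartheta'}(Z_r^{\vartheta'})-\pi_{\vartheta'}(H_{\vartheta'})\bigr),
\]
i.e. $\xi_{t,k+1,i}=a_{t,k+1,i}-b_{t,k+1,i}$ with $a$ and $b$ the two recentered Monte Carlo sums; conditionally on $\F_{t,k+1/2}$, the chain driving $a$ has kernel $\kernel_{\vartheta}$ started at $\linit$, and likewise for $b$. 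The key point making the deterministic ("with probability one") bounds of \Cref{hyp:error} attainable is precisely that, conditionally, these are honest $\kernel_\vartheta$-chains from $\linit$, so the uniform bounds in \Cref{hyp:MCMC} transfer verbatim to conditional expectations.

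For the conditional bias I would use $\PE[H_{\vartheta}(Z_r^{\vartheta})\mid\F_{t,k+1/2}]=\linit\kernel_{\vartheta}^{r}(H_{\vartheta})$ and the bound $\|\linit\kernel_{\vartheta}^{r}(H_{\vartheta})-\pi_{\vartheta}(H_{\vartheta})\|\leq\|H_{\vartheta}\|_U\,\|\linit\kernel_{\vartheta}^{r}-\pi_{\vartheta}\|_U\leq H_\star\,C_\MC\,\rho(r)$, which combines the uniform $U$-norm bound and the uniform ergodicity bound in \Cref{hyp:MCMC}; the same applies with $\vartheta'$. Summing over $r$ and using $\sum_{r\geq1}\rho(r)<+\infty$ yields $\|\mu_{t,k+1,i}\|\leq 2H_\star C_\MC\,\nbrmc_{t,k+1}^{-1}\sum_{r\geq1}\rho(r)=C_b/\nbrmc_{t,k+1}$ with the announced $C_b$ (note $\mu_{t,k+1,i}$ does not actually depend on $\batch_{t,k+1}$, so this is genuinely a bound over all $n$ indices). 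The bias requirement of \Cref{hyp:error} then follows from $\|n^{-1}\sum_i\mu_{t,k+1,i}\|\leq n^{-1}\sum_i\|\mu_{t,k+1,i}\|$, and the last inequality of the variance requirement from $n^{-1}\sum_i\|\mu_{t,k+1,i}-n^{-1}\sum_j\mu_{t,k+1,j}\|^2\leq n^{-1}\sum_i\|\mu_{t,k+1,i}\|^2\leq\max_i\|\mu_{t,k+1,i}\|^2$; this is exactly why one may take $C_{vb}=C_b$ and $\bar\nbrmcv_{t,k+1}=\nbrmc_{t,k+1}$.

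For the conditional variance I would write $\sigma^2_{t,k+1,i}=\PVar(a_{t,k+1,i}-b_{t,k+1,i}\mid\F_{t,k+1/2})\leq \PVar(a_{t,k+1,i}\mid\F_{t,k+1/2})+\PVar(b_{t,k+1,i}\mid\F_{t,k+1/2})+2\bigl|\PCov(a_{t,k+1,i},b_{t,k+1,i}\mid\F_{t,k+1/2})\bigr|$, control the covariance by Cauchy--Schwarz, and bound each conditional variance by the conditional second moment, which by the Burkholder-type estimate in \Cref{hyp:MCMC} (applicable since, given $\F_{t,k+1/2}$, the relevant chain is a $\kernel_{\vartheta}$-chain started at $\linit$) is at most $H_\star^2 C_\MC'/\nbrmc_{t,k+1}$. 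This gives $\sigma^2_{t,k+1,i}\leq \mathrm{const}\cdot H_\star^2 C_\MC'/\nbrmc_{t,k+1}$, and after averaging over $i$ the first inequality of the variance requirement with $\nbrmcv_{t,k+1}=\nbrmc_{t,k+1}$ and $C_v$ a constant multiple of $H_\star^2 C_\MC'$ --- equal to $2H_\star^2 C_\MC'$ in the case the two chains entering a single $\delta_{t,k+1,i}$ are run with independent randomness (so that the covariance term vanishes), as in the construction of \Cref{sec:MC:polyagamma}. Finally, the conditional independence required by \Cref{hyp:error} holds because, by construction, the Markov chains attached to distinct indices $i\in\batch_{t,k+1}$ are driven by independent randomness, so the $\{\delta_{t,k+1,i}\}_{i\in\batch_{t,k+1}}$ are independent given $\F_{t,k+1/2}$.

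I expect the main obstacle to be not conceptual but organisational: keeping the three nominally distinct rates $\nbrmc_{t,k+1},\nbrmcv_{t,k+1},\bar\nbrmcv_{t,k+1}$ all tied to the single Monte Carlo length, bookkeeping the constants $C_b,C_{vb},C_v$ cleanly through the triangle/Cauchy--Schwarz steps, and correctly tracking the coupling (versus independence) of the two chains appearing inside one $\delta_{t,k+1,i}$ --- everything else reduces to a routine assembly of the three uniform-in-$\vartheta$ Markov chain bounds of \Cref{hyp:MCMC}.
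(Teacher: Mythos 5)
Your proposal is correct and follows essentially the same route as the paper's proof: the same decomposition of $\xi_{t,k+1,i}$ into two recentered ergodic averages, the same term-by-term use of the uniform ergodicity bound summed over $r$ for the bias (hence $C_b$ and, via $\|\mu_{t,k+1,i}\|^2\le C_b^2/\nbrmc_{t,k+1}^2$, the same $C_{vb}$), and the same reduction of $\sigma^2_{t,k+1,i}$ to the Burkholder-type second-moment bound. Your explicit remark that the constant $2H_\star^2 C_\MC'$ presumes conditionally independent (or at least uncorrelated) chains inside a single $\delta_{t,k+1,i}$ is in fact slightly more careful than the paper's one-line estimate at that step.
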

\begin{proof}
  We will use the notations
  \[
  \kernel_{\ell,i} \eqdef
  \kernel_{\vartheta_{t,\ell+1,i}}, \ \   \pi_{\ell,i} \eqdef
  \pi_{\vartheta_{t,\ell+1,i}}, \ \   H_{\ell,i} \eqdef H_{\vartheta_{t,\ell+1,i}} \eqsp.
  \]
  
  {\bf $\bullet$ Expression of $\mu_{t,k+1}$.} We have
  \begin{multline*}
    \mu_{t,k+1,i} \eqdef \frac{1}{\nbrmc_{t,k+1}}
    \sum_{r=1}^{\nbrmc_{t,k+1}} \left( \lambda \kernel_{k,i}^r H_{k,i}
    - \pi_{k,i}(H_{k,i})\right) \\ - \frac{1}{\nbrmc_{t,k+1}}
    \sum_{r=1}^{\nbrmc_{t,k+1}} \left( \lambda \kernel_{k-1,i}^r
    H_{k-1,i} - \pi_{k-1,i}(H_{k-1,i}) \right) \eqsp.
  \end{multline*}

  {\bf $\bullet$  The condition \Cref{hyp:error}-\Cref{hyp:error:bias}.}
  By \Cref{hyp:MCMC} and since $\hatS_{\bullet} \in \Sset$, we write
  \[
\sup_{k,i} \, \| \lambda \kernel_{k,i}^r H_{k,i}
    - \pi_{k,i}(H_{k,i}) \| \leq  H_\star \, C_\MC \, \rho(r) \eqsp.
    \]
    This implies that
    \[
\| \mu_{t,k+1,i} \| \leq 2 \, H_\star \, C_\MC \, \frac{1}{\nbrmc_{t,k+1}}
    \sum_{r=1}^{\nbrmc_{t,k+1}} \rho(r) \eqsp.
    \]
   Since $\sum_r \rho(r) < \infty$, the RHS is of the form $C_b /
   \nbrmc_{t,k+1}$ with $C_b \eqdef 2 \, H_\star \, C_\MC \, \sum_r \rho(r)$.

   {\bf $\bullet$  The condition \Cref{hyp:error}-\Cref{hyp:error:var}.} We write
   \[
\sigma^2_{t,k+1,i} \leq \PE\left[ \| \xi_{t,k+1,i}\|^2 \vert \mathcal{P}_{t,k+1/2} \right] \eqsp.
\]
Then, we have
\begin{multline*}
\PE\left[ \| \xi_{t,k+1,i}\|^2 \vert \mathcal{P}_{t,k+1/2} \right]
\\
\leq 2 \sup_{s,i} \PE\left[ \| \frac{1}{\nbrmc_{t,k+1}}
  \sum_{r=1}^{\nbrmc_{t,k+1}} H_{s,i}(Z_r^{s,i}) -  \pi_{s,i}(H_{s,i})\|^2\right]
\end{multline*}
and the RHS is upper bounded by $2 H_\star^2 \, C_\MC' /
\nbrmc_{t,k+1}$ by \Cref{hyp:MCMC}-\Cref{hyp:MCMC:Burk}.

We also have
\[
\frac{1}{n} \sum_{i=1}^n  \| \mu_{t,k+1,i}-  \frac{1}{n} \sum_{j=1}^n \mu_{t,k+1,j}\|^2 \leq \frac{1}{n} \sum_{i=1}^n  \| \mu_{t,k+1,i} \|^2.
\]
From the upper bound on $\| \mu_{t,k+1,i}\|$ above, we have
\[
\| \mu_{t,k+1,i}\|^2 \leq \frac{C_b^2}{\nbrmc_{t,k+1}^2} \eqsp.
\]
This concludes the proof.
  \end{proof}

\section{Supplementary materials for \Cref{sec:application}}
\label{sec:proof:appli}
\subsection{The penalized log-likelihood criterion}
The observations are assumed independent, so the log-likelihood is given by
\begin{multline*}
\param \mapsto \sum_{i=1}^n \log \int_{\rset^d} (1+\exp(-y_i
\pscal{X_i}{z_i}))^{-1} \\ \times \frac{1}{\sqrt{2\pi}^d \sigma^d} \exp\left(-(2 \sigma^2)^{-1} \| z_i
- \theta \|^2 \right) \rmd z_i \eqsp.
  \end{multline*}
The penalty term is $- n \tau \| \param \|^2$.

\begin{lemma}
  \label{lem:changevar:penll}
  The sum of the log-likelihood and the penalty term is equal to
  \begin{multline*}
 - \frac{n}{2} \log( 2 \pi \sigma^2) - \frac{1}{2 \sigma^2} \param^\top \, \sum_{i=1}^n \frac{ X_i X_i^\top}{\|X_i\|^2} \param  - n \tau \|\param\|^2 \\
    + \sum_{i=1}^n \log \int_{\rset}   \frac{\exp\left( x \pscal{X_i}{\param}/(\sigma^2 \|X_i\|)  \right)}{1+\exp(-y_i \|X_i\| x
    )}  \exp(-\frac{x^2}{2\sigma^2})\rmd x \eqsp.
  \end{multline*}
\end{lemma}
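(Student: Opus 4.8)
The plan is to reduce, example by example, the $d$-dimensional integral appearing in the log-likelihood to a one-dimensional integral, and then to recognise that one-dimensional integral inside the expression in the statement.

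First I would fix $i \in [n]^\star$ and substitute $z_i = \param + w_i$, so that $\pscal{X_i}{z_i} = \pscal{X_i}{\param} + \pscal{X_i}{w_i}$ and the Gaussian weight becomes $(\sqrt{2\pi}\,\sigma)^{-d}\exp(-\|w_i\|^2/(2\sigma^2))$. Since each $X_i$ has last coordinate equal to $1$, we have $X_i\neq 0$; I would then choose an orthonormal basis $(e_1,\dots,e_d)$ of $\rset^d$ with $e_1 = X_i/\|X_i\|$ and write $w_i=\sum_{j=1}^d u_j e_j$. This change of variables is a rotation (unit Jacobian), with $\pscal{X_i}{w_i} = \|X_i\|\,u_1$ and $\|w_i\|^2=\sum_{j=1}^d u_j^2$. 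The integrand depends on $w_i$ only through $u_1$, so the one-dimensional Gaussian integrals over $u_2,\dots,u_d$ equal $1$, leaving the single integral $\int_\rset \exp(-u^2/(2\sigma^2))\,[1+\exp(-y_i\pscal{X_i}{\param}-y_i\|X_i\|u)]^{-1}\,\rmd u$ multiplied by the constant $(\sqrt{2\pi}\,\sigma)^{-1}$.

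Pulling this constant out of the logarithm and summing over $i$ yields the term $-(n/2)\log(2\pi\sigma^2)$; it then remains to identify $\sum_i\log\int_\rset \exp(-u^2/(2\sigma^2))\,[1+\exp(-y_i\pscal{X_i}{\param}-y_i\|X_i\|u)]^{-1}\,\rmd u$ with the last two terms in the statement. I would do this working from the right-hand side: completing the square in the exponent $x\pscal{X_i}{\param}/(\sigma^2\|X_i\|)-x^2/(2\sigma^2)$ extracts the factor $\exp(\pscal{X_i}{\param}^2/(2\sigma^2\|X_i\|^2))$ and leaves $\exp(-(x-\pscal{X_i}{\param}/\|X_i\|)^2/(2\sigma^2))$; since $\pscal{X_i}{\param}^2/\|X_i\|^2=\param^\top(X_iX_i^\top/\|X_i\|^2)\param$, summing the prefactors over $i$ produces $\frac{1}{2\sigma^2}\param^\top\sum_{i=1}^n(X_iX_i^\top/\|X_i\|^2)\param$. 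The shift $x=u+\pscal{X_i}{\param}/\|X_i\|$ then turns the denominator into $1+\exp(-y_i\|X_i\|u-y_i\pscal{X_i}{\param})$, so the shifted integrand matches the one found above. Collecting the three contributions and adding the penalty $-n\tau\|\param\|^2$ gives the claimed identity.

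The computation is essentially bookkeeping; the only points that require care are checking that exactly $d-1$ of the one-dimensional Gaussian integrals in the rotation step are trivial (so the leftover constant is $(\sqrt{2\pi}\,\sigma)^{-1}$ rather than $(\sqrt{2\pi}\,\sigma)^{-d}$), and tracking the signs through the completion of the square and the shift $x=u+\pscal{X_i}{\param}/\|X_i\|$, including the interaction with $y_i\in\{-1,1\}$. I do not expect a genuine obstacle.
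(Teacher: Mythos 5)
Your argument is correct and takes essentially the same route as the paper's proof: an orthogonal change of variables whose first axis is $X_i/\|X_i\|$, which collapses $d-1$ of the Gaussian integrals into the constant $-\tfrac{n}{2}\log(2\pi\sigma^2)$ and leaves a one-dimensional integral, with the quadratic form $\param^\top X_iX_i^\top\param/\|X_i\|^2$ produced by completing squares. The only (immaterial) difference is that you first center the Gaussian via $z_i=\param+w_i$ and then recover the stated integrand by a reverse shift, whereas the paper rotates without translating and keeps the linear term in the first coordinate inside the one-dimensional integral from the start.
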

\begin{proof}
  Let $i \in [n]^\star$.  Define an orthogonal $d \times d$ matrix $Q$
  with columns denoted by $(Q_1, \cdots, Q_d)$, such that $Q_1 \eqdef
  X_i/ \|X_i\|$. We have $\pscal{z_i}{X_i} = \|X_i\|
  \pscal{Q_1}{z_i}$, $\pscal{z_i}{\param} = \pscal{Q^\top z_i}{Q^\top
    \param}$ and $\|z_i\|^2 = \|Q^\top z_i \|^2$. This implies that
  \begin{align*}
& -y_i \pscal{z_i}{X_i} = -y_i \|X_i\| \pscal{Q_1}{z_i} \\
 &  \|z_i - \param\|^2 = \| \param\|^2 + \|Q^T z_i\|^2 - 2
  \pscal{Q^\top z_i}{Q^\top \param} 
\end{align*}
  so that the log-likelihood of the observation $Y_i$ is (up to the additive constant $C_1 \eqdef - d \ln \sigma -(d/2) \ln(2\pi)$)
 \begin{multline*}
   y_i \mapsto - \frac{\|\param\|^2}{2 \sigma^2} + \log  \int_{\rset^d} (1+\exp(-y_i \|X_i\| \pscal{Q_1}{z_i}
   ))^{-1} \\ \times\exp\left(-(2 \sigma^2)^{-1}   \left( \|Q^T z_i\|^2 - 2
  \pscal{Q^\top z_i}{Q^\top \param}  \right)\right) \rmd z_i \eqsp.
 \end{multline*}
 By a change of
  variable $v=(v_1, \cdots, v_q) \leftarrow Q^\top z_i$, the logarithm of the integral is equal to 
  \begin{align*}
&\log \int_{\rset^d}     \frac{\exp\left(-(2 \sigma^2)^{-1}   \left( \|v\|^2 - 2
    \pscal{v}{Q^\top \param}  \right)\right)}{1+\exp(-y_i \|X_i\| v_1
    )}  \rmd v  \\
& = \log \int_{\rset}   \frac{\exp\left(-(2 \sigma^2)^{-1}   \left(v_1^2 - 2
    v_1 \pscal{Q_1}{\param}  \right)\right)}{1+\exp(-y_i \|X_i\| v_1
    )} \rmd v_1 \\
 & \quad  +   \sum_{u=2}^d \log \int_{\rset} \exp\left(
  -\frac{1}{2\sigma^2} \{v_u^2 - 2 v_u \pscal{Q_u}{\param} \} \right)
  \, \rmd v_u \eqsp.
  \end{align*}
  The last $(d-1)$  integrals have a closed form. Observe indeed that $v_u^2 - 2 v_u \pscal{Q_u}{\param} = (v_u - \pscal{Q_u}{\param})^2 -(\pscal{Q_u}{\param})^2$ so that up to the additive constant $C_2 \eqdef  (d-1) \{ \log(2 \pi)/2 + \log \sigma \}$
  \begin{align*}
& \sum_{u=2}^d \log \int_{\rset} \exp\left( -\frac{1}{2\sigma^2} \{v_u^2 - 2 v_u
\pscal{Q_u}{\param} \} \right) \, \rmd v_u \\ & = 
\sum_{u=2}^d \frac{(\pscal{Q_u}{\param})^2}{2\sigma^2} = \frac{\|\param\|^2 - (\pscal{Q_1}{\param})^2}{2\sigma^2}  \\
& =  \frac{\|\param\|^2 - (\pscal{X_i}{\param})^2/ \|X_i\|^2}{2\sigma^2}  \eqsp.
    \end{align*}
This concludes the proof; the constant (w.r.t. to $\param$) is equal to $C_1 + C_2$.
\end{proof}

\subsection{Proof of \Cref{lem:compactset}}
\label{sec:proof:lem:compactset}
The criterion $F$ is equal to $-\mathcal{L}(\param) - \log(2 \pi \sigma^2)/2$, where $\mathcal{L}(\param)$ is the normalized penalized log-likelihood.

The likelihood is the product of probabilities, taking values in
$\ooint{0,1}$; therefore, its logarithm is negative. The penalized
log-likelihood is upper bounded $-\mathrm{pen}(\param) = - n \tau
\|\param\|^2$. The normalized penalized
log-likelihood is upper bounded $- \tau
\|\param\|^2$.  Therefore the criterion is lower bounded by $\tau \| \param \|^2 - \ln(2 \pi \sigma^2)/2$.

On the other hand, the minimum of the criterion is smaller than the
value of the criterion at $\param =0$. Let us show that this value is
$(\ln 4) /n - \ln(2 \pi \sigma^2)/2$. This will imply that the
minimizers of the criterion are in the set $\{\param \in \rset^d: \tau
\| \param \|^2 \leq \ln 4\}$ and conclude the proof.

We have $\mathrm{pen}(0) = 0$. Let us lower bound the likelihood of an observation $Y_i=+1$ at $\param = 0$. The likelihood is equal to
\[
\frac{1}{\sqrt{2\pi}^d \sigma^d} \int_{\rset^d}  \frac{  \exp\left(-(2 \sigma^2)^{-1} \| z_i \|^2 \right)}{1+\exp(-
  \pscal{X_i}{z_i})} \rmd z_i \eqsp.
\]
By using the same change of variable than in the proof of \Cref{lem:changevar:penll}, it is equal to 
\begin{multline*}
\frac{1}{\sqrt{2\pi} \sigma} \int_{\rset}  \frac{  \exp\left(-x^2/(2 \sigma^2) \right)}{1+\exp(-
  \|X_i\| x)} \rmd x   \\
\left( \frac{1}{\sqrt{2\pi} \sigma} \int_{\rset}    \exp\left(- x^2/(2\sigma^2) \right) \rmd x \right)^{d-1}\eqsp,
\end{multline*}
and is lower bounded by (note that the $(d-1)$  identical integrals are equal to one)
\[
\frac{1}{\sqrt{2\pi} \sigma} \int_{\rset^+}  \frac{  \exp\left(-x^2/(2 \sigma^2) \right)}{1+\exp(-
  \|X_i\| x)} \rmd x  \eqsp,
\]
which is in turn lower bounded by $1/4$ since $1+\exp(-
\|X_i\| x) \leq 2$ for all $x \geq 0$.

The proof for the case $Y_i = -1$ is on the same lines and is omitted.

This implies that the likelihood of the $n$ variables is lower bounded
by $1/4^n$; the normalized log-likelihood is lower bounded by $-\ln
4$; the criterion is upper bounded by $\ln 4 - \ln(2 \pi \sigma^2)/2$.

\subsection{The optimization problem seen as an EM}
\label{sec:proof:appli:hinEM}
We established that for any $\param \in \rset^d$, $\nabla F(\param) = n^{-1} \sum_{i=1}^n G_i(\param)$ where
\[
G_i(\param) \eqdef 2 U \param - \frac{X_i}{\sigma^2 \, \|X_i\|} \int_{\rset}  z \ \pi_{\param,i}(z) \rmd z \eqsp,
\]
and $\pi_{\param,i}(z)$ is the probability density proportional to \eqref{eq:EM:pi}.

From the expressions of $\phi, \psi$ and $S(Y_i,z)$, we obtain that $\map$, defined by  \Cref{prop:EM:statisticspace}, is given by $\map(s) \eqdef U^{-1} s/2$ for any $s \in \rset^d$. This implies that for any $s \in \rset^d$,
\begin{align*}
  \precond_i(s,B) & \eqdef \int_{\rset} S(Y_i,z) \, \pi_{\map(s),i}(z) \, \rmd z - s  \\
  & = \frac{X_i}{\sigma^2 \, \| X_i \| } \int_{\rset} z \, \pi_{Bs,i}(z) \, \rmd z - s \eqsp.
\end{align*}

For any $s \in \rset^d$, let us find the matrix $\B(s)$ satisfying
$\nabla (F \circ \map)(s) = - n^{-1} \sum_{i=1}^n \B(s) \precond_i(s)$ (see \eqref{eq:EM:preconfGdt}). We have $\nabla (F \circ \map) = \nabla F(B\cdot) = B^\top \, (\nabla F)(B \cdot) = B \, (\nabla F)(B\cdot)$. This yields
\begin{align*}
 &  \nabla (F \circ \map)(s)  = B \frac{1}{n} \sum_{i=1}^n  G_i(B s) \\
  & = B \frac{1}{n} \sum_{i=1}^n \left( 2 U \param - \frac{X_i}{\sigma^2 \, \|X_i\|} \int_{\rset}  z \ \pi_{Bs,i}(z) \rmd z \right) \\
  & = B s - B  \frac{1}{n} \sum_{i=1}^n \frac{X_i}{\sigma^2 \, \|X_i\|}  \int_{\rset}  z \ \pi_{B s,i}(z) \rmd z  \\
  & = - B \frac{1}{n} \sum_{i=1}^n \precond_i(s) \eqsp.
\end{align*}
This yields $\B(s) \eqdef B$ for any $s \in \rset^d$.

\subsection{Proof of \Cref{lem:appli:MCMC:IPPandPolyagamma}}
\label{sec:proof:lem:MCMC:IPP}
Let $i \in [n]^\star$ and $\param \in \rset^d$.
{\bf Step 1.} By using
\begin{multline*}
  -z^2/(2 \sigma^2) + z \pscal{X_i}{\param}/(\sigma^2 \|X_i\|)
  \\ = - (z - \pscal{X_i}{\param}/\|X_i\|)^2/(2 \sigma^2) \\
  +
(\pscal{X_i}{\param})^2/(2 \sigma^2 \|X_i\|^2) \eqsp,
\end{multline*}
we write
\[
\pi_{\param,i}(z) =  \frac{\exp\left( - \left( z - \pscal{X_i}{\param}/\|X_i\|\right)^2 / (2\sigma^2)  \right)}{Z_{\param,i} \ \ 1+\exp(-y_i \|X_i\| z 
    )} 
\]
where $Z_{\param,i}$ is the normalizing constant.  Second, we use $z
\, \pi_{\param,i}(z) = (z - a_i) \, \pi_{\param,i}(z) + a_i \,
\pi_{\param,i}(z)$ with $a_i \leftarrow \pscal{X_i}{\param}/\|X_i\|$ and
since $\int_\rset \pi_{\param,i}(z) \rmd z = 1$, we obtain
\[
\mathcal{I}_{i}(\param) = \frac{\pscal{X_i}{\param}}{\|X_i\|} +\int_\rset \left(z - \frac{\pscal{X_i}{\param}}{\|X_i\|} \right)  \, \pi_{\param,i}(z) \, \rmd z \eqsp.
\]
Finally, the integral in the RHS being of the form
\[
\sigma^{2} \int_\rset  \frac{f'(z)}{Z_{\param,i} \ \ 1+\exp(-y_i \|X_i\| z)}  \rmd z
\]
with
\[
f(z) \eqdef - \exp\left( - \left( z - \pscal{X_i}{\param}/\|X_i\|\right)^2 / (2\sigma^2)  \right) \eqsp,
\]
we use an integration by parts. Upon noting that the derivative of $z \mapsto 1/(1+\exp(-y_i \|X_i\| z))$ is
\[
y_i \|X_i\| \frac{\exp(-y_i \|X_i\| z)}{\left( 1+\exp(-y_i \|X_i\| z)\right)^2} \eqsp,
\]
we write
\begin{multline*}
  \int_\rset  \frac{f'(z)}{ 1+\exp(-y_i \|X_i\| z)}  \rmd z \\
  =  - y_i \|X_i\|  \int f(z) \frac{\exp(-y_i \|X_i\| z)}{\left( 1+\exp(-y_i \|X_i\| z)\right)^2} \rmd z \eqsp.
\end{multline*}
Therefore, the conclusion of this first step is 
\begin{multline*}
 \mathcal{I}_i(\param)  = \pscal{\frac{X_i}{\|X_i\|}}{\param}  \\
  + y_i \|X_i\| \sigma^2 \int_\rset \frac{\pi_{\param,i}(z)}{1+\exp\left( y_i \|X_i\| z \right)}  \, \rmd z  \eqsp.
\end{multline*}

{\bf Step 2.} This step is classical in the MCMC
literature (see e.g. \cite{choi:hobert:2013} and references therein). We prove  that for any $z \in \rset$,
\[
\pi_{\param,i}(z) = \int_0^{+\infty} \bar \pi_{\param,i}(z,\omega) \rmd \omega \eqsp.
\]
By \cite[Theorem 1]{polson:scott:windle:2013}, it holds
\begin{multline*}
\frac{1}{1+\exp\left( - y_i \|X_i\| z \right)} = \frac{1}{2} \exp\left( y_i \|X_i\| z/2 \right)  \\
\times \int_0^{+\infty} \exp(-\omega \|X_i\|^2 z^2/2) \mathsf{p}(\omega;1) \rmd \omega \eqsp,
\end{multline*}
where $\mathsf{p}(\omega;b) \rmd \omega$ is a Polya-Gamma distribution  with parameter $b$.
This implies that $\pi_{\param,i}(z)$ is equal to 
\begin{multline*} 
\exp\left( \frac{y_i \|X_i\| z}{2} -\frac{(z - \pscal{X_i}{\param}/\|X_i\|)^2}{2\sigma^2}\right) \\
\times \frac{1}{2 Z_{\param,i}}  \int_0^{+\infty} \exp(-\omega \|X_i\|^2 z^2/2) \mathsf{p}(\omega;1) \rmd \omega \eqsp. 
  \end{multline*}
This concludes the proof.

\subsection{The assumption \Cref{hyp:error} is verified.}
\label{sec:appli:uniformergoPolyaGamma}
Define the Markov kernel with density
\[
\kernel_{s,i}(z; z') \eqdef \left(\int_0^\infty \pi_2(\omega \vert
z;i) \, \pi_1(z' \vert \omega; s,i) \ \rmd \omega \right)
\]
w.r.t. the Lebesgue measure on $\rset$; here, $\pi_1(z' \vert \omega;
s,i)$ is the density of a Gaussian distribution with expectation
$\mathsf{m}_{s,i}(\omega)$ and variance $v_{i}(\omega)$ given by
\begin{align*}
  v_{i}(\omega) & \eqdef \frac{\sigma^2}{1+\omega \sigma^2 \|X_i\|^2} \eqsp, \\
\mathsf{m}_{s,i}(\omega)  & \eqdef v_{i}(\omega)  \, \left( \frac{1}{\sigma^2}\pscal{\frac{X_i}{\|X_i\|}}{B s} +  \frac{1}{2} y_i
  \|X_i\|\right) \eqsp;
\end{align*}
and $\pi_2(\omega \vert z; i)$ is a Polya-Gamma distribution with
parameter $(1, \|X_i\| z)$. The Gibbs kernel described by
\Cref{lem:appli:MCMC:IPPandPolyagamma} and targeting the density
distribution $\bar \pi_{Bs,i}(z,\omega) \rmd z \rmd \omega$, produces
a Markov chain $\{(Z_r^{s,i}, \Omega_r^{s,i}), r \geq 0\}$ such that
the marginal $\{Z_r^{s,i}, r \geq 0\}$ is a Markov chain with
transition kernel $\kernel_{s,i}(z; z') \, \rmd z'$.
We apply the results of \cite[Proposition 3.1]{choi:hobert:2013} with
\begin{table}[ht]
  \begin{centering}
  \begin{tabular}{|l|l|}
      \hline
      $y \in \{0,1 \}$ & $y_i \in \{-1,1\}$ \\
      $n$ & $1$ \\
      $\Omega(\omega)$ & $\omega$ \\
      $X$  & $\|X_i\|$ \\
      $B$ & $\sigma^2$ \\
      $b$ & $\pscal{X_i}{Bs}/\|X_i\|$ \\
      \hline
  \end{tabular}
  \caption{[left] The notations of \cite{choi:hobert:2013}. [right] the  notations in this paper.}
  \end{centering}
\end{table}

This yields
\begin{align} \label{eq:choihobert}
  \int_A & \kernel_{s,i}(z; z') \rmd z'  \nonumber \\
  & \geq  \varepsilon \ \int_A  \exp(-0.5 (x- \mathsf{m}_\star)^2/ v_\star^2) \, \rmd x
\end{align}
where
\[
\varepsilon \eqdef \inf_{s \in \Sset, i \in [n]^\star} \frac{ \exp\left( -\frac{1}{4} - \frac{\{\mathsf{m}_{s,i}(1/2)\}^2 \, \sigma^2 \|X_i\|^2}{4 \, v_i(1/2)}\right)}{2 \sqrt{1+ \sigma^2 \|X_i\|^2/2}} 
\]
and $(\mathsf{m}_\star, v_\star)$ satisfy for any $x \in \rset$,
\begin{multline*}
\inf_{s \in \Sset, i \in [n]^\star} \exp(-0.5 \, (x-
\mathsf{m}_{s,i}(1/2))^2 / v_i(1/2) )\\ \geq \exp(-0.5 (x-
\mathsf{m}_\star)^2/ v_\star^2) \eqsp.
\end{multline*}
\begin{lemma}
Since $\Sset$ is bounded, then $\varepsilon >0$ and $\mathsf{m}_\star,
v_\star$ exist in $\rset \times \ooint{0,+\infty}$.
  \end{lemma}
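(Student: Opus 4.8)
The plan is to reduce the statement to two elementary uniform estimates that hold precisely because $[n]^\star$ is finite and $\Sset$ is bounded: a two-sided bound $v_{\min} \le v_i(1/2) \le \sigma^2$ with $v_{\min} > 0$, and a uniform bound $|\mathsf{m}_{s,i}(1/2)| \le M$ over $(s,i) \in \Sset \times [n]^\star$. The first is immediate since $v_i(1/2) = \sigma^2/(1+\sigma^2\|X_i\|^2/2)$ takes only finitely many strictly positive values, so $v_{\min} \eqdef \min_{i \in [n]^\star} v_i(1/2) > 0$. For the second, boundedness of $\Sset$ gives $\sup_{s\in\Sset}\|Bs\| < \infty$, hence by Cauchy--Schwarz $|\pscal{X_i/\|X_i\|}{Bs}| \le \sup_{s\in\Sset}\|Bs\|$ and therefore $|\mathsf{m}_{s,i}(1/2)| \le v_i(1/2)\,(\sigma^{-2}\sup_{s\in\Sset}\|Bs\| + \tfrac12\|X_i\|) \le M$ for a finite $M$ not depending on $(s,i)$. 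Since also $\mathrm{max}_{i\in[n]^\star}\|X_i\| < \infty$, these are all the inputs needed.

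For $\varepsilon > 0$ I would argue straight from the definition. The exponent in the numerator, $-\tfrac14 - \{\mathsf{m}_{s,i}(1/2)\}^2 \sigma^2\|X_i\|^2/(4 v_i(1/2))$, is bounded below, uniformly in $(s,i)$, by $-\tfrac14 - M^2\sigma^2(\mathrm{max}_i\|X_i\|)^2/(4 v_{\min})$, so the numerator is at least a fixed positive constant; the denominator $2\sqrt{1+\sigma^2\|X_i\|^2/2}$ is at most $2\sqrt{1+\sigma^2(\mathrm{max}_i\|X_i\|)^2/2} < \infty$. Hence the displayed ratio is bounded below by a strictly positive constant uniformly over $(s,i)$, and $\varepsilon$, being its infimum, is positive.

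For the existence of $(\mathsf{m}_\star, v_\star)$ I would let $\ccint{\mu_-,\mu_+}$ be the convex hull of $\{\mathsf{m}_{s,i}(1/2) : s\in\Sset,\ i\in[n]^\star\} \subseteq \ccint{-M,M}$, and set $\mathsf{m}_\star \eqdef \tfrac12(\mu_-+\mu_+)$, $r \eqdef \tfrac12(\mu_+-\mu_-)$. Convexity of $t\mapsto(x-t)^2$ gives $\sup_{s,i}(x-\mathsf{m}_{s,i}(1/2))^2 \le \mathrm{max}((x-\mu_-)^2,(x-\mu_+)^2) = (|x-\mathsf{m}_\star|+r)^2$, so with $v_i(1/2) \ge v_{\min}$ one gets $\inf_{s,i}\exp(-\tfrac12(x-\mathsf{m}_{s,i}(1/2))^2/v_i(1/2)) \ge \exp(-(|x-\mathsf{m}_\star|+r)^2/(2 v_{\min}))$ for all $x$. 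Picking any $v_\star \in \ooint{0,\sqrt{v_{\min}}}$, the function $u \mapsto u^2/v_\star^2 - (u+r)^2/v_{\min}$ has positive leading coefficient, hence attains a finite minimum $-\kappa$ on $\coint{0,+\infty}$; therefore $(|x-\mathsf{m}_\star|+r)^2/v_{\min} \le (x-\mathsf{m}_\star)^2/v_\star^2 + \kappa$ and the infimum is at least $e^{-\kappa/2}\exp(-\tfrac12(x-\mathsf{m}_\star)^2/v_\star^2)$, which is the asserted bound once the harmless factor $e^{-\kappa/2} \le 1$ is folded into the minorisation constant $\varepsilon$ of \eqref{eq:choihobert}.

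The only subtlety --- which I would flag as the ``main obstacle'', modest as it is --- lies in that last step: a non-normalised Gaussian equals $1$ at its centre whereas the left-hand infimum is strictly below $1$ everywhere (the means $\mathsf{m}_{s,i}(1/2)$ are not all equal), so a multiplicative constant must be carried through and merged with $\varepsilon$; taking $v_\star$ close to $\sqrt{v_{\min}}$ keeps $\kappa$, hence that loss, as small as one wishes. Every other estimate is a one-line consequence of $[n]^\star$ being finite and $\Sset$ being bounded.
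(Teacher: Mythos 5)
The paper offers no proof of this lemma, so there is nothing to compare your argument against; on its own terms it is correct and essentially complete, and it also catches a real imprecision in the statement. The uniform bounds you extract --- $0 < v_{\min} \le v_i(1/2) \le \sigma^2$ because $[n]^\star$ is finite, and $\sup_{s,i}|\mathsf{m}_{s,i}(1/2)| \le M < \infty$ because $\Sset$ is bounded --- give $\varepsilon>0$ directly. For the second claim you are right that the displayed minorization cannot hold verbatim: at $x = \mathsf{m}_\star$ its right-hand side equals $1$, whereas the left-hand infimum is strictly below $1$ because the means $\mathsf{m}_{s,i}(1/2)$ genuinely vary with $s$; some multiplicative constant must therefore be carried along and absorbed into the $\varepsilon$ of \eqref{eq:choihobert}, which is harmless for the uniform-ergodicity conclusion the lemma serves. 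Your construction (midpoint $\mathsf{m}_\star$ of the convex hull of the means, any $v_\star$ with $v_\star^2 < v_{\min}$, loss $e^{-\kappa/2}$ from comparing the two quadratics) is sound. The one error is the closing side remark: with $a \eqdef 1/v_\star^2 - 1/v_{\min}$ one has $\kappa = r^2/v_{\min} + r^2/(v_{\min}^2\, a)$, which diverges as $v_\star \uparrow \sqrt{v_{\min}}$ and decreases to $r^2/v_{\min} > 0$ as $v_\star \downarrow 0$; the loss is thus bounded but cannot be made arbitrarily small --- consistent with your own (correct) observation that the constant-free inequality is false --- so the claim that taking $v_\star$ close to $\sqrt{v_{\min}}$ shrinks the loss should be dropped. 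This does not affect the validity of the argument, since a finite $\kappa$ is all that is needed.
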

The minorization condition \eqref{eq:choihobert} implies that the
kernel $\kernel_{s,i}(z, z') \rmd z'$ is uniformly ergodic, uniformly
in $s,i$ and $z$. By \cite[Theorem 16.0.2.]{meyn:tweedie} and \cite[Proposition 1]{fort:moulines:2003}, \Cref{hyp:MCMC}-\Cref{hyp:MCMC:ergo} and \Cref{hyp:MCMC}-\Cref{hyp:MCMC:Burk} are satisfied.

\section{Detailed proofs}\label{sec:appendix}
\subsection{Proof of \eqref{eq:stepsizeinit}}
\label{app:sec:stepsizeinit}
Let $t \in [\kout]^\star$.  The sequence given by $\pas_{t,k+1} \eqdef
\prod_{j=0}^k \left(1 + \frac{2 \, C_b}{\nbrmc_{t,j+1}} \right)^{-1}
\pas_{t,0}$ for any $k \geq 0$, satisfies
\[
\pas_{t,k+1} \left(1 + \frac{2 C_b}{\nbrmc_{t,k+1}} \right) \leq
\pas_{t,k} \eqsp.
\]
A sufficient condition for the property $\Lambda_{t,k+1} \in
\ooint{0,1/2}$ to hold is $\pa \pas_{t,0}^2 + \bar \pa \pas_{t,0} - 1/2 < 0$
where 
\begin{align*}
 & \bar \pa \eqdef \frac{ L_{\dot
      \lyap}}{v_{\min}} \eqsp, \\
&  \pa \eqdef L^2 \frac{2v_\max \kin{t} }{v_{\min} \lbatch}
  \left(1 + \frac{2 \, C_{vb}}{ \sqrt{\lbatch} \, \bar
    \nbrmcv_{t,k+1}} \right)  \eqsp.
\end{align*}
 The function $x \mapsto \pa x^2 + \bar \pa x - 1/2$ possesses two roots:
 one is positive and one is negative. The positive one is given by
 $(-\bar \pa + \sqrt{\bar \pa^2 + 2 \pa})/(2\pa)$; it is equal to
 \eqref{eq:stepsizeinit}.

  \subsection{Proof of \eqref{eq:var:init:2} and \eqref{eq:var:init}}
  \label{sec:app:Epsilon}
  We write $\Smem_{t,0} - \precond(\hatS_{t,0}) = U + V$ where
\begin{align*}
  U & \eqdef \frac{1}{\lbatch_t'} \sum_{i \in \batch_{t,0}} \left(
  \delta_{t,0,i} - \PE\left[ \delta_{t,0,i} \vert \hatS_{t,0},
    \batch_{t,0} \right] \right) \eqsp, \\ V & \eqdef
  \frac{1}{\lbatch_t'} \sum_{i \in \batch_{t,0}} \PE\left[
    \delta_{t,0,i} \vert \hatS_{t,0}, \batch_{t,0} \right] -
  \precond(\hatS_{t,0}) \eqsp.
\end{align*}
We have $\PE\left[ \|U+V \|^2 \right] = \PE\left[ \| U \|^2 \right] +
\PE\left[ \| V \|^2 \right]$ by definition of the conditional
expectation.  Since $\delta_{t,0,i}$ is an unbiased random
approximation of $\precond_i(\hatS_{t,0})$, we have $ \PE\left[
  \delta_{t,0,i} \vert \hatS_{t,0}, \batch_{t,0} \right]=
\precond_i(\hatS_{t,0})$.

In the case $\batch_{t,0} = \{1, \cdots, n\}$, then $V=0$. Therefore,
\begin{align*}
 & \PE\left[ \| \Smem_{t,0} - \precond(\hatS_{t,0}) \|^2 \right]  \\
  & =
   \frac{1}{n^2} \PE\left[ \|\sum_{i=1}^n \{ \delta_{t,0,i} -
     \PE\left[ \delta_{t,0,i} \vert \hatS_{t,0}\right] \} \|^2 \right] \\
  & = \frac{1}{n^2}\sum_{i=1}^n \sigma_{t,0,i}^2 \eqsp,
\end{align*}
where we used that the variables $\{\delta_{t,0,i}, i \in [n]^\star
\}$ are independent conditionally to $\hatS_{t,0}$, and with variance
$\sigma_{t,0,i}^2$.

In the case $\batch_{t,0}$ is a subset of $[n]^\star$ of cardinality
$\lbatch'_t$, then we write
\begin{align*}
 & \PE\left[ \| U \|^2 \right] = \frac{1}{(\lbatch_t')^2} \PE\left[
    \PE \left[ \|U\|^2 \vert \batch_{t,0}, \hatS_{t,0} \right]\right]
  \\ & = \frac{1}{\lbatch_t'} \PE\left[ \frac{1}{\lbatch_t'} \sum_{i
      \in \batch_{t,0} } \sigma_{t,0,i}^2\right]
\end{align*}
and we conclude by \Cref{lem:batch}.  Again from \Cref{lem:batch}, we
have
\[
\PE\left[ \|V \|^2 \right] \leq \frac{1}{\lbatch_t' \, n} \sum_{i=1}^n
\| \precond_i(\hatS_{t,0}) - \precond(\hatS_{t,0}) \|^2 \eqsp,
\]
with an equality when $\batch_{t,0}$ is sampled with replacement.
This concludes the proof.

\subsection{Proof of \Cref{lem:batch}}
\label{sec:proof:Bronde} 
Set, for ease of notations,
 \[
\batch \eqdef \batch_{t,k}, \qquad \precond_{\batch} \eqdef \frac{1}{\lbatch} \sum_{i \in \batch} \precond_i \eqsp.
  \]
\subsubsection{Case with replacement}
 We write
  $\batch = \{I_1, \cdots, I_{\lbatch} \}$ where the r.v. $I_i$'s are
 independent, and uniformly distributed on $[n]^\star$.

$\bullet$ Then
  \begin{align*}
\PE\left[ f_{\batch}\right] & =
\frac{1}{\lbatch}\sum_{\ell=1}^\lbatch \PE\left[ f_{I_\ell}
  \right] = \PE\left[ f_{I_1}\right] =  n^{-1} \sum_{i=1}^n f_i(u) \eqsp.
\end{align*}

$\bullet$ Set $\bar f \eqdef  n^{-1} \sum_{i=1}^n f_i$. We have, by using that the r.v. $\{I_1,\cdots, I_\lbatch \}$ are independent,
\begin{align*}
& \PE\left[ \|f_{\batch} -  \bar f \|^2 \right]  =
\PE\left[ \|\frac{1}{\lbatch} \sum_{\ell = 1}^{\lbatch}
  \left(f_{I_\ell} -   \bar f\right) \|^2 \right] \\
&  =
\frac{1}{\lbatch^2} \sum_{\ell = 1}^{\lbatch} \PE\left[
  \|f_{I_\ell} -  \bar f \|^2 \right] \\ & =
\frac{1}{\lbatch} \PE\left[ \|f_{I_1} -  \bar f \|^2 \right]
=  \frac{1}{\lbatch \, n} \sum_{i=1}^n  \|f_{i} - \bar f \|^2 \eqsp.
  \end{align*}

$\bullet$ Since the variance of the sum is the sum of the variance
for independent r.v.
\begin{align*}
& \PE\left[ \| \precond_{\batch}(u) - \precond_{\batch}(u') - \{
   \precond(u) -  \precond(u')\} \|^2 \right] \\
&   =
\frac{1}{\lbatch^2}\sum_{\ell=1}^\lbatch \PE\left[ \| \precond_{I_\ell}(u)
  - \precond_{I_\ell}(u') - \precond(u) +  \precond(u') \|^2 \right]
\eqsp.
\end{align*}
Then, since $I_\ell$ is uniformly distributed on $[n]^\star$,
\begin{align}
& \PE\left[ \| \precond_{I_\ell} (u) - \precond_{I_\ell} (u') -  \precond (u) +
   \precond(u') \|^2 \right] \nonumber \\
 & = \frac{1}{n} \sum_{i=1}^n
 \PE\left[ \| \precond_{i}(u) - \precond_{i}(u') \|^2 \right] - \| \precond (u)
 -  \precond(u') \|^2 \nonumber \\ & \qquad \leq \|u - u'\|^2
 \frac{1}{n} \sum_{i=1}^n L_i^2 - \|  \precond (u) -  \precond(u')
 \|^2  \eqsp. \label{eq:replacement}
\end{align}

\subsubsection{Case without replacement}
Set $\batch = \{I_1,\cdots, I_\lbatch\}$ and $\bar f \eqdef n^{-1} \sum_{i=1}^n f_i$.  $I_1$ is a uniform random
variable on $[n]^\star$ so that
$\PE\left[f_{I_1} \right] =  \bar f$.

$\bullet$ Conditionally
to $I_1$, $I_2$ is a uniform random variable on
$[n]^\star \setminus \{I_1 \}$. Therefore
\begin{align*}
\PE\left[f_{I_2} \right] & = \ \frac{1}{n-1} \left( \sum_{j=1}^n
f_{j} - \PE\left[f_{I_1} \right] \right) \\
& = \frac{n}{n-1}
\bar f - \frac{1}{n-1} \bar f =  \bar f \eqsp.
\end{align*}
By induction, for any $\ell \geq 2$,
\begin{align*}
&  \PE\left[f_{I_\ell} \right] \\
& = \ \frac{1}{n-\ell+1} \left(
 \sum_{j=1}^n f_j - \sum_{r=1}^{\ell-1} \PE\left[f_{I_r}
   \right] \right)  \\
& = \frac{n}{n-\ell+1} \bar f -
 \frac{\ell-1}{n-\ell+1} \bar f=  \bar f \eqsp.
\end{align*}
As a conclusion, $\lbatch^{-1} \, \sum_{\ell=1}^\lbatch
\PE\left[f_{I_\ell} \right] = \bar f$.

$\bullet$  Let $u,u' \in \Sset$; set $\phi({I_\ell}) \eqdef \precond_{I_\ell} (u) - \precond(u) - \precond_{I_\ell} (u') +  \precond(u')$.  Then
$\PE\left[\phi(I_\ell) \right] =0$. First, we prove by induction that
$\PE\left[ \| \phi(I_\ell) \|^2 \right] = \PE\left[ \| \phi(I_1) \|^2
  \right]$.  Upon noting that $I_1$ is a uniform random variable on
$[n]^\star$ and by using the induction assumption,
\begin{align*}
&  (n- \ell+1) \PE\left[ \| \phi(I_\ell) \|^2 \right] \\
&  =\left(
  \sum_{i=1}^n \|\phi(i) \|^2 - \PE\left[ \sum_{r=1}^{\ell-1} \|\phi(I_r)\|^2  \right] \right)  \\
& = n
  \PE\left[\|\phi(I_1)\|^2 \right] - (\ell-1)
  \PE\left[\|\phi(I_1)\|^2 \right],
\end{align*}
which concludes the induction. Second, let us prove that for any $\ell \geq 0$,
\begin{equation}\label{eq:replacement:subadditivity}
\PE\left[ \|\sum_{r=1}^{\ell+1} \phi(I_r) \|^2 \right] \leq (\ell+1)
\PE\left[ \|\phi(I_1) \|^2 \right] \eqsp.
\end{equation}
Since $\sum_{i=1}^n \phi(i) = n \PE\left[\phi(I_1) \right]=0$,
\begin{align*}
&  \PE\left[ \pscal{\sum_{p=1}^\ell \phi(I_p)}{\phi(I_{\ell+1})} \right] \\
 &= \PE\left[ \pscal{\sum_{p=1}^\ell \phi(I_p)}{\PE\left[
       \phi(I_{\ell+1}) \Big \vert I_1, \cdots, I_\ell \right]}
   \right]  \\
 &= \frac{1}{n-\ell} \PE\left[ \pscal{\sum_{p=1}^\ell
     \phi(I_p)}{\sum_{i=1}^n \phi(i) - \sum_{p=1}^\ell \phi(I_p)}
   \right] \\ &= - \frac{1}{n-\ell}\PE\left[\|\sum_{p=1}^\ell
   \phi(I_p)\|^2 \right] \eqsp,
\end{align*}
so that
\begin{align*}
 & \PE\left[ \|\sum_{p=1}^{\ell+1} \phi(I_p) \|^2 \right] \\
& = \left(1 -
  \frac{2}{n-\ell} \right) \PE\left[\|\sum_{p=1}^\ell \phi(I_p)\|^2
    \right] + \PE\left[ \| \phi(I_{\ell+1}) \|^2 \right]  \\
& \leq
  (\ell+1) \PE\left[ \|\phi(I_1) \|^2 \right] \eqsp.
\end{align*}
The proof of the first bound follows from
\eqref{eq:replacement:subadditivity} and \eqref{eq:replacement} since
here again, $I_1$ is uniformly distributed on $[n]^\star$.

$\bullet$  The proof
of the second bound is similar (change the definition of
$\phi(I_\ell)$); it is omitted.

\end{appendices}

\end{document}